\documentclass{article}

 \usepackage[preprint]{neurips_2026}

\usepackage[utf8]{inputenc} 
\usepackage[T1]{fontenc}    
\usepackage{hyperref}       
\usepackage{url}            
\usepackage{booktabs}       
\usepackage{amsfonts}       
\usepackage{nicefrac}       
\usepackage{microtype}      
\usepackage{xcolor}         
\usepackage{colortbl}
\usepackage{tcolorbox}
\definecolor{bgcolor}{rgb}{0.8,1,1}
\definecolor{bgcolor2}{rgb}{0.8,1,0.8}
\definecolor{niceblue}{rgb}{0.0,0.19,0.56}

\usepackage[colorinlistoftodos,bordercolor=orange,backgroundcolor=orange!20,linecolor=orange,textsize=scriptsize]{todonotes}

\def\cX{\mathcal{X}}
\def\cY{\mathcal{Y}}

\def\cE{\mathcal{E}}
\def\P\text{I\kern-0.15em P}
\newcommand{\myparagraph}[1]{\noindent\textbf{#1.}}

\usepackage{amsthm}
\usepackage{mdframed}
\usepackage{thmtools}
\usepackage{times}

\usepackage{amsmath,amssymb,amsthm,mathtools}
\usepackage{tikz}
\usetikzlibrary{fadings,patterns,shadows.blur,shapes,shapes.geometric}
\usepackage{array,booktabs,multirow,tabularx}
\usepackage{yhmath,mathdots,extarrows}
\usepackage{siunitx,gensymb}
\usepackage{physics,cancel,color,xcolor,xspace}
\usepackage{mdframed,thmtools}
\usepackage{pifont}
\usepackage[normalem]{ulem} 
\usepackage{bbold}          
\usepackage{enumitem}

\usepackage{hyperref}
\usepackage{url}

\definecolor{darkgreen}{RGB}{110,192,19}
\definecolor{shadecolor}{gray}{0.9}

\newtheoremstyle{informal}
  {3pt}{3pt}{\itshape}{}{\bfseries}{.}{ }
  {Main Result (Informal Theorem)~\thmnumber{#2}}

\theoremstyle{informal}

\newcommand{\debug}[1]{#1}

\newcommand{\newmacro}[2]{\newcommand{#1}{\debug{#2}}}
\newcommand{\newop}[2]{\DeclareMathOperator{#1}{\debug{#2}}}

\newcommand{\R}{\mathbb{R}}

\renewcommand{\P}{\operatorname{\mathbb{P}}}

\newop{\ex}{\mathbb{E}}     
\newop{\simplex}{\hull}

\newmacro{\noise}{U}
\newmacro{\filter}{\mathcal{F}}
\newmacro{\sgda}{\text{SGDA}}
\newmacro{\seg}{\text{SEG}}

\newmacro{\point}{x}
\newmacro{\pointalt}{\alt\point}
\newmacro{\pointaltalt}{\altalt\point}
\newmacro{\points}{\mathcal{X}}
\newmacro{\intpoints}{\relint\points}
\newmacro{\base}{p}
\newmacro{\basealt}{q}
\newmacro{\basealtalt}{u}
\newmacro{\open}{\mathcal{U}}
\newmacro{\closed}{\mathcal{C}}
\newmacro{\cpt}{\mathcal{K}}
\newmacro{\nhd}{\mathcal{U}}
\newmacro{\gmat}{g}
\newmacro{\gdist}{\dist_{\gmat}}
\newmacro{\mfld}{M}
\newmacro{\form}{\omega}
\newmacro{\tvec}{z}
\newmacro{\uvec}{u}
\newmacro{\basin}{\mathbb{B}}
\newmacro{\ball}{\basin}
\newmacro{\sphere}{\mathbb{S}}

\newmacro{\tstart}{0}		
\newmacro{\timealt}{s}		
\newmacro{\horizon}{T}		

\newmacro{\traj}{x}		
\newmacro{\trajalt}{y}		
\newmacro{\trajaltalt}{z}		

\newmacro{\flowmap}{\Theta}		
\DeclarePairedDelimiterXPP{\flowof}[2]{\flowmap_{#1}}{(}{)}{}{#2}		

\newop{\Opt}{Opt}		
\newop{\Sol}{Sol}		
\newop{\gap}{Gap}	
\newop{\dualitygap}{Duality-Gap}		
\newop{\orcl}{Or}		

\newmacro{\tfun}{f}		
\newmacro{\obj}{f}		
\newmacro{\objalt}{g}		
\newmacro{\sobj}{F}		

\newmacro{\gvec}{g}		
\newmacro{\oper}{A}		
\newmacro{\vecfield}{v}		

\newcommand{\sol}[1][\point]{#1^{\ast}}		
\newmacro{\solvec}{\vecfield(\sol)}		
\newmacro{\solpay}{\eq[\payv]}		

\newmacro{\signal}{V}		
\newmacro{\step}{\gamma}		
\newmacro{\learn}{\eta}		

\newmacro{\vbound}{G}		
\newmacro{\lips}{\ell}		
\newmacro{\strong}{\mu}		
\newmacro{\smooth}{\beta}		

\newop{\tspace}{T}		
\newop{\tcone}{TC}		
\newop{\dcone}{\tcone^{\ast}}		
\newop{\ncone}{NC}		
\newop{\pcone}{PC}		
\newop{\hull}{\Delta}		

\newmacro{\cvx}{\mathcal{C}}		
\newmacro{\subd}{\partial}		
\renewcommand{\eqref}[1]{(\ref{#1})}

\newmacro{\minmax}{\mathcal{L}}		

\newmacro{\minvar}{{\point_{1}}}		
\newmacro{\minvaralt}{\alt\minvar}		
\newmacro{\minvars}{\points_{1}}		
\newmacro{\minsol}{\sol[\minvar]}		

\newmacro{\maxvar}{\point_{2}}		
\newmacro{\maxvaaltr}{\alt\maxvar}		
\newmacro{\maxvars}{\points_{2}}		
\newmacro{\maxsol}{\sol[\maxvar]}		

\newop{\Eucl}{\Pi}		
\newop{\logit}{\Lambda}		
\newop{\dkl}{KL}		

\newmacro{\hreg}{h}		
\newcommand{\onlyappendixintoc}{%
  \let\oldaddcontentsline\addcontentsline
  \renewcommand{\addcontentsline}[3]{}%
}
\newcommand{\restoreaddcontentsline}{%
  \let\addcontentsline\oldaddcontentsline
}
\definecolor{thmsteelbg}{RGB}{235,244,250}         
\definecolor{thmblueborder}{RGB}{45,105,145}      

\definecolor{lemmintborder}{RGB}{238,247,240}          
\definecolor{lemmint}{RGB}{185,220,198}        

\definecolor{propamber}{RGB}{255,244,235}        
\definecolor{propamberborder}{RGB}{188,105,56}     

\definecolor{assumlavborder}{RGB}{246,238,246}       
\definecolor{assumlav}{RGB}{130,82,130}    

\definecolor{exampletealbg}{RGB}{235,248,247}      
\definecolor{exampletealborder}{RGB}{45,140,135}   
\declaretheoremstyle[
  headfont=\normalfont\bfseries\color{thmblueborder},
  notefont=\normalfont\mdseries\color{thmblueborder},
  notebraces={(}{)},
  bodyfont=\normalfont,
  postheadspace=0.5em,
  spaceabove=6pt, spacebelow=4pt,
  mdframed={
    skipabove=6pt, skipbelow=6pt,
    linewidth=1.5pt,
    linecolor=thmblueborder,
    backgroundcolor=thmsteelbg,
    innerleftmargin=8pt, innerrightmargin=8pt,
    innertopmargin=5pt, innerbottommargin=5pt,
    roundcorner=4pt}
]{thmbox}

\declaretheoremstyle[
  headfont=\normalfont\bfseries\color{lemmintborder},
  notefont=\normalfont\mdseries\color{lemmintborder},
  notebraces={(}{)},
  bodyfont=\normalfont,
  postheadspace=0.5em,
  spaceabove=6pt, spacebelow=4pt,
  mdframed={
    skipabove=6pt, skipbelow=6pt,
    linewidth=1.5pt,
    linecolor=lemmintborder,
    backgroundcolor=lemmint,
    innerleftmargin=8pt, innerrightmargin=8pt,
    innertopmargin=5pt, innerbottommargin=5pt,
    roundcorner=4pt}
]{lembox}

\declaretheoremstyle[
  headfont=\normalfont\bfseries\color{propamberborder},
  notefont=\normalfont\mdseries\color{propamberborder},
  notebraces={(}{)},
  bodyfont=\normalfont,
  postheadspace=0.5em,
  spaceabove=6pt, spacebelow=4pt,
  mdframed={
    skipabove=6pt, skipbelow=6pt,
    linewidth=1.5pt,
    linecolor=propamberborder,
    backgroundcolor=propamber,
    innerleftmargin=8pt, innerrightmargin=8pt,
    innertopmargin=5pt, innerbottommargin=5pt,
    roundcorner=4pt}
]{propbox}

\declaretheoremstyle[
  headfont=\normalfont\bfseries\color{assumlavborder},
  notefont=\normalfont\mdseries\color{assumlavborder},
  notebraces={(}{)},
  bodyfont=\normalfont,
  postheadspace=0.5em,
  spaceabove=6pt, spacebelow=4pt,
  mdframed={
    skipabove=6pt, skipbelow=6pt,
    linewidth=1.5pt,
    linecolor=assumlavborder,
    backgroundcolor=assumlav,
    innerleftmargin=8pt, innerrightmargin=8pt,
    innertopmargin=5pt, innerbottommargin=5pt,
    roundcorner=4pt}
]{assumbox}

\declaretheorem[style=propbox,within=section]{definition}
\declaretheorem[style=thmbox,sibling=definition]{theorem}

\declaretheorem[style=lembox,sibling=definition]{lemma}

\title{
Certified Robustness from Approximate Gaussian Mixture Structures in Pretrained Latent Spaces
}

\author{Konstantinos Emmanouilidis \\
  CS \& MINDS\\
  Johns Hopkins University\\
  \And
   Tianjiao Ding  \\
  CIS \\
  University of Pennsylvania
  \And Nghia  Nguyen \\
  CIS \\
  University of Pennsylvania\\
  \And
   Nicolas Loizou \\
  AMS \& MINDS \\
  Johns Hopkins University\\
  \And
  René Vidal \\
  ESE, Radiology \& IDEAS \\
  University of Pennsylvania \\
}

\begin{document}
\onlyappendixintoc        

\maketitle

\begin{abstract}
Deep learning models are vulnerable to adversarial perturbations, raising important concerns for safety-critical deployment. Empirical defenses can achieve strong robustness in practice, but lack formal guarantees, motivating the need for certifiably robust classifiers. While certified methods provide formal guarantees, they often yield overly conservative bounds due to their inability to exploit structure in complex data distributions. In this work, we propose a framework for designing certifiably robust classifiers that leverages latent structure in data representations. We first analyze the Gaussian mixture setting, deriving necessary and sufficient conditions for the existence of robust classifiers and constructing a classifier with a closed-form robustness certificate and generalization guarantees. Our main contribution is to show that exact structure is not required: we prove that if a pretrained encoder maps inputs to a latent distribution that is $\varepsilon$-close (in KL divergence) to a Gaussian mixture, then certified accuracy degrades gracefully, with an explicit bound relating robustness under the true and approximate distributions. This result enables the direct use of pretrained models without requiring exact distributional assumptions. Empirically, our method achieves state-of-the-art or competitive certified accuracy on CIFAR-10 and ImageNet, while maintaining strong clean performance and low computational overhead. Overall, our work establishes approximate latent structure as a practical and principled route to certifiable robustness.
\end{abstract}

\section{Introduction}

Deep learning models are vulnerable to adversarial perturbations, raising fundamental concerns for safety-critical deployment~\citep{szegedy2013intriguing}. A large body of work has sought to address this challenge through two complementary approaches. On the one hand, empirical defenses such as adversarial training, preprocessing, and denoising, can achieve strong robustness in practice~\citep{madry2017towards, shafahi2019adversarial, wong2020fast}, but offer no formal guarantees and are often circumvented by adaptive attacks~\citep{athalye2018obfuscated, carlini2019evaluating}. On the other hand, certified defenses provide provable guarantees of robustness, via randomized smoothing, convex relaxations, or interval bound propagation~\citep{cohen2019certified, wong2018provable, gowal2018effectiveness}. However, these methods typically yield conservative bounds that fail to capture the structure of real-world data distributions.

This gap between empirical robustness and certified guarantees is not merely algorithmic, but fundamentally statistical. Recent impossibility results show that, without assumptions on the data distribution, certifying robustness is inherently difficult~\citep{dohmatob2019generalized, shafahi2018adversarial}. This has motivated a shift toward \emph{structure-aware} approaches, where robustness is tied to properties of the underlying distribution. In particular, \citet{paladversarial, palcertified} establish necessary and sufficient conditions for the existence of robust classifiers based on distributional localization. While conceptually powerful, these results remain difficult to instantiate in practice: the localization sets are implicit, the required parameters are hard to estimate, and the resulting classifiers are not readily scalable to modern datasets. These challenges, motivate the following central question:
\begin{tcolorbox}[colback=darkgreen]
\begin{center}
    \textit{How can we leverage the structure of the data distribution to design classifiers \\that are certifiably robust and efficient in practice?}
\end{center}
\end{tcolorbox}

\textbf{Key idea.} In this work, we show that certifiable robustness can be achieved by exploiting \emph{approximate structure} in pretrained latent data distribitions. Specifically, we demonstrate that it is sufficient for the pretrained latent data distribution to be \emph{approximately} a Gaussian mixture. This perspective yields a practical and theoretically grounded route to certification: rather than requiring exact distributional assumptions, we can leverage pretrained models whose latent spaces exhibit approximate structure.

\textbf{Our approach.} We develop this idea in two stages. First, we analyze the Gaussian mixture setting, deriving necessary and sufficient conditions under which robust classifiers exist, and constructing a classifier with a closed-form certificate of robustness. Crucially, these conditions are expressed in terms of explicit geometric quantities, such as means, covariances, and separations, making them practically verifiable. We then extend this framework to arbitrary data distributions by composing it with a locally Lipschitz pretrained encoder. Our central theoretical result shows that exact Gaussianity is not required: if the latent distribution is $\varepsilon$-close (in KL divergence) to a Gaussian mixture, then the certified accuracy degrades gracefully, with an explicit bound relating robustness under the true and approximate distributions. This result enables the direct use of pretrained models, such as CLIP-like encoders, without requiring exact control over the data distribution.

\textbf{Contributions.} Our contributions are summarized as follows:
\begin{enumerate}[leftmargin=*]
    \item \textit{Verifiable conditions for robustness in Gaussian mixtures.} We derive necessary and sufficient conditions for the existence of robust classifiers in Gaussian mixture models, expressed in closed form in terms of the geometry of the distribution (e.g., Gaussian mixture means and covariances).
    
    \item \textit{A certifiably robust classifier with closed-form guarantees.} 
    We construct a classifier that leverages the geometry of the underlying data distribution and admits an explicit certificate of robustness, along with corresponding generalization bounds.
    
    \item \textit{Extension to real-world data via approximate latent structure.} We prove that if a locally Lipchitz pretrained encoder maps inputs to a latent distribution that is $\varepsilon$-close to a Gaussian mixture, then certified accuracy degrades gracefully, providing a practical pathway to certification.
    
    \item \textit{Empirical validation.} We demonstrate competitive or superior certified accuracy on CIFAR-10 and ImageNet, while maintaining strong clean performance.
\end{enumerate}



\begin{figure}[t]
    \centering
    \includegraphics[width=.6\linewidth]{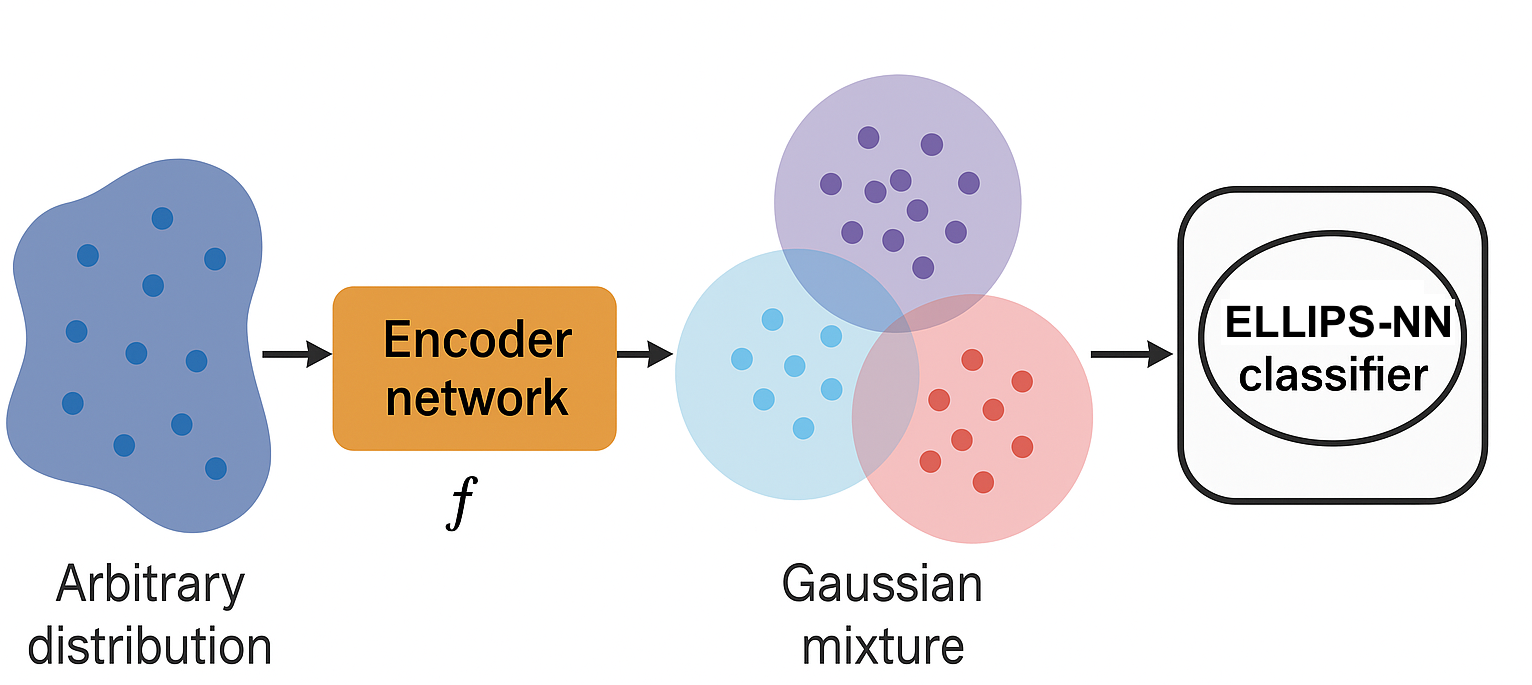}
    \caption{\small\textbf{Pipeline of the proposed certifiably robust classifier.} 
    A pretrained encoder maps inputs to a latent representation whose distribution is well approximated by a Gaussian mixture. We first derive verifiable conditions for robustness in the Gaussian mixture setting and use them to construct the \textsc{ELLIPS} classifier with a closed-form certificate. The resulting pipeline is certifiably robust by combining the local Lipschitzness of the encoder, the certified robustness of \textsc{ELLIPS}, and a graceful degradation guarantee when the latent distribution is only approximately Gaussian-mixture structured.}
    \label{fig: pipeline}
\end{figure}

\section{Preliminaries and Background}

In this section we will introduce the setting as well as the required background that will be necessary for presenting our results in the rest of the paper. 

\subsection{Setup \& Preliminaries}
Consider the canonical setting of a classification problem over the input space $\cX \subset \R^d$ and label space $\mathcal{Y} = \{1, ..., K\}$. Let $\mathcal{D}$ be the data distribution over $\cX \times \cY$ with joint probability density function $p(x, y)$ and marginal $p_i(x) = p(x|y=i)$\footnote{For notational convenience, we refer in the rest of the paper to $\mathcal{D}$ and $p(x, y)$ interchangeably.} for class $i \in \cY$. Let $[n]$ denote the set $\{1, \dots, n\}$ and $\mathbb{B}_2(x, r)$ the $\ell_2-$ball centered at $x$ with radius $r$. Given a classifier $f: \cX \rightarrow \cY$, the \textit{robust risk} of $f$ under any perturbation bounded in $\ell_2$-norm by $\epsilon$ is defined as  
\begin{equation}
    R_f(\epsilon) = \P\limits_{(x, y)\sim p}\left(\exists x'\in\mathbb{B}_{2}(x, \epsilon): f(x') \neq y\right).
\end{equation}

Equipped with the notion of robust risk, we can now provide the definition of a robust classifier. 
\begin{definition} [($\epsilon, \delta$)-robust classifier, \citep{palcertified}]
    A classifier $f$ is said to be $(\epsilon, \delta)$-robust if $R_f(\epsilon) \leq \delta$, i.e., the robust risk against perturbations with $\ell_2$-norm $\epsilon$ is at most $\delta$. 
\end{definition}

Based on the above definition, \citet{palcertified} examined the conditions that the data distribution should satisfy in order for an $(\epsilon, \delta)$-robust classifier to exist. The conditions depend on two notions of localization and strong localization of the data distribution, which we state next.

\begin{definition}[Localized Distribution, \citep{palcertified}]
\label{l0concdefn}
A probability distribution $p$ over a domain $\cX \subseteq \R^n$ is said to be $(C, \epsilon, \delta)$-localized if there exists a subset $S \subseteq \cX$ such that $p(S) \geq 1 - \delta$ and ${\rm Vol}(S) \leq C \exp(-\epsilon)$. Here, ${\rm Vol}$ denotes the standard Lebesgue measure on $\R^n$, and $p(S)$ denotes the measure of $S$ under $p$.
\end{definition}

\begin{definition}[Strongly Localized Distribution, \citep{palcertified}] \label{def:strongconc}
A probability distribution $p$ over a domain $\cX \subseteq \R^n$ is said to be $(C, \epsilon, \delta, \gamma)$-strongly localized with respect to a distance $d$, if each class conditional distribution $p_i$ is $\left(C, \epsilon, \delta\right)$-localized on the set $S_i \subseteq \cX$ and $p_i\left(\bigcup_{i' \neq i} S_{i'}^{+2\epsilon}\right) \leq \gamma$, where $S^{+\epsilon} = \{ x \in \mathcal{X}  \colon \exists \bar x \in S ~ \text{ such that } d(x, \bar x) \leq \epsilon\}$ is the $\epsilon$-expansion of the set $S$ in $d$.
\end{definition} 

Note we stated the notion of strongly localized distribution with a slight modification in the notation relative to \cite{palcertified} in order to make explicit the dependence of the defined notion in the constant $C$. With these concepts established, we, next, provide a summary of the results of \cite{palcertified} as well as other related works before stating our main results in the next section. 

\subsection{Prior Art \& Closely Related Works}
In a series of works, \citet{palcertified, paladversarial} investigate the necessary and sufficient conditions that enable the existence of a robust classifier under different $\ell_p$-norm bounded attacks. For the case of $\ell_2$-adversarial perturbations, they provide the following theorem, establishing the existence of a robust classifier for general data distributions.

\begin{theorem}[\citet{palcertified}]
    A necessary condition for the existence of an $(\epsilon, \delta)$-robust classifier for a data distribution $\mathcal{D}$ with balanced classes is that each class-conditional $\mathcal{D}_i$ be $(C, \epsilon, \delta)$-localized. Conversely, if the data distribution $\mathcal{D}$ is $(C, \epsilon, \delta, \gamma)$-strongly localized, then an $(\epsilon, \delta+\gamma)$-robust classifier exists. 
\end{theorem}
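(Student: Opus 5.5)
The plan has two independent parts: the necessary direction uses a volume/pigeonhole argument on expanded decision regions, and the sufficient direction builds an explicit classifier from the localization sets $S_i$.

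\textbf{Necessity.} Let $f$ be $(\epsilon,\delta)$-robust. Since the classes are balanced, $R_f(\epsilon)=\frac1K\sum_i p_i(E_i)$, where $E_i=\{x:\exists x'\in\mathbb{B}_2(x,\epsilon),\ f(x')\neq i\}$. The complement of $E_i$ is $A_i=\{x:\mathbb{B}_2(x,\epsilon)\subseteq f^{-1}(i)\}$, the $\epsilon$-interior (erosion) of the decision region of class $i$.
\begin{itemize}
\item The regions $f^{-1}(i)$ are disjoint, and $A_i^{+\epsilon}\subseteq f^{-1}(i)$. Hence the expansions $A_i^{+\epsilon}$ are pairwise disjoint.
\item By Brunn--Minkowski, $\mathrm{Vol}(A_i^{+\epsilon})^{1/n}\ge \mathrm{Vol}(A_i)^{1/n}+\epsilon\,\mathrm{Vol}(\mathbb{B}_2(0,1))^{1/n}$.
\item Combining disjointness of the $A_i^{+\epsilon}$ with $\mathcal{X}$ having bounded volume (or restricting to a bounded region carrying most of the mass) bounds each $\mathrm{Vol}(A_i)$ by something that shrinks with $\epsilon$. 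The constant $C$ absorbs the normalization, so this has the form $C e^{-\epsilon}$ in the paper's parametrization.
\item The mass condition comes from the risk: $p_i(A_i)\ge 1-\delta$ in the averaged sense. If a per-class guarantee is needed, I would take $\delta$ per class, or apply Markov's inequality at the cost of a factor $K$ absorbed into the constants.
\end{itemize}
So $S_i=A_i$ witnesses $(C,\epsilon,\delta)$-localization.

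\textbf{Sufficiency.} Given strongly localized sets $S_i$, define $f(x)=i$ if $x\in S_i^{+\epsilon}$ and $x\notin\bigcup_{i'\neq i}S_{i'}^{+\epsilon}$, and assign an arbitrary label otherwise. Take $x\sim p_i$ with $x\in S_i$ and $x\notin\bigcup_{i'\neq i}S_{i'}^{+2\epsilon}$. For any $x'\in\mathbb{B}_2(x,\epsilon)$:
\begin{itemize}
\item $x'\in S_i^{+\epsilon}$, since $x\in S_i$ and $\|x'-x\|\le\epsilon$;
\item $x'\notin S_{i'}^{+\epsilon}$ for every $i'\neq i$, since otherwise the triangle inequality would put $x$ in $S_{i'}^{+2\epsilon}$.
\end{itemize}
Hence $f(x')=i$. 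A union bound then gives
\[
p_i(\text{error})\le p_i(S_i^c)+p_i\Bigl(\bigcup_{i'\ne i}S_{i'}^{+2\epsilon}\Bigr)\le\delta+\gamma .
\]
Averaging over the classes yields $R_f(\epsilon)\le\delta+\gamma$, i.e.\ $f$ is $(\epsilon,\delta+\gamma)$-robust.

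\textbf{Main obstacle.} The hard step is the volume bound in the necessity direction. Turning disjointness of the $\epsilon$-expansions into the specific form $\mathrm{Vol}(S)\le Ce^{-\epsilon}$ requires an isoperimetric or Brunn--Minkowski estimate on a bounded domain, together with careful tracking of how $C$ depends on $n$ and $K$. I would first try the Brunn--Minkowski route and let $C$ absorb the dimension-dependent constants.
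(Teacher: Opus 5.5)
The paper gives no proof of this statement; it is quoted from \citet{palcertified}, so the comparison has to be with the standard argument there.

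Your sufficiency direction is correct and is essentially that argument. The rule that outputs $i$ on $S_i^{+\epsilon}\setminus\bigcup_{i'\neq i}S_{i'}^{+\epsilon}$ is a tie-free version of the nearest-localization-set classifier the paper describes. It is robustly correct on $S_i\setminus\bigcup_{i'\neq i}S_{i'}^{+2\epsilon}$, and the union bound gives $\delta+\gamma$ for arbitrary class priors.

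\textbf{The gap: the mass condition in necessity.} With balanced classes, $\frac1K\sum_i p_i(E_i)\le\delta$ only gives $p_i(E_i)\le K\delta$ for each $i$. This factor cannot be \emph{absorbed into the constants}: it multiplies the probability parameter, not the volume bound, and in general no set of controlled volume recovers the missing mass.

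For example, take $K=2$ and let $p_2$ be uniform on a small ball. Let $p_1$ put mass $1-2\delta$ on a distant small ball and mass $2\delta$ uniformly on $\mathcal{X}$. A classifier that assigns each ball's $\epsilon$-neighbourhood to its class (the balls being more than $2\epsilon$ apart) has $R_f(\epsilon)\le\frac12\cdot 2\delta=\delta$. Yet any $S$ with $p_1(S)\ge 1-\delta$ must satisfy $\mathrm{Vol}(S)\ge\frac12\mathrm{Vol}(\mathcal{X})$. So $p_1$ is not $(C,\epsilon,\delta)$-localized for any $C<\frac{e^{\epsilon}}{2}\mathrm{Vol}(\mathcal{X})$, although it is $(C,\epsilon,2\delta)$-localized with tiny $C$.

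Relatedly, the necessity claim only has content once $C$ is fixed, e.g.\ $C=\max_i\mathrm{Vol}(f^{-1}(i))$. Otherwise $S=\mathcal{X}$ with $C=e^{\epsilon}\mathrm{Vol}(\mathcal{X})$ works trivially.

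What your argument actually proves is $(C,\epsilon,K\delta)$-localization of every class, which is, as far as I recall, how Pal et al.\ state the balanced case. The restatement in this paper drops the $K$. Switching to a per-class risk hypothesis changes the theorem rather than proving it.

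\textbf{The volume step is routine.} The step you flag as the main obstacle is not hard. Suppose $A_i+\epsilon\mathbb{B}_2(0,1)\subseteq f^{-1}(i)$ with $V_i=\mathrm{Vol}(f^{-1}(i))<\infty$, and let $\omega_n=\mathrm{Vol}(\mathbb{B}_2(0,1))$. Brunn--Minkowski together with $(1-t)^n\le e^{-nt}$ gives
\[
\mathrm{Vol}(A_i)\le V_i\bigl(1-\epsilon(\omega_n/V_i)^{1/n}\bigr)^n\le V_i\exp\bigl(-n(\omega_n/V_i)^{1/n}\epsilon\bigr).
\]
Since $n(\omega_n/V_i)^{1/n}\ge 1$ whenever $V_i\le n^n\omega_n$ (e.g.\ $\mathcal{X}\subseteq[0,1]^n$), the choice $C=V_i$ works. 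Disjointness of the expansions is not needed.

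\textbf{What does need care: the perturbation convention.} The full Minkowski sum lies in a finite-volume decision region only if $f$ must be correct on whole balls inside a bounded domain. If perturbations are instead clipped to $\mathcal{X}$, you only get $(A_i+\epsilon\mathbb{B}_2(0,1))\cap\mathcal{X}\subseteq f^{-1}(i)$, and you need an isoperimetric inequality relative to $\mathcal{X}$. Whichever convention you choose must also be used in your sufficiency argument. There, points of $S_i$ near $\partial\mathcal{X}$ must still count as certified, which fails under the whole-ball-inside-$\mathcal{X}$ convention.
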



The aforementioned theoretical result indicates that if one can determine whether the underlying distribution is strongly localized, and compute the sets where the distribution localizes over, then the existence of a robust classifier is guaranteed. Intuitively, the proposed $(\epsilon, \delta+\gamma)$-robust classifier in \citet{palcertified} is constructed as a nearest neighbour classifier, assigning each data point to the class corresponding to the nearest localization set. 
Additional related works on certified robustness approaches are provided in Appendix~\ref{app: related_work}. 

The above results shed light on the role of the data distribution in the existence of a robust classifier for a specific classification task. However, they do not specify how one can verify in practice whether a real-world data distribution is strongly localized, find over which sets it localizes and compute its localization parameters in order to construct the proposed robust classifier. This highlights the need for practical conditions that can be utilized to answer the above questions, motivating our theoretical investigation in the next section. 

\section{Sufficient Conditions for the Existence of a Robust Classifier for GMMs}
\label{sec: necessary_and_sufficient_cond}

We first consider a setting where the data distribution $\mathcal{D}$ is a Gaussian Mixture Model (GMM) with $K$ components corresponding to $K$ classes: $\mathcal{D}_i = \mathcal{N}\left(\mu_i, \Sigma_i\right)$, $\forall i \in [K]$. We examine the necessary conditions under which each class conditional in the given mixture of Gaussians is $(C, \epsilon, \delta)-$localized with respect to the $\ell_2$ distance. Ensuring that each Gaussian marginal $\mathcal{D}_i$ is localized will satisfy the requirement according to \cite{paladversarial} in showing the existence of a robust classifier. 


\begin{theorem}\label{thm: necessary-condition}
     Assume that the data distribution $\mathcal{D}$ is a $d$-dimensional GMM and $\mathcal{D}_i = \mathcal{N}\left(\mu_i, \Sigma_i\right)$ corresponds to the class conditional of the $i-$th class. Let $S_i, \forall i \in [K],$ be the ellipsoid set 
    \begin{eqnarray}
        S_i = \{\left(x-\mu_i\right)^T \Sigma^{-1}_i \left(x-\mu_i\right) \leq r_i^2\}. 
    \end{eqnarray}
    Then, each Gaussian marginal $\mathcal{D}_i$ is $(C, \epsilon, \delta)$-localized on the corresponding set $S_i, \forall i \in [K],$ if and only if the parameters $\epsilon, \delta$ satisfy 
    \begin{eqnarray}
        \delta \leq 1 - F_{\chi^2_d}(r_i^2) ,\quad \epsilon \leq \ln{\left(\frac{\Gamma\left(\frac{d}{2}+1\right)C}{\pi^{d/2}r_i^d\sqrt{\text{det}(\Sigma_i)}}\right)},
    \end{eqnarray}
    where $F_{\chi^2_d}(\cdot)$ is the CDF of the $\chi^2_d-$distribution and $\Gamma(\cdot)$ is the Gamma function.
\end{theorem}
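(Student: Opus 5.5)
The plan is to unpack Definition~\ref{l0concdefn} on the specific set $S_i$. Localization of $\mathcal{D}_i$ on $S_i$ means exactly two inequalities: $\mathcal{D}_i(S_i)\ge 1-\delta$ and $\mathrm{Vol}(S_i)\le C\exp(-\epsilon)$. Because the set $S_i$ is fixed, each inequality can be rewritten equivalently. The first becomes a condition on $\delta$ alone and the second a condition on $\epsilon$ alone. Both directions of the ``if and only if'' then follow at once, since every rewriting step is an equivalence.

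\textbf{Step 1 (mass).} Let $X\sim\mathcal{N}(\mu_i,\Sigma_i)$ with $\Sigma_i\succ 0$, and set $Z=\Sigma_i^{-1/2}(X-\mu_i)\sim\mathcal{N}(0,I_d)$. Then
\[
(X-\mu_i)^T\Sigma_i^{-1}(X-\mu_i)=\|Z\|_2^2\sim\chi^2_d ,
\]
so $\mathcal{D}_i(S_i)=F_{\chi^2_d}(r_i^2)$. The condition $\mathcal{D}_i(S_i)\ge 1-\delta$ is therefore equivalent to $\delta\ge 1-F_{\chi^2_d}(r_i^2)$.

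Note that this is the reverse of the inequality printed in the statement. The localization definition needs $p(S)\ge 1-\delta$, and this forces $\delta$ to be at least the escape probability. I would write the proof with the correct direction $\delta \ge 1-F_{\chi^2_d}(r_i^2)$. I would also flag the displayed ``$\le$'' as presumably a typo; alternatively, it could be read as the smallest $\delta$ being equal to $1-F_{\chi^2_d}(r_i^2)$.

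\textbf{Step 2 (volume).} The set $S_i$ is the image of the Euclidean ball $\mathbb{B}_2(0,r_i)$ under the affine map $z\mapsto \mu_i+\Sigma_i^{1/2}z$. The change-of-variables formula therefore gives
\[
\mathrm{Vol}(S_i)=\sqrt{\det\Sigma_i}\,\frac{\pi^{d/2}}{\Gamma(d/2+1)}\,r_i^d .
\]
The condition $\mathrm{Vol}(S_i)\le C e^{-\epsilon}$ is equivalent, after taking logarithms (all quantities are positive), to
\[
\epsilon\le \ln\!\Big(\frac{\Gamma(d/2+1)C}{\pi^{d/2}r_i^d\sqrt{\det\Sigma_i}}\Big).
\]
This matches the stated condition exactly.

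\textbf{Step 3 (combine).} Localization on $S_i$ is the conjunction of the two conditions, and each was rewritten equivalently. This gives both the necessity and the sufficiency directions, for each $i\in[K]$ separately.

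\textbf{Main obstacle.} There is no real technical difficulty. The only delicate points are the following:
\begin{itemize}
\item the direction of the $\delta$ inequality, discussed in Step 1;
\item the need for $\Sigma_i$ to be nondegenerate, so that $\Sigma_i^{-1/2}$ exists and the volume is positive and finite;
\item recalling the unit-ball volume $\pi^{d/2}/\Gamma(d/2+1)$.
\end{itemize}
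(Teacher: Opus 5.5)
Your proof follows the paper's route: standardize with $\Sigma_i^{-1/2}(x-\mu_i)$ to get $\mathcal{D}_i(S_i)=F_{\chi^2_d}(r_i^2)$, then take the logarithm of $\mathrm{Vol}(S_i)=\pi^{d/2}r_i^d\sqrt{\det\Sigma_i}/\Gamma(d/2+1)$ to get the $\epsilon$ condition. You are also right about the direction of the $\delta$ condition. The paper's proof correctly starts from $\mathbb{P}(x\in S_i)\ge 1-\delta$, but after the change of variables it flips this to $\le 1-\delta$, which is where the printed $\delta\le 1-F_{\chi^2_d}(r_i^2)$ comes from; the correct equivalence is $\delta\ge 1-F_{\chi^2_d}(r_i^2)$, as you wrote.
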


Theorem \ref{thm: necessary-condition} provides the necessary conditions for the class conditionals $\mathcal{D}_i$, $\forall i\in[K]$, to be $\left(C, \epsilon, \delta\right)$-localized. More specifically, the established conditions are provided for each localization parameter that if satisfied ensure that the Gaussian marginals localize over the aforementioned sets. Importantly, the localization sets resemble the intuition that most points in a Gaussian marginal concentrate around the mean and the shape of the localization set is dictated by the shape of the corresponding covariance matrix $\Sigma_i, \forall i \in [K]$.

The following theorem provides sufficient conditions under which the data distribution $\mathcal{D}$ is $\left(C, \epsilon, \delta, \gamma\right)$-strongly localized, which consists of a stronger notion of localization.
\begin{theorem}\label{thm: sufficient-condition-strongly-localized}
    The data distribution $\mathcal{D}$ is $\left(C, \epsilon, \delta, \gamma\right)$-strongly localized with respect to the $\ell_2$ distance, if each class conditional $\mathcal{D}_i = \mathcal{N}\left(\mu_i, \Sigma_i\right)$ is $\left(C, \epsilon, \delta\right)$-localized on an ellipsoid set $S_i = \left\{(x-\mu_i\right)^T \Sigma^{-1}_i (x-\mu_i) \leq r_i^2\}$ with parameters 
    $$\delta \leq 1 - F_{\chi^2_d(0)}(r_i^2) ,\quad \epsilon \leq \ln{\left(\frac{\Gamma\left(\frac{d}{2}+1\right)C}{\pi^{d/2}r_i^d\sqrt{\text{det}(\Sigma_i)}}\right)}, \nonumber \\$$
    $$
         \sum\limits_{j\neq i} F_{\chi^2_d(w_{ij})}(R_j^2)\leq\gamma, \nonumber $$
      where $F_{\chi^2_d(w)}$ is the CDF of the $\chi^2_d(w)$ distribution with $d$ degrees of freedom and centrality parameter $w$,  $\Gamma(\cdot)$ is the Gamma function, $\lambda_{\\min}(\Sigma)$ is the smallest eigenvalue of $\Sigma$, $w_{ij} = \|\Sigma_j^{-1/2} (\mu_i - \mu_j)\|^2_2$ and
        $R_j = \frac{2\epsilon}{\sqrt{\lambda_{\min}(\Sigma_j)}} + r_j$.
  
\end{theorem}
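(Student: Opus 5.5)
Strong localization requires two things: each class conditional must be localized, and it must put little mass on the $2\epsilon$-expansions of the other classes' localization sets. The plan is to check these two requirements in turn.

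\textbf{Localization.} By Theorem~\ref{thm: necessary-condition}, the first two parameter conditions ($\delta \leq 1-F_{\chi^2_d(0)}(r_i^2)$ and the bound on $\epsilon$) are exactly the conditions under which each $\mathcal{D}_i$ is $(C,\epsilon,\delta)$-localized on the ellipsoid $S_i$. Here $\chi^2_d(0)$ is the central $\chi^2_d$, so this part needs no new work.

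\textbf{Controlling the expansions.} We must show $p_i\bigl(\bigcup_{j\neq i} S_j^{+2\epsilon}\bigr)\le\gamma$. By a union bound it suffices to bound each term $p_i(S_j^{+2\epsilon})$ by $F_{\chi^2_d(w_{ij})}(R_j^2)$. The key geometric step is to enclose the Euclidean expansion of an ellipsoid in a larger ellipsoid of the same shape. Take $x\in S_j^{+2\epsilon}$, and write $x=\bar x+v$ with $\bar x\in S_j$ and $\|v\|_2\le 2\epsilon$. By the triangle inequality in the norm $\|\Sigma_j^{-1/2}\cdot\|_2$,
\[
\|\Sigma_j^{-1/2}(x-\mu_j)\|_2 \le \|\Sigma_j^{-1/2}(\bar x-\mu_j)\|_2 + \|\Sigma_j^{-1/2}v\|_2 \le r_j + \frac{2\epsilon}{\sqrt{\lambda_{\min}(\Sigma_j)}} = R_j .
\]
This uses $\|\Sigma_j^{-1/2}\|_{op}=\lambda_{\min}(\Sigma_j)^{-1/2}$. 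Hence $S_j^{+2\epsilon}\subseteq\{x:(x-\mu_j)^T\Sigma_j^{-1}(x-\mu_j)\le R_j^2\}$.

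\textbf{Identifying the distribution (main obstacle).} It remains to compute $\P_{x\sim\mathcal{N}(\mu_i,\Sigma_i)}\bigl(\|\Sigma_j^{-1/2}(x-\mu_j)\|^2\le R_j^2\bigr)$. Set $z=\Sigma_j^{-1/2}(x-\mu_j)$. Then
\[
z\sim\mathcal{N}\bigl(\Sigma_j^{-1/2}(\mu_i-\mu_j),\;\Sigma_j^{-1/2}\Sigma_i\Sigma_j^{-1/2}\bigr).
\]
The quantity $\|z\|^2$ is a noncentral $\chi^2_d(w_{ij})$ variable only when the transformed covariance is the identity, i.e.\ when $\Sigma_i=\Sigma_j$. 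In general it is a weighted sum of noncentral $\chi^2$ variables. My plan is to state and use the shared- or whitened-covariance reading implicit in the theorem, under which $\|z\|^2\sim\chi^2_d(w_{ij})$ with $w_{ij}=\|\Sigma_j^{-1/2}(\mu_i-\mu_j)\|^2$ and the probability equals $F_{\chi^2_d(w_{ij})}(R_j^2)$ exactly. If full generality is required, I would instead bound $\Sigma_j^{-1/2}\Sigma_i\Sigma_j^{-1/2}\preceq I$ when this holds, via a comparison or Anderson-type argument, or else state this as an explicit hypothesis.

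\textbf{Conclusion.} Summing the per-$j$ bounds over $j\neq i$ and invoking the third condition gives $p_i\bigl(\bigcup_{j\ne i}S_j^{+2\epsilon}\bigr)\le\gamma$. Together with the localization step, this completes the proof via Definition~\ref{def:strongconc}.
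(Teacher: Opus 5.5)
Your decomposition is the paper's: localization from the hypotheses, a union bound over $j\neq i$, the Mahalanobis triangle inequality giving $S_j^{+2\epsilon}\subseteq\{x:\|\Sigma_j^{-1/2}(x-\mu_j)\|_2\le R_j\}$, and then a $\chi^2$ evaluation. Everything through the containment is sound. You have also located the weak point correctly. The paper's proof simply asserts $\|y\|_2^2\sim\chi^2_d(w_{ij})$ after noting that $y=\Sigma_j^{-1/2}(x-\mu_j)$ has mean $\Sigma_j^{-1/2}(\mu_i-\mu_j)$; it ignores that the covariance of $y$ is $\Sigma_j^{-1/2}\Sigma_i\Sigma_j^{-1/2}$.

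Your plan does not close this gap, and the Loewner/Anderson fallback fails. Take $d=2$, $\Sigma_j=I$, $\Sigma_i=\mathrm{diag}(1,a)$ with $0<a<1$, and $\mu_i-\mu_j=(M,0)$. Then $S_j^{+2\epsilon}$ is exactly the ball of radius $R_j=r_j+2\epsilon$ about $\mu_j$, and $w_{ij}=M^2$. With $\xi\sim\mathcal{N}(0,I)$,
\[
p_i\bigl(S_j^{+2\epsilon}\bigr)=\mathbb{P}\bigl((M+\xi_1)^2+a\,\xi_2^2\le R_j^2\bigr)>\mathbb{P}\bigl((M+\xi_1)^2+\xi_2^2\le R_j^2\bigr)=F_{\chi^2_2(w_{ij})}(R_j^2),
\]
even though $\Sigma_j^{-1/2}\Sigma_i\Sigma_j^{-1/2}\preceq I$. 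The opposite ordering fails too. With $\Sigma_i=\mathrm{diag}(s^2,1)$, $M>R_j$ and $1<s^2\le M(M-R_j)$, the same strict inequality holds, because extra variance along $\mu_i-\mu_j$ makes the distant ball easier to reach. Anderson's inequality does not help, because the ball is not centered at the mean of $y$. So the per-term bound $p_i(S_j^{+2\epsilon})\le F_{\chi^2_d(w_{ij})}(R_j^2)$, which both your argument and the paper's rely on, is false for unequal covariances. The statement as written therefore needs either an extra hypothesis or modified constants.

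There are two honest repairs. The first is to add $\Sigma_i=\Sigma_j$ as an explicit hypothesis. Then $y\sim\mathcal{N}(\Sigma_j^{-1/2}(\mu_i-\mu_j),I)$ and your argument is complete. The second keeps general covariances but whitens with the covariance of the sampling distribution $\mathcal{D}_i$. For $x=\bar{x}+v$ with $\bar{x}\in S_j$ and $\|v\|_2\le 2\epsilon$,
\[
\|\Sigma_i^{-1/2}(x-\mu_j)\|_2\le\|\Sigma_i^{-1/2}\Sigma_j^{1/2}\|_{\mathrm{op}}\,r_j+\frac{2\epsilon}{\sqrt{\lambda_{\min}(\Sigma_i)}}=:\tilde R_{ij}.
\]
Under $\mathcal{D}_i$ we have $\Sigma_i^{-1/2}(x-\mu_j)\sim\mathcal{N}(\Sigma_i^{-1/2}(\mu_i-\mu_j),I)$, so the $\chi^2$ identification is now exact. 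It gives $p_i(S_j^{+2\epsilon})\le F_{\chi^2_d(\tilde w_{ij})}(\tilde R_{ij}^2)$ with $\tilde w_{ij}=\|\Sigma_i^{-1/2}(\mu_i-\mu_j)\|_2^2$, which reduces to the stated $w_{ij},R_j$ when $\Sigma_i=\Sigma_j$.

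Separately, take localization directly from the hypothesis, as the paper's proof does, rather than deriving it from the printed parameter conditions via Theorem~\ref{thm: necessary-condition}. Since $p_i(S_i)=F_{\chi^2_d}(r_i^2)$, localization needs $\delta\ge 1-F_{\chi^2_d}(r_i^2)$. The printed condition $\delta\le 1-F_{\chi^2_d(0)}(r_i^2)$ points the other way, because the proof of that theorem flips the inequality midway.
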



Let us pause to elaborate on the implications of the theoretical results established so far. Based on the recent work of \cite{palcertified}, if the data distribution is $\left(C, \epsilon, \delta, \gamma\right)$-strongly localized, then there exists an $\left(\epsilon, \delta\right)$-robust classifier. However, given that the aforementioned result holds for any distribution $\mathcal{D}$, the previous work of \citet{palcertified} does not characterize the localization sets nor provides closed-form expressions for each localization parameter. Instead, by focusing on a structured setting instead we are able to define the localization sets $S_i$ and provide in Theorem~\ref{thm: sufficient-condition-strongly-localized} practical sufficient conditions for the existence of a provably robust classifier.
On the other hand, \citep{palcertified} proves that when classes are balanced, a robust classifier exists only if all class-conditionals are localized. 
Thus, using Theorem~\ref{thm: necessary-condition} we can provide a way of testing whether there is an $\left(\epsilon, \delta\right)$-robust classifier for the underlying data distribution. 

\section{A Provably Robust Classifier for $\ell_2$ Attacks} 
\label{sec: ELLIPS_NN_classifier}
In this section, we show how to construct a provably robust classifier against $\ell_2$ attacks for a Gaussian mixture model utilizing the intuition developed in the previous theoretical results. According to the established results of Section~\ref{sec: necessary_and_sufficient_cond}, if the underlying Gaussian mixture is strongly localized over the ellipsoid sets $S_i$, then a robust classifier is guaranteed to exist. Intuitively the robust classifier should depend on and leverage the structure of the localization sets in order to classify the inputs correctly.

The proposed \emph{nearest ellipsoid} (\textsc{ELLIPS}) classifier operates on ellipsoids $\cE \!\!=\!\! \{E_1, E_2, \ldots, E_K\}$, each one having an associated label $y_i \in \{1, 2, \ldots, K\}$ corresponding to the class $i$, and can be seen as an instantiation of the Bayes classifier for that setting. The ellipsoid $E_i$ is defined by the tuple $(\mu_i, \Sigma_i)_{i\in[K]},$ where $\mu_i, \Sigma_i$ denote the center and covariance matrix of the given ellipsoid. We, also, let $\Pi = (\pi_i)_{i\in[K]}$ be the set of priors of the marginal Gaussian distributions.
Having access to the sets $\cE, \Pi$, the proposed classifier is given by 
\begin{eqnarray}
   \textsc{ELLIPS}(x, \cE, \Pi) &=& y_{i^\star},
\end{eqnarray}
where
\begin{eqnarray}
   i^\star = \text{argmax}_{i\in[K]} \{\text{score}(x, E_i, \pi_i)\}, \nonumber 
\end{eqnarray}
\begin{equation*}
    \text{score}(x, E_i, \pi_i) = - d_M^2(x, E_i) - \log\left(\text{det}(\Sigma_i)\right) + 2 \log(\pi_i) \nonumber
\end{equation*}
and $d_M(x, E_i)$ denotes the Mahalanobis distance of the input sample $x$ from the $i$-th ellipsoid.

The classifier \textsc{ELLIPS} is a nearest ellipsoid classifier with respect to the $d_M$ distance that takes into account two additional terms regarding the shape of the ellipsoid defined by $\Sigma_i$ and the prior $\pi_i$. In this way, the proposed classifier can leverage the geometry of the underlying distribution in order to effectively classify the corresponding input $x \in \cX$. 
Notably, the \textsc{ELLIPS} classifier coincides with the well-known in the literature Quadratic Discriminant Analysis (QDA), whose properties in classification are widely analyzed.
\subsection{Certificate of Robustness}
In this section, we provide a certificate of robustness against $\ell_2$ adversarial attacks for the \textsc{ELLIPS} classifier. In order to establish theoretical guarantees for the certificate, we first need to define the notion of margin. Formally, the margin of the \textsc{ELLIPS} classifier at a point $x \in \cX$ is defined as:
\begin{eqnarray}
    m(x) = \text{score}(x, E_{i_*}, \pi_{i_*}) - \text{score}(x, E_{i_2}, \pi_{i_2}), \nonumber
\end{eqnarray}
where $i_* = \text{argmax}_{i\in[K]} \{\text{score}(x, E_i, \pi_i)\}$ and $i_2 = \text{argmax}_{i\neq i_*} \{\text{score}(x, E_i, \pi_i)\}$ are the classes with the highest and second highest score, respectively. The following theorem provides a certificate of robustness for the \textsc{ELLIPS} classifier based on the margin of each point $x\in\cX$. 

\begin{theorem}[Robustness Certificate] 
\label{thm: certificate_robustness2}It holds that 
$$\textsc{ELLIPS}(x, \cE, \Pi) = \textsc{ELLIPS}(x', \cE, \Pi)$$
whenever 
\begin{eqnarray}
   \|x' - x\|_2 &\leq& 
   \frac{m(x)}{\sqrt{c_M^2 + (-\lambda_{\min}^{W_{i}})_+ m(x)} + c_M} \label{certified_radius}
\end{eqnarray}
where $\lambda_{\min}^{W_{i}}$ is the minimum among all eigenvalues of the matrices $W_i = \Sigma^{-1}_i - \Sigma^{-1}_{i_*}, \forall i \neq i_*$, $(-\lambda_{\min}^{W_{i}})_+=\max(-\lambda_{\min}^{W_{i}},0)$, and $c_M = \max\limits_{i \neq {i_*}} \|\Sigma_{i_*}^{-1} (x - \mu_{i_*})^T - \Sigma_i^{-1} (x-\mu_i)^T\|_2$.
\end{theorem}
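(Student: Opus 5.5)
We work with the pairwise score gaps. Let $s_i(x) = \text{score}(x, E_i, \pi_i)$ and, for each $i \neq i_*$, define $g_i(x) = s_{i_*}(x) - s_i(x)$. By definition of the margin, $g_i(x) \geq m(x)$ for all $i \neq i_*$. It suffices to show that $g_i(x') > 0$ for every $i \neq i_*$ whenever $\|x'-x\|_2$ is below the stated radius. Then $i_*$ remains the argmax at $x'$, and the two classifier outputs coincide.

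\textbf{Step 1: exact quadratic expansion.} Since $d_M^2(x, E_i) = (x-\mu_i)^T\Sigma_i^{-1}(x-\mu_i)$, each $g_i$ is a quadratic polynomial with Hessian $2(\Sigma_i^{-1} - \Sigma_{i_*}^{-1}) = 2W_i$. The constant terms $\log\det$ and $\log \pi$ cancel under differentiation. Writing $x' = x + \delta$, the expansion is exact:
\begin{equation*}
g_i(x+\delta) = g_i(x) + \nabla g_i(x)^T\delta + \delta^T W_i \delta,
\end{equation*}
with $\nabla g_i(x) = -2\big(\Sigma_{i_*}^{-1}(x-\mu_{i_*}) - \Sigma_i^{-1}(x-\mu_i)\big)$. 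Hence $\|\nabla g_i(x)\|_2 \leq 2c_M$.

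\textbf{Step 2: worst-case lower bound.} Let $t = \|\delta\|_2$ and $\lambda = (-\lambda_{\min}^{W_i})_+$, taken as the maximum over $i \neq i_*$. Cauchy--Schwarz and the Rayleigh quotient give
\begin{equation*}
g_i(x+\delta) \geq m(x) - 2c_M t - \lambda t^2 .
\end{equation*}
The right-hand side is a concave (or linear, if $\lambda = 0$) function of $t$, uniform over $i$. It is positive for $t$ below its positive root
\begin{equation*}
t^\star = \frac{-c_M + \sqrt{c_M^2 + \lambda m(x)}}{\lambda}.
\end{equation*}

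\textbf{Step 3: matching the stated radius.} Rationalize the root by multiplying numerator and denominator by $\sqrt{c_M^2+\lambda m}+c_M$:
\begin{equation*}
t^\star = \frac{m(x)}{\sqrt{c_M^2+\lambda m(x)}+c_M}.
\end{equation*}
This form also covers $\lambda = 0$, where the root becomes $m/(2c_M)$, consistent with the linear case. It is exactly the radius in \eqref{certified_radius}.

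\textbf{Obstacles.} The main issue is the boundary $\|\delta\|_2 = t^\star$, where we only get $g_i \geq 0$, i.e.\ a possible tie. I would handle this either by a tie-breaking convention or by noting that the statement's ``$\leq$'' should be read with ties resolved consistently. A second, minor point is checking that the paper's constant $c_M$ absorbs the factor of $2$ from the gradient; the rationalized formula with $c_M$ as the half-gradient norm yields exactly the stated expression.
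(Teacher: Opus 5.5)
Your proposal is correct and follows essentially the same route as the paper: the exact quadratic expansion of the score gaps (quadratic term $\delta^T W_i \delta$, linear term bounded by $2c_M\|\delta\|_2$ via Cauchy--Schwarz), then solving $m(x) - 2c_M t - \lambda t^2 > 0$ and rationalizing the positive root. The paper instead bounds $m(x') = \min_{i\neq i_*} g_i(x')$ directly and treats $\lambda_{\min} < 0$ and $\lambda_{\min}\geq 0$ as two separate cases, which your single rationalized formula covers. Your remark about a possible tie at $\|x'-x\|_2 = t^\star$ is well taken, since the paper's proof also needs a strict inequality while the statement uses the non-strict ``$\leq$''.
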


Theorem~\ref{thm: certificate_robustness2} provides a certificate of robustness for the \textsc{ELLIPS} classifier. More specifically, it establishes that the proposed classifier remains robust for all perturbations $\|x - x'\|_2$ that satisfy \eqref{certified_radius}. Importantly, the maximum allowed perturbation depends on the margin $m(x)$ and the geometry around the current point $x \in \R^d$. Leveraging information about the classifier's landscape allows for tighter certification based on the local curvature controlling the $\lambda_{\min}^{W_{i}} \in \R$. Specifically, if $\lambda_{\min}^{W_{i}} < 0$, the certified radius in \eqref{certified_radius} corresponds to a second-order certificate, while if $\lambda_{\min}^{W_{i}} \geq 0$ it resembles a first-order formula for certified radius. 


\subsection{Robust Generalization Bound}
In this section, we consider the practical implementation of the \textsc{ELLIPS} classifier and provide a generalization bound for the certificate of robustness of the learnt classifier. So far, we have assumed that the parameters $(\mu_i, \Sigma_i, \pi_i)_{i\in[K]}$ of the ellipsoids are known to analyze the robustness of the proposed classifier. Hereinafter, we consider the learnt classifier \textsc{ELLIPS} which uses the sample mean, sample covariance and class proportions for estimating the true parameters of the underlying distribution. Specifically, if $\{x_j\}_{j=1}^{n_i} \sim \mathcal{D}_i$ are $n_i$ samples from the class $i\in[K],$ the algorithm uses the following estimates 
\begin{equation*}
    \hat{\mu}_i = \frac{1}{n_i} \sum_{j=1}^{n_i} x_j, \quad
    \hat{\Sigma}_i = \frac{1}{n_i} \sum_{j=1}^{n_i} (x_j - \hat{\mu}_i) (x_j - \hat{\mu}_i)^T,
\quad \hat{\pi}_i = \frac{n_i}{n},
\end{equation*}
where $n = \sum_{i=1}^K n_i$ is the total number of samples. 
Based on the above estimates, we derive generalization bounds that establish with high probability the robustness of the learnt classifier. More specifically, the following theorem indicates that with high probability the learnt certificate of robustness is close to the true certificate of robustness, thus ensuring the robustness of the learnt classifier ELLIPS. 
\begin{theorem}\label{thm: generalization_bound}
    For a sample $(x, y),$ let $\mathcal{R}(x), \hat{\mathcal{R}}(x)$ denote the true and learnt radius of robustness respectively. 
    If the number of samples observed from each Gaussian distribution $\mathcal{D}_i, i \in [K]$ is $n = \mathcal{O}\left(\frac{d^{1/4}\log(\frac{1}{\delta})}{\epsilon^{1/4}}\right)$, 
    then for any $0 < \epsilon < \epsilon_{\min}$ it holds with probability at least $1 - \delta$ that
    \begin{eqnarray}
        |\hat{\mathcal{R}} (x) - \mathcal{R}(x)| &\leq& \mathcal{O}\left(\epsilon\right) \nonumber
    \end{eqnarray}
    where $\epsilon_{\min} = \min\{\lambda_{\min}^{W_{i}}, \lambda_{\min}^{\Sigma_{i}}, c_M\}, \lambda_{\min}^{W_{i}}$ is the minimum over all eigenvalues of the matrices $W_i = \Sigma^{-1}_j - \Sigma^{-1}_{j_*}, \forall j \neq j_*$ and $\lambda_{\min}^{\Sigma_i},$ denotes the minimum eigenvalue value of the covariance matrices $\Sigma_i, \forall i \in [K]$. 
\end{theorem}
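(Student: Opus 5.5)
The plan is a perturbation argument: treat the certified radius of Theorem~\ref{thm: certificate_robustness2} as a smooth function of the estimated parameters $(\mu_i,\Sigma_i,\pi_i)_{i\in[K]}$ and propagate high-probability estimation errors through it. Write
\[
\mathcal{R}(x)=\Phi\bigl(m(x),c_M,\lambda\bigr),\qquad \Phi(m,c,\lambda)=\frac{m}{\sqrt{c^2+(-\lambda)_+m}+c}.
\]
Let $\hat{\mathcal{R}}(x)$ be the same expression with plug-in quantities $\hat m(x),\hat c_M,\hat\lambda$ built from $\hat\mu_i,\hat\Sigma_i,\hat\pi_i$. The proof then has three steps: parameter concentration, stability of the intermediate quantities, and Lipschitz continuity of $\Phi$.

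\textbf{Step 1 (parameter concentration).} For each class I will use standard Gaussian concentration. First, $\|\hat\mu_i-\mu_i\|_2\le \|\Sigma_i\|^{1/2}\sqrt{(d+2\log(1/\delta))/n}$. Second, the sample-covariance bound
\[
\|\hat\Sigma_i-\Sigma_i\|_{op}\lesssim\|\Sigma_i\|\Bigl(\sqrt{\tfrac{d+\log(1/\delta)}{n}}+\tfrac{d+\log(1/\delta)}{n}\Bigr).
\]
Third, Hoeffding for $\hat\pi_i$. A union bound over $K$ classes and three estimators costs only constant factors in $\log(1/\delta)$. I will then choose $n$ so that each error is at most a small multiple $\eta$ of the target accuracy. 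I expect the stated rate $n=\mathcal{O}(d^{1/4}\log(1/\delta)/\epsilon^{1/4})$ to come from how these bounds are tracked, with $\epsilon$ entering through the error parameter. I will not reverse-engineer it; I will simply state the required $n$ in terms of $\eta(\epsilon)$.

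\textbf{Step 2 (inverse and stability of intermediate quantities).}
\begin{itemize}
\item Since $\|\hat\Sigma_i-\Sigma_i\|\le\eta<\lambda_{\min}^{\Sigma_i}/2$, which uses $\epsilon<\epsilon_{\min}$, Weyl's inequality keeps $\hat\Sigma_i$ invertible. The resolvent identity $\hat\Sigma_i^{-1}-\Sigma_i^{-1}=\hat\Sigma_i^{-1}(\Sigma_i-\hat\Sigma_i)\Sigma_i^{-1}$ then gives $\|\hat\Sigma_i^{-1}-\Sigma_i^{-1}\|\le 2\eta/(\lambda_{\min}^{\Sigma_i})^2$.
\item The log-determinant is Lipschitz on this neighbourhood: $|\log\det\hat\Sigma_i-\log\det\Sigma_i|\le 2d\eta/\lambda_{\min}^{\Sigma_i}$.
\item The Mahalanobis terms $(x-\hat\mu_i)^T\hat\Sigma_i^{-1}(x-\hat\mu_i)$ and the vectors $\hat\Sigma_i^{-1}(x-\hat\mu_i)$ are $\mathcal{O}(\eta)$-close to their population versions, by a triangle inequality with $\|x-\mu_i\|$ treated as a constant.
\item Hence $|\hat c_M-c_M|=\mathcal{O}(\eta)$ because a max of Lipschitz functions is Lipschitz, and $|\hat\lambda-\lambda|=\mathcal{O}(\eta)$ by Weyl applied to $\hat W_i-W_i$.
\item The scores move by $\mathcal{O}(\eta)$. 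The margin, being the gap between the top two scores, also moves by $\mathcal{O}(\eta)$, provided the identity of the top class $i_*$ is unchanged. I will ensure this by requiring $\eta$ smaller than half the margin; otherwise the certified radius is near zero on both sides anyway.
\end{itemize}

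\textbf{Step 3 (Lipschitz continuity of $\Phi$).} The denominator of $\Phi$ is at least $c\ge c_M/2>0$ on the neighbourhood, which again uses $\epsilon<c_M\le\epsilon_{\min}$. So $\Phi$ is smooth there, with partial derivatives bounded by constants depending on $c_M$, $m$ and $|\lambda|$.

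The kink of $(\cdot)_+$ at $\lambda=0$ is the main obstacle. I will handle it by noting that $t\mapsto\sqrt{c^2+t_+m}$ is still globally Lipschitz in $t$, with constant $m/(2c)$. Alternatively, $\epsilon<|\lambda_{\min}^{W_i}|$ keeps $\hat\lambda$ on the same side of $0$ as $\lambda$, which seems to be the role of $\lambda_{\min}^{W_i}$ in $\epsilon_{\min}$.

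Combining the three steps by the mean value theorem gives $|\hat{\mathcal{R}}(x)-\mathcal{R}(x)|\le L\eta=\mathcal{O}(\epsilon)$ with probability at least $1-\delta$.
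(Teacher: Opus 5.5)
Your perturbation argument is sound, and it takes a genuinely different and sharper route than the paper's.

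\textbf{How the paper argues.} The paper's proof works with the two unrationalized forms $m(x)/(2c_M)$ and $\bigl(c_M-\sqrt{c_M^2-m(x)\lambda_{\min}}\bigr)/\lambda_{\min}$. It splits into four cases by the signs of $\lambda_{\min}$ and $\hat\lambda_{\min}$, and compares the two formulas by a Taylor expansion in the mixed-sign cases. It needs $|\hat\lambda_{\min}-\lambda_{\min}|<\lambda_{\min}$ to control $1/\hat\lambda_{\min}$ (Lemma~\ref{lemma: lambda_inverse_difference}); this is why $\lambda_{\min}^{W_i}$ appears in $\epsilon_{\min}$. It bounds $|A-\hat A|$ for $A=\sqrt{c_M^2-m(x)\lambda_{\min}}$ by $\sqrt{|A^2-\hat A^2|}$ (Lemma~\ref{lemma: A_dif}), which is why it only reaches $|\hat{\mathcal{R}}(x)-\mathcal{R}(x)|\le\tilde{\mathcal{O}}\bigl((d/n)^{1/4}\bigr)$.

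\textbf{What your route buys.} Your rationalized $\Phi$ has denominator at least $c\ge c_M/2$, and $\lambda\mapsto(-\lambda)_+$ is $1$-Lipschitz. Together these give a single Lipschitz estimate with first-order error $\mathcal{O}(\eta)$. There is no case split and no division by $\lambda$. You also do not need $\lambda_{\min}^{W_i}$ in $\epsilon_{\min}$, so you can drop your ``alternatively'' remark; as written, $\epsilon<\lambda_{\min}^{W_i}$ cannot even hold when $\lambda_{\min}^{W_i}\le 0$.

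You also handle a change of the top class, which the paper silently excludes by fixing $i_*$ and the runner-up in Lemma~\ref{lemma: margin_difference}. Your ``near zero on both sides'' claim is correct but should be written out:
\begin{itemize}
\item if the plug-in top class differs from $i_*$, both $m(x)$ and $\hat m(x)$ are at most $2\max_i|\hat s_i-s_i|$ (plug-in versus true scores);
\item the triangle inequality gives $\hat c_M\ge c_M/2-\mathcal{O}(\eta)$ for the new top class;
\item $\Phi\le m/(2c)$, so both radii are $\mathcal{O}(\eta/c_M)$.
\end{itemize}

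\textbf{The sample size.} What your proposal does not deliver is the stated sample size. Your expectation that $n=\mathcal{O}\bigl(d^{1/4}\log(1/\delta)/\epsilon^{1/4}\bigr)$ will fall out of the bookkeeping is mistaken, because no argument can give it. With $n\propto\epsilon^{-1/4}$, the mean estimates alone are off by order $\sqrt{d/n}\asymp\epsilon^{1/8}\gg\epsilon$ for fixed $d,\delta$.

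Meanwhile $\mathcal{R}(x)$ generically has a nonvanishing gradient in the means. This already happens for two classes in $d=1$ ($\Sigma_i=\sigma_i^2$) with $\sigma_2^2<\sigma_1^2$ and $x$ in the first class's region, where $\lambda_{\min}^{W_i}=\sigma_2^{-2}-\sigma_1^{-2}>0$ and hence $\epsilon_{\min}>0$. So $|\hat{\mathcal{R}}(x)-\mathcal{R}(x)|$ fluctuates at order $n^{-1/2}\asymp\epsilon^{1/8}$, and it exceeds $C\epsilon$ with probability tending to one as $\epsilon\to0$.

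The paper reaches the stated $n$ only by inverting its final bound $\tilde{\mathcal{O}}\bigl((d/n)^{1/4}\bigr)\le\epsilon$ incorrectly; the correct inversion is $n\gtrsim d/\epsilon^{4}$.

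Your own argument gives a definite answer. Take your Step~1 with $\eta\asymp\sqrt{(d+\log(K/\delta))/n}$, together with the factor $d$ that your log-determinant bound puts into the Lipschitz constant. This yields $n\gtrsim d^{2}\bigl(d+\log(K/\delta)\bigr)/\epsilon^{2}$. The constants depend on the spectra of the $\Sigma_i$ and on $\|x-\mu_i\|$, $c_M$, $m(x)$ and $\min_i\pi_i$.

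State this explicitly as the sample size you prove, instead of leaving $n$ in terms of $\eta(\epsilon)$. It improves on the $\epsilon$-dependence that the paper's argument supports, but it necessarily replaces, rather than recovers, the rate written in the theorem.
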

Theorem~\ref{thm: generalization_bound} provides a generalization bound for the certificate of robustness of the \textsc{ELLIPS} classifier. More specifically, it establishes the required number of samples such that the learnt certified radius of robustness is $\epsilon$-close to the true certified radius of robustness. Interestingly, the provided bound accommondates the change in the expression of \eqref{certified_radius} based on the local geometry induced to take into account both of the closed-form expressions from Theorem~\ref{thm: certificate_robustness2}, thus fully characterizing the generalization of the combined formula of certified radius. We note that the $\mathcal{O}\left(\cdot\right)$ notation hides any dependence on constants, such as the sample difference from the mean $\|x - \mu_i\|_2$, with respect to the parameters $n, d, \delta$.
\section{Generalizing to Complex Real-data Distributions}\label{sec: generalize-to-realworld}

In this section, we focus on constructing certifiably robust classifiers for arbitrary input distributions.
 The previous analysis of robustness in the Gaussian mixture case, as we shall see, is not merely a special case of the more general construction, but rather serves as the fundamental block en route to treating arbitrary, complex data distributions. More specifically, we leverage existing pretrained encoders to map the input distribution approximately to a Gaussian mixture and then perform a fine-grained analysis of robustness in the structured setting of the latent space. 



The intuition behind the aforementioned design paradigm is that one can leverage existing pretrained encoders and then finetune them appropriately in order to be used in the proposed pipeline. Actually, there is a wide range of Visual Language Models, such as CLIP, FARE-4 \citep{schlarmann2024robustclip}, and variants thereof, that can be incorporated in the proposed framework to obtain a certifiably robust classifier on an arbitrary data distribution. We provide the implementation details regarding fine-tuning in Section~\ref{sec: experiments}.

We, next, provide a certificate of robustness for the proposed generalized classifier (\textsc{GenELLIPS}) that acts on any arbitrary input distribution and uses an encoder network $f$ that is locally Lipschitz continuous and maps approximately to an Gaussian mixture distribution. Formally, the \textsc{GenELLIPS} classifier is defined as $\textsc{GenELLIPS}(x, f, \cE, \Pi) = \textsc{ELLIPS}(f(x), \cE, \Pi)$, where the parameters $\cE, \Pi$ are obtained by fitting Gaussians in the latent representations of the data.
Denote, also, with $i_* = \text{argmax}_{i\in[K]} \{\text{score}(f(x), E_i, \pi_i)\}\nonumber$ the class with the highest score for a point $x\in \cX$. We this notation at hand, we are ready to present the theorem establishing the certificate of robustness for the \textsc{GenELLIPS} classifier. 
\begin{theorem}\label{thm: gen_ELLIPS}
    Let $f$ be an locally Lipschitz encoder mapping the input distribution $\mathcal{D}_x$ to a latent distribution
$P_z$ and $Q_z$ be a GMM fitted to the latent distribution $P_z$. If $\mathrm{KL}(P_z\|Q_z)\leq \epsilon$, then
\begin{enumerate}[leftmargin=*]
    \item For $\epsilon = 0$, the certified radius of $\textsc{GenELLIPS}(x, f, \cE, \Pi)$ at any point $x\in\cX$ is
\begin{equation}
    R(x) \leq 
\frac{m(f(x))}{L_x \sqrt{c_M^2 + (-\lambda_{\min}^{W_{i}})_+ m(x)} + c_M} \nonumber
\end{equation}
where $\lambda_{\min}^{W_{i}}$ is the minimum among all eigenvalues of the matrices $W_i = \Sigma^{-1}_i - \Sigma^{-1}_{i_*}, \forall i \neq i_*$, $(-\lambda_{\min}^{W_{i}})_+=\max(-\lambda_{\min}^{W_{i}},0)$, and $c_M = \max\limits_{i \neq {i_*}} \|\Sigma_{i_*}^{-1} (x - \mu_{i_*})^T - \Sigma_i^{-1} (x-\mu_i)^T\|_2$.
\item For $\epsilon > 0$, the certified accuracy of \textsc{GenELLIPS} under the true encoded distribution $P_z$ satisfies 
\begin{eqnarray}
  \mathrm{CertAcc}(P_z)
\geq
\mathrm{CertAcc}(Q_z)
-
\sqrt{\frac{\epsilon}{2}} \label{eq: certified_acc_between_}.  
\end{eqnarray}
\end{enumerate}
\end{theorem}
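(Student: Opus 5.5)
The plan is to read $R(x)$ in Part~1 as the radius that \textsc{GenELLIPS} certifies at $x$ by transporting the latent certificate of Theorem~\ref{thm: certificate_robustness2} back through the encoder. I will show this radius obeys the displayed inequality, with all latent quantities ($m$, $c_M$, $i_*$) evaluated at $z=f(x)$. For Part~2 I will derive the accuracy transfer from Pinsker's inequality.

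\textbf{Part 1.} Fix $x$ and let $L_x$ be a local Lipschitz constant of $f$ on a neighbourhood $\mathbb{B}_2(x,\rho)$, so that $\|f(x')-f(x)\|_2\le L_x\|x'-x\|_2$ for $x'$ in that ball.

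First, since any upper bound on a local Lipschitz constant is again a valid local Lipschitz constant, I may and will assume $L_x\ge 1$.

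Second, apply Theorem~\ref{thm: certificate_robustness2} at the latent point $z=f(x)$. It says \textsc{ELLIPS} is constant on the latent ball of radius
\[
r(z)=\frac{m(z)}{\sqrt{c_M^2+(-\lambda_{\min}^{W_i})_+m(z)}+c_M}.
\]
Third, for $\|x'-x\|_2\le r(z)/L_x$ (intersected with $\rho$) we get $\|f(x')-f(x)\|_2\le r(z)$. Hence $\textsc{GenELLIPS}(x')=\textsc{GenELLIPS}(x)$, and the certified radius produced by this composition is
\[
R(x)=\frac{m(f(x))}{L_x\bigl(\sqrt{c_M^2+(-\lambda_{\min}^{W_i})_+m(f(x))}+c_M\bigr)}.
\]
Finally, because $L_x\ge1$ and $c_M\ge 0$, we have $L_x\sqrt{\cdot}+c_M\le L_x(\sqrt{\cdot}+c_M)$. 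This gives $R(x)\le m(f(x))/(L_x\sqrt{\cdot}+c_M)$, which is the stated bound.

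The delicate point is that the statement is only meaningful once $R(x)$ is fixed as this composed certificate, and once $L_x$ is normalized to satisfy $L_x\ge1$. I will state both conventions explicitly, and I will note that $\rho$ only caps $R(x)$ further.

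\textbf{Part 2.} Fix the fitted parameters $(\cE,\Pi)$, which are determined by $Q_z$ and do not depend on the sample being scored. For a target radius, define the event $A\subseteq$ latent space $\times\,\cY$ consisting of pairs $(z,y)$ with $\textsc{ELLIPS}(z)=y$ and certified radius at least that target. This set is determined by the continuous functions $m$ and $c_M$ of $z$, so it is measurable, and $\mathrm{CertAcc}(P)=P(A)$. Then
\[
\mathrm{CertAcc}(Q_z)-\mathrm{CertAcc}(P_z)\le \sup_A|P_z(A)-Q_z(A)|=\mathrm{TV}(P_z,Q_z)\le\sqrt{\mathrm{KL}(P_z\|Q_z)/2}\le\sqrt{\epsilon/2}.
\]
The last two steps use Pinsker's inequality and the hypothesis $\mathrm{KL}(P_z\|Q_z)\le\epsilon$.

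The main obstacle in Part~2 is making sure the KL bound is taken on the joint law of $(z,y)$, or equivalently applied class-conditionally with matching priors, so that the label is carried along. I will handle this by letting $P_z$ and $Q_z$ denote the joint latent–label laws. The data-processing inequality then keeps the argument valid even if only the marginals are compared through a fixed labelling map.
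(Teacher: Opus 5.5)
Your proposal follows the paper's proof. Part~1 composes the local Lipschitz bound with the latent certificate of Theorem~\ref{thm: certificate_robustness2} to certify every $x'$ with $\|x'-x\|_2\le r(f(x))/L_x$. Part~2 applies Pinsker's inequality to $|P_z(A)-Q_z(A)|$ on the certified set; your joint latent--label formulation is, if anything, more careful than the paper's, which works only with latent marginals.

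The one step to drop is the normalization $L_x\ge 1$. In the paper, ``$R(x)\le\cdots$'' is a condition on the perturbation size $\|x'-x\|_2$, and the paper's own derivation places $L_x$ on the whole denominator (modulo a stray $\lambda_{\min}$ factor). The display's placement of $L_x$, like $m(x)$ and the $x$ inside $c_M$, is a typo, so your third step is already the full result. Turning it into an upper bound on a sufficient radius carries no robustness content; under the certificate reading, the literal display would follow from your argument only when $L_x\le 1$, so the normalization points the wrong way.
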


Theorem~\ref{thm: gen_ELLIPS} provides the certified accuracy of the generalized classifier (\textsc{GenELLIPS}). More specifically, if the classifier uses an encoder that maps the input distribution to an approximate GMM, then the theorem guarantees that the certified accuracy under the approximate distribution $P_z$ degrades only by $\sqrt{\frac{\epsilon}{2}}$ from the certified accuracy of the encoder that maps exactly to a GMM distribution $Q_z$. On the other hand, in the case that the encoder is exact, Theorem~\ref{thm: gen_ELLIPS} provides additionally the closed-form expression of the certified radius of the classifier. In each case, by knowing the distance of the two distributions and the local Lipschitz constant of the encoder, the certified accuracy of the \textsc{GenELLIPS} classifier can be computed.

It suffices, now, to select an encoder $f$ and utilize a method for estimating the local Lipschitz constant in practice. 
To instantiate the \textsc{GenELLIPS} classifier we leverage a FARE-4 encoder \citep{schlarmann2024robustclip}, and finetune it using an objective promoting isotropy and Gaussianity of the latent distribution. For a detailed description of the loss used we refer the interested reader to Appendix~\ref{app: experiments_details}. For estimating the local Lipschitz constant at a sample $x$, we utilize the CLEVER method \citep{weng2018evaluatingrobustnessneuralnetworks} with .
\section{Experimental Evaluation}
We conduct experiments on both synthetic data and benchmark datasets validating our theoretical results and evaluating the robustness of our proposed classifier in practice. 

\subsection{Synthetic Experiments}
\label{sec: experiments}

\myparagraph{Setup} We conduct experiments in the Gaussian mixture setting, where the input distribution is comprised of $K$ classes and each class is distributed according to $\mathcal{N}(\mu_i, \Sigma_i), \forall i \in [K]$. We run experiments for multiple setups testing for different number of classes $K=\{2, 3, 5, 10\}$ with different distances $R = \{2, 4, 6\}$ between them, as well as isotropic and non-isotropic covariances matrices $\Sigma$. We provide further details on our synthetic experiments in Appendix~\ref{app: experiments_details}.  

\textbf{Comparison of Our Method with \citet{paladversarial}.} We empirically validate the robustness certificate for the \textsc{ELLIPS} classifier from Theorem~\ref{thm: certificate_robustness2} and compare the certified accuracy of our method with the one of \citet{palcertified}, where the certified accuracy equals to $1 - \delta - \gamma$. As shown in Figure~\ref{fig: comparison_with_Pal}, our approach consistently provides tighter certified robustness guarantees across all experimental settings, significantly outperforming the method of \citet{paladversarial}.

\begin{figure}[ht]
    \centering
    \includegraphics[width=0.34\linewidth]{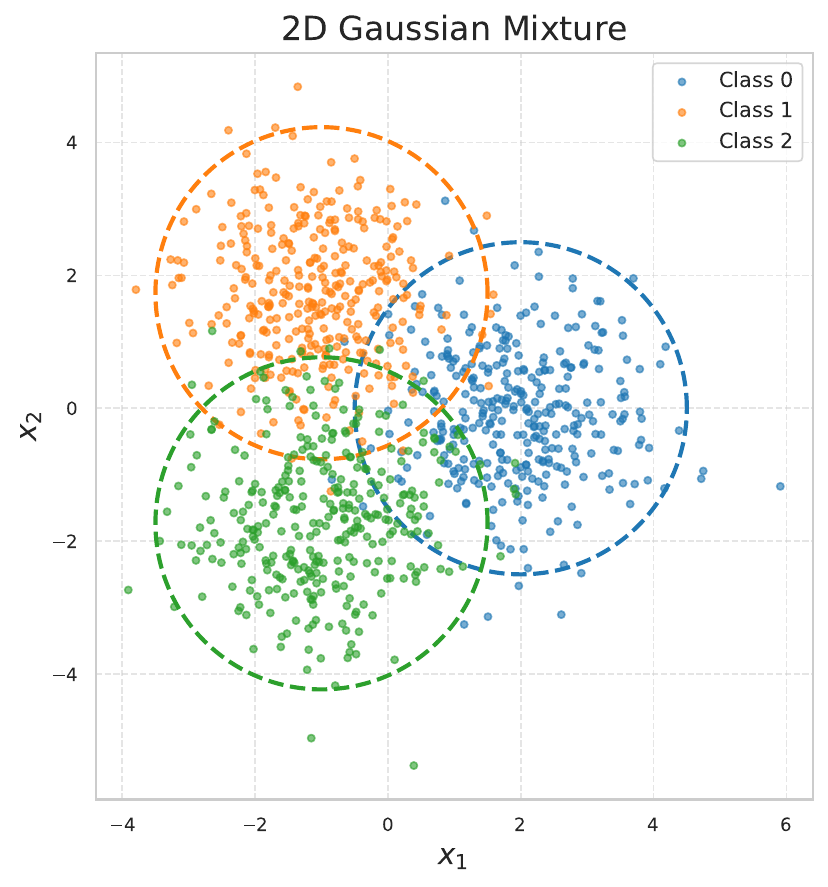}
    \includegraphics[width=0.41\linewidth]{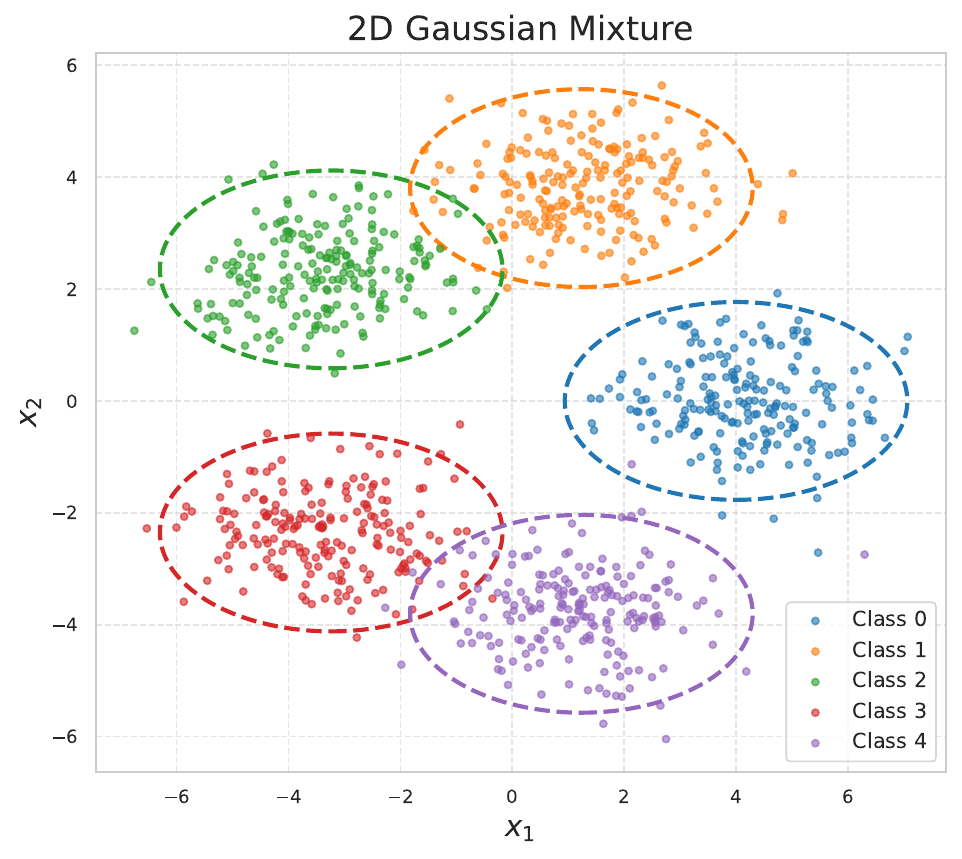}\\
    \includegraphics[width=0.37\linewidth]{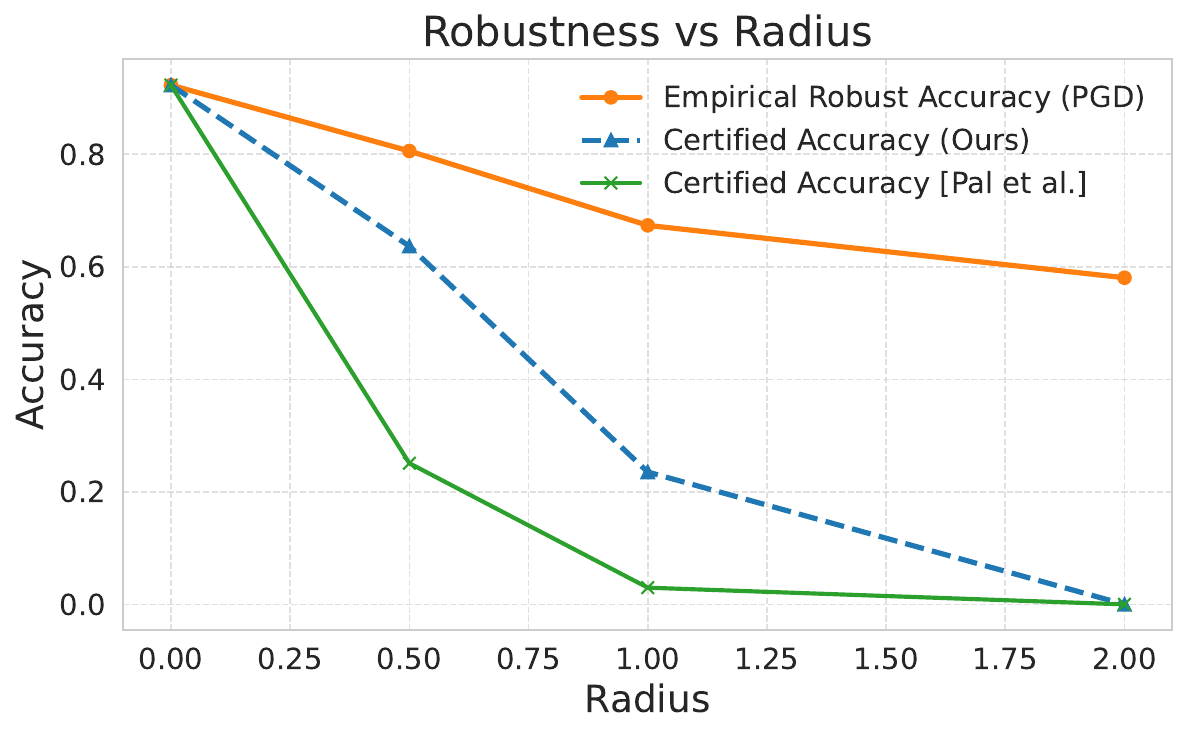}
     \includegraphics[width=0.37\linewidth]{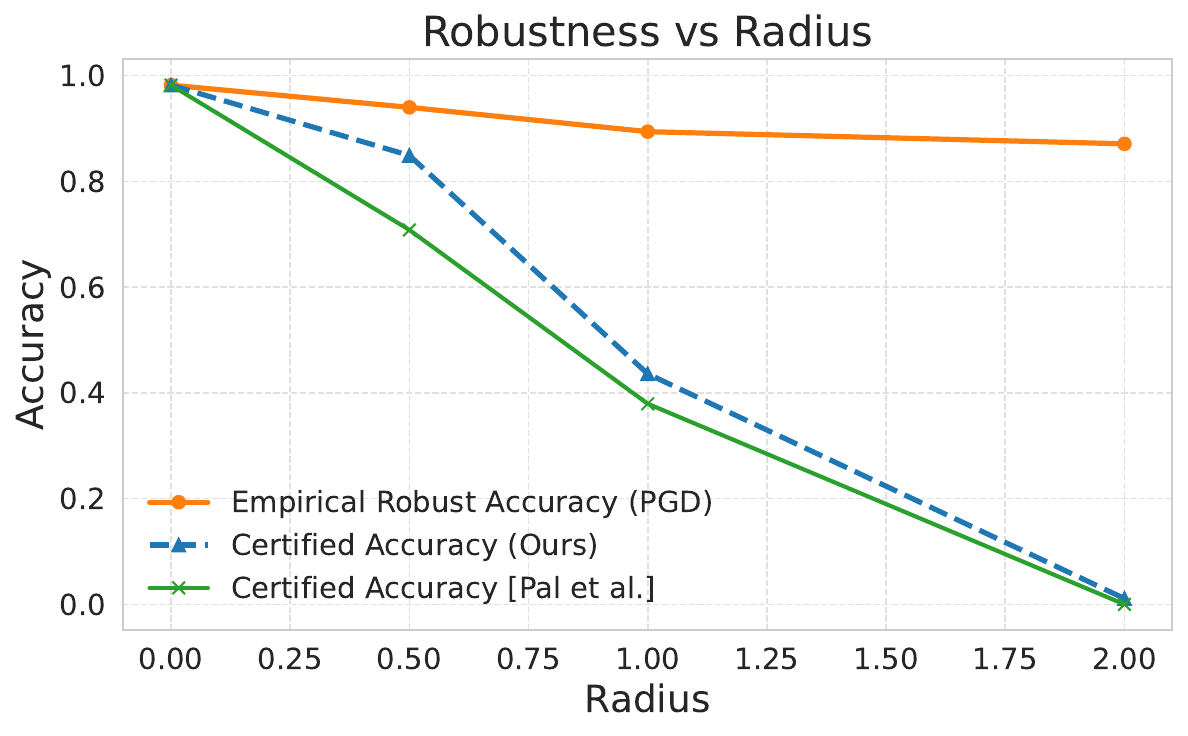}
    \caption{Comparison of different certification methods in Gaussian Mixture distributions. The proposed method outperforms the prior certification scheme of \citet{paladversarial}, achieving higher robust accuracy against $\ell_2$ attacks.}
    \label{fig: comparison_with_Pal}
\end{figure}
    

\subsection{Experiments on Benchmark Datasets}
\begin{table*}[ht]
   \centering
  \begin{tabular}{@{}l c ccc@{}}
    \toprule
    \multirow{2}{*}{\textbf{Model}} 
      & \multirow{2}{*}{\shortstack{\textbf{Clean} \\  \textbf{Accuracy}}} 
      & \multicolumn{3}{c}{\textbf{Certified Accuracy (\%)}} \\
    \cmidrule(lr){3-5}
      & 
      & $\epsilon = 0.25$ 
      & $\epsilon = 0.5$ 
      & $\epsilon = 1.0$ \\
    \midrule
    SmoothAdv \citep{salman2019provably} 
      & \underline{86.2\%} 
      & \underline{81.0\%} 
      & 54.4\% 
      & 34.8\% \\
    DRT + MME (Gaussian) \citep{yang2022certifiedrobustnessensemblemodels} 
      & 81.4\% 
      & 70.4\% 
      & 57.8\% 
      & 34.4\% \\
    DRT + MME (SmoothAdv) \citep{yang2022certifiedrobustnessensemblemodels} 
      & 72.6\% 
      & 67.2\% 
      & \underline{60.2\%} 
      & 39.4\% \\
    DRT + WE (SmoothAdv) \citep{yang2022certifiedrobustnessensemblemodels} 
      & 72.6\% 
      & 67.0\% 
      & \underline{60.2\%} 
      & \underline{39.5\%} \\
    \rowcolor{green!30}
    \textbf{\textsc{GenELLIPS} (Ours)} 
      & \textbf{90.14\%} 
      & \textbf{84.5\%} 
      & \textbf{78.2\%} 
      & \textbf{40.5\%} \\
    \bottomrule 
  \end{tabular}
  \caption{Certified accuracy on CIFAR-10 dataset. The proposed method outperforms the state-of-the-art models in the SoK benchmark, achieving higher robust accuracy without compromising the clean accuracy.} 
    \label{table: cifar_10_results}
\end{table*}

\begin{table*}[ht]
  \centering
  \begin{tabular}{@{}lcc@{}}
    \toprule
    \textbf{Model} & $\boldsymbol{\epsilon = 1.0}$ & $\boldsymbol{\epsilon = 2.0}$ \\
    \midrule
    \rowcolor{red!10} DensePure \citep{xiao2022densepureunderstandingdiffusionmodels} & \textbf{67.0\%} & \textbf{42.2\%} \\
    \rowcolor{red!10} Denoising with Pre-trained Diffusion Models \citep{carlini2023certifiedadversarialrobustnessfree} & \underline{54.3\%} & 29.5\% \\
    Randomized Smoothing and Adversarial Training \citep{salman2020denoised} & 45.0\% & 28.0\% \\
    Ensemble Models and Variance Reduction \citep{horváth2022boostingrandomizedsmoothingvariance} & 44.6\% & 28.6\% \\
    Ensemble Models \citep{yang2022certifiedrobustnessensemblemodels} & 44.4\% & 30.4\% \\
    \rowcolor{green!30}
    \textbf{\textsc{GenELLIPS} [Ours]} & 45.7\% & \underline{31.1\%} \\
    \bottomrule
  \end{tabular}
  \caption{Certified accuracy on ImageNet dataset. Our approach performs competitively with the models in the SoK benchmark. Demonstrably, it outperforms all state-of-the-art models apart from the ones that use diffusion models, which might be computationally expensive in practice.}
  \label{table:imagenet_results}
\end{table*}



\textbf{Results.} We compare our method against state-of-the-art certified robustness approaches reported in the SoK benchmark by \citet{li2023sok} in the CIFAR-10 and ImageNet dataset. As shown in Table~\ref{table: cifar_10_results}, \textsc{GenELLIPS} consistently outperforms prior methods in the CIFAR-10 dataset, achieving higher certified accuracy across all perturbation levels $\epsilon = \{0.25, 0.5, 1.0\}$. Notably, the classifier simultaneously maintains superior clean accuracy compared to the reported baselines. The presented results highlight the robustness of the proposed pipeline as well as the practical effectiveness of our certification framework. 

\begin{table*}[ht]
  \centering
  \begin{tabular}{@{}lc@{}}
    \toprule
    \textbf{Model} & \textbf{Wall Time/Image (min)} \\
    \midrule
    \rowcolor{red!10} DensePure \citep{xiao2022densepureunderstandingdiffusionmodels} & 36.47 \\
    \rowcolor{red!10} Denoising with Pre-trained Diffusion Models \citep{carlini2023certifiedadversarialrobustnessfree} & 25.599 \\
    Randomized Smoothing & 3.015 \\
    \rowcolor{green!30}
    \textbf{\textsc{GenELLIPS} [Ours]} & 2.729 \\
    \bottomrule
  \end{tabular}
  \caption{Wall-clock Time on ImageNet dataset. Our approach is 9x-12x faster than the diffusion-based pipelines, offering a computationally light alternative with competitive certified accuracy.}
  \label{table: wall_clock}
\end{table*}
On the ImageNet dataset our method performs competitively against the top baselines for certified accuracy reported in the SoK benchmark. In Table~\ref{table:imagenet_results}, the proposed method outperforms all prior baselines with the only exception the ones that utilize diffusion models and thus incur a significantly higher computation cost for certification. For reference, Table~\ref{table: wall_clock} shows that \textsc{GenELLIPS} is 9x - 12x times faster than the top diffusion baselines, indicating that our method is a light-weight approach to certified robustness with competitive performance. 
\section{Conclusion}
\label{sec: conclusion}
We have proposed a principled framework for leveraging the structure of the data distribution to design classifiers that are both certifiably robust and achieve strong empirical performance. Our theoretical contributions extend prior localization results by providing practical and verifiable conditions for computing localization parameters in Gaussian mixture models, thus ensuring the existence of a robust classifier. Building on the aforementioned results, we introduced a robust classifier that exploits the geometric structure of the underlying distribution and is provably robust against $\ell_2$-adversarial attacks. To handle complex real-world distributions, we generalized our approach using an encoder network that maps inputs to a structured Gaussian mixture, and established a certifiably robust pipeline for any underlying data distribution. Empirical evaluations demonstrated that our method outperforms state-of-the-art robust pipelines, achieving high certified robustness and simultaneously maintaining strong clean accuracy.


\newpage
\bibliographystyle{plainnat}
\bibliography{main}

\newpage

\restoreaddcontentsline   
\appendix
\newpage
\section*{Supplemental Material}
\tableofcontents

\newpage
\section{Additional Related Work on Certified Robustness}
\label{app: related_work}
There has been a great line of work on methods establishing theoretical guarantees in the field of certified robustness. The closely related ones to our theoretical investigation aim to correlate the properties of the underlying data distribution with the existence of a robust classifier. 
In \citet{dohmatob2019generalized, pydi2020adversarial}, the authors focus on a binary classification setting and provide a lower bound on the robust classification risk that can be attained. The established bound depends on the Wasserstein distance between the two class conditional distributions, showing that the robust risk increases as the class conditional become closer. This intuition is further extended in the general multi-class classification setting in \citet{palcertified, paladversarial} by considering the sets where each marginal distribution localizes and measuring their overlap to estimate the robust risk. However, \citet{palcertified} do not provide a practical method for computing the associated localization sets, thus constraining the applicability of the established method in practice. Our work instead expands the previous results by providing concrete expressions for the localization sets and the associated parameters and proposing a classifier that utilizes the localization sets in order to robustly classify the input points. 

Given that the proposed \textsc{ELLIPS} classifier consists an instantiation of the Bayes classifier for the GMM setting, we provide additional theoretical studies on the optimal Bayes classifier for the clean and adversarial classifier on that setting. Recent work of \citet{10197639} uses robust isoperimetry and establishes the closed form expressions of the Bayes optimal classifier for the adversarial classification task for two or three classes. The general case even though a fundamental question to the best of our knowledge remains open. Lastly, a specific examination of the classifier for $\ell_0$ attacks is provided in \citet{10.1145/3417994} establishing an asymptotically optimal robust classifier for the GMM setting. We leave as future work examining whether our approach, that uses an encoder and then a classifier for the GMM setting, can be combined with the robust classifier of \citet{10.1145/3417994} to establish robust high certified accuracy results against $\ell_0$ adversarial attacks. 
\section{Limitations}
\label{sec: limitations}
Since this work is one of first in leveraging the data structure in order to provide classifiers that are efficient and certifiably robust, there is a number of limitations as well as avenues for future research directions. \\
\textbf{Beyond $\ell_2$ Attacks.} One limitation of the current results is that they apply to $\ell_2$ adversarial attacks. An interesting open problem is to investigate how our framework can be adapted to other adversarial threat models, providing certified classifiers under different $\ell_p$-attacks or even semantic adversarial attacks. \\
\textbf{Alternative Fine-tuning.}
Another limitation of our work is that the encoder should be fine-tuned in order to map the input distribution to an approximately latent GMM. The level of approximation defines the degradation of the certified accuracy from the certified accuracy of the exact GMM latent distribution. Thus, investigating methods for better fine-tuning the encoder and achieving better approximation in the latent space will provide even better certified accuracy results. 

\newpage
\section{Proof of Theorem \ref{thm: necessary-condition}}
\label{app: thm: necessary-condition}
\begin{proof}
    In order to prove that each $\mathcal{D}_i, \forall i \in [K],$ is $\left(C, \epsilon, \delta\right)-$localized, we need to show that there is a set $S_i\subseteq \mathcal{X}$ such that the following hold 
    \begin{eqnarray}
        p_i(S_i) &\geq& 1 - \delta \label{t1_0a} \\
        \text{Vol}(S_i) &\leq& C e^{-\epsilon} \label{t1_0b}
    \end{eqnarray}
    where $p_i$ is the density function of $\mathcal{D}_i$.  
    We, first, define the set $S_i$ on which each Gaussian distribution $\mathcal{D}_i$ localizes. To do so, consider the probability density function 
    \begin{eqnarray}
        p_i(x) = \frac{1}{\sqrt{(2\pi)^{d} \text{det}(\Sigma_i)}} e^{-\frac{1}{2} (x-\mu_i)^T \Sigma_i^{-1} (x - \mu_i)} \nonumber
    \end{eqnarray}
    and the $c$ level-set 
    \begin{eqnarray}
        A_c = \left\{x\in \mathcal{X}: p_i(x) \geq c\right\}\label{t1_e}
    \end{eqnarray}
    for some fixed $0 < c < p_i(\mu_i)$. We want to select $c$ such that at least $1 - \delta$ of the mass is included in this level set, so that inequality \eqref{t1_0a} holds. Note that for a fixed $c$ the level-set is an ellipsoid, as it holds that
    \begin{eqnarray}
        p_i(x) &=& c \nonumber \\
        \iff \ln p_i(x) &=& \ln c \nonumber \\
        \iff (x-\mu_i)^T \Sigma_i^{-1} (x - \mu_i) &=& -\left[2\ln(c) +d \ln(2\pi) + \ln(\text{det}(\Sigma_i))\right] \nonumber 
    \end{eqnarray}
    Letting $r_i^2 = -\left[2\ln(c) +d \ln(2\pi) + \ln(\text{det}(\Sigma_i))\right]$ for any $0 < c < p_i(\mu_i)$, the level set in \eqref{t1_e} can be equivalently written as
    \begin{eqnarray}
         S_i =\left\{x\in \mathcal{X}: (x-\mu_i)^T \Sigma_i^{-1} (x - \mu_i) \leq r_i^2\right\} \nonumber
    \end{eqnarray}
    which is the set of points with Mahalanobis distance $d_M(x, \mu_i) \leq r_i$. 
    
    In order for \eqref{t1_0a} to hold, we want to find the level set $c$ of $\mathcal{A}_c$ or equivalently the radius $r_i$ of the set $ S_i $ 
    such that at least $1 - \delta$ of the mass of the Gaussian distribution $\mathcal{N}\left(\mu_i, \Sigma_i\right)$ is included in $S_i$  
    \begin{eqnarray}
        \int_{S_i} p_i(x) dx \geq 1 - \delta \label{t1_d}
    \end{eqnarray}
    The integral in \eqref{t1_d} is the probability that a sample $x\sim\mathcal{D}_i$ lies inside the set $S_i$ and thus we get equivalently that the following should hold 
    \begin{eqnarray}
        \mathbb{P}_{x \sim \mathcal{D}_i} \left(x\in S_i\right) &=& \int_{S_i} p_i(x) dx \geq 1 - \delta \label{t1_f}
    \end{eqnarray}
    By a change of variables $y = \Sigma_i^{-1/2} (x - \mu_i),$ we can transform the density $p_i(x)$ inside the integral to the density of the standard $\mathcal{N}\left(0, I\right)$ Gaussian $f(x) = \frac{1}{\sqrt{(2\pi)^{d}}} e^{-\frac{1}{2} \|y\|^2_2}$ and thus the set $S_i$ can be equivalently written as
    \begin{eqnarray}
         \hat{S}_i = \left\{x\in \mathcal{X}: \|y\|_2^2 \leq r_i^2\right\} .\nonumber
    \end{eqnarray}
    Hence, inequality \eqref{t1_f} after the change of variables $y = \Sigma_i^{-1/2} (x - \mu_i)$ requires
    \begin{eqnarray}
        \mathbb{P}_{x \sim \mathcal{D}_i} \left(x\in \mathcal{X}: \|y\|_2^2 \leq r_i^2\right) \leq 1 - \delta \label{t1_1}.
    \end{eqnarray}
    Note, now, that since $y = \Sigma_i^{-1/2} (x - \mu_i)$ follows the standard Gaussian distribution $\mathcal{N}\left(0, I\right)$, the random variable $\|y\|_2^2$ follows the chi-squared distribution with $d$ degrees of freedom. 
    Hence, the left hand-side of \eqref{t1_1} is exactly the cumulative probability distribution of the $\chi_d^2$ distribution up to $r_i^2$.
    Thus, in order for \eqref{t1_1} to hold, the $r_i^2$ should be the $(1-\delta)$-quantile of $\chi_d^2$, i.e. 
    \begin{eqnarray}
      F_{\chi_{d}^2}(r_i^2) &\leq& 1 - \delta \nonumber \\
      \delta &\leq& 1 - F_{\chi_d^2}(r_i^2) \nonumber
    \end{eqnarray}
    where $F_{\chi_d^2}$ is the cumulative distribution function of the $\chi_d^2$. 

    In order for inequality \eqref{t1_0b} to hold, we have that
    \begin{eqnarray}
        \text{Vol}(S_i) &\leq& C e^{-\epsilon} \nonumber \\
        \iff \frac{\pi^{d/2}r_i^d}{\Gamma\left(\frac{d}{2}+1\right)} \sqrt{\text{det}(\Sigma_i)} &\leq& C e^{-\epsilon} \nonumber \\
        \iff \epsilon &\leq& \ln{\left(\frac{\Gamma\left(\frac{d}{2}+1\right)C}{\pi^{d/2}r_i^d\sqrt{\text{det}(\Sigma_i)}}\right)} \label{t1_2}
    \end{eqnarray}
    where $\Gamma(\cdot)$ is the Gamma function. 
\end{proof}
\newpage
\section{Proof of Theorem~\ref{thm: sufficient-condition-strongly-localized}}
\label{app: thm: sufficient-condition-strongly-localized}
\begin{proof}
    In order to show that $\mathcal{D}$ is $\left(C, \epsilon, \delta, \gamma\right)-$strongly localized, we need to show that for each class conditional $\mathcal{D}_i,\forall i \in [K],$ there is a set $S_i\subseteq \mathcal{X}$ such that the following hold 
    \begin{eqnarray}
        p_i(S_i) &\geq& 1 - \delta \label{t1_0a2} \\
        \text{Vol}(S_i) &\leq& C e^{-\epsilon} \label{t1_0b2} \\
        p_i\left(\bigcup_{j\neq i} S_{j}^{+2\epsilon}\right) &\leq& \gamma \label{t1_0c}
    \end{eqnarray}
    where the set $S_i^{+\epsilon} = \left\{x\in\mathcal{X}: \exists \hat{x} \in S_i \text{ with } \|x - \hat{x}\|_2 \leq \epsilon\right\}$ is the $\epsilon-$expansion of the set $S_i$ with respect to the $\ell_2-$distance. By assumption, we have that the conditions \eqref{t1_0a2}, \eqref{t1_0b2} hold. The last condition (inequality \eqref{t1_0c}) requires that the class conditionals are well-separated in the sense that $p_i\left(\bigcup_{j\neq i} S_{j}^{+2\epsilon}\right), \forall i \in [K],$ is upper bounded. To ensure that, we can apply the union bound to get
    \begin{eqnarray}
        p_i\left(\bigcup_{j\neq i} S_{j}^{+2\epsilon}\right) = \mathbb{P}_{x\sim D_i}\left(\bigcup_{j\neq i} S_j^{+2\epsilon}\right) \leq \sum_{j\neq i}\mathbb{P}_{x\sim D_i}\left(S_j^{+2\epsilon}\right) \label{t1_3}
    \end{eqnarray}
    We, next, bound the probability that a sample $x\sim D_i$ belongs to the set $S_j^{+2\epsilon}$
    \begin{eqnarray}
        \mathbb{P}_{x\sim D_i}\left(S_j^{+2\epsilon}\right) &=& \mathbb{P}_{x\sim D_i}\left(\exists s \in S_j: \|x - s\|_2 \leq 2 \epsilon \right)\nonumber\\
        &=& \mathbb{P}_{x\sim D_i}\left(\exists s \in \mathcal{X}: d_M(s, \mu_j) \leq r_j \text{ and } \|x - s\|_2 \leq 2 \epsilon \right)\label{t1_4}
    \end{eqnarray}
    Since the expression in \eqref{t1_4} involves both the Mahalanobis distance and the $\ell_2-$distance, we will express both conditions $d_M(s, \mu_j) \leq r_j$, $\|x - s\|_2 \leq 2 \epsilon$ in terms of the Mahalanobis distance $d_M(x, \mu_j)$. Using the triangle inequality for the Mahalanobis distance, we get
    \begin{eqnarray}
        d_M(x, \mu_j) &\leq& d_M(x, s) + d_M(s, \mu_j) \nonumber \\
        &\leq& d_M(x, s) + r_j \nonumber \\
        &=& \|\Sigma_j^{-1/2} (x-s)\|_2 + r_j \nonumber \\
        &\leq& \|\Sigma_j^{-1/2}\|_2 \|x - s\|_2 + r_j \nonumber \\
        &\leq& \frac{2\epsilon}{\sqrt{\lambda_{\min}(\Sigma_j)}} + r_j \label{t1_5}
    \end{eqnarray}
    where $\lambda_{\min}(\Sigma_j)$ is the smallest eigenvalue of $\Sigma_j$. 
    Hence, for $x \sim D_i$ the event $\mathcal{E}_j = \{\exists s \in \mathcal{X}: d_M(s, D_j) \leq r_j \text{ and } \|x - s\|_2 \leq 2 \epsilon \}$ is contained in the event $\{d_M^2(x, \mu_j) \leq R^2_j\},$ where $R_j = \frac{2\epsilon}{\sqrt{\lambda_{\min}(\Sigma_j)}} + r_j$. 
    Thus, we can bound the probability in inequality \eqref{t1_4} by
    \begin{eqnarray}
        \mathop{\mathbb{P}}\limits_{x\sim D_i}\left(S_j^{+2\epsilon}\right) &\leq& \mathop{\mathbb{P}}\limits_{x\sim D_i}\left(d_M^2(x, \mu_j) \leq R_j^2\right) \nonumber \\
        &=& \mathop{\mathbb{P}}\limits_{x\sim D_i}\left((x -\mu_j)^T \Sigma_j^{-1} (x-\mu_j) \leq R_j^2\right) \label{t1_5} 
    \end{eqnarray}
    Letting $y = \Sigma^{-1/2}_j (x - \mu_j)$, we get from \eqref{t1_5} that 
    \begin{eqnarray}
        \mathop{\mathbb{P}}\limits_{x\sim D_i}\left(S_j^{+2\epsilon}\right) &\leq& \mathop{\mathbb{P}}\limits_{x\sim D_i}\left(\|y\|^2_2 \leq R_j^2\right) \label{t1_6}
    \end{eqnarray}
    Notice that $x \sim \mathcal{D}_i$ and thus the distribution of the random variable $y$ will have mean $\Sigma_j^{-1/2} (\mu_i - \mu_j)$. Thus, the distribution of $\|y\|^2$ will be a non-central $\chi^2_d-$distribution with $d$ degrees of freedom and centrality parameter $w_{ij} = \|\Sigma_j^{-1/2} (\mu_i - \mu_j)\|^2_2$. 
    From inequality \eqref{t1_6}, we get that 
    \begin{eqnarray}
        \mathop{\mathbb{P}}\limits_{x\sim D_i}\left(S_j^{+2\epsilon}\right) \leq \mathop{\mathbb{P}}\limits_{x\sim D_i}\left(\|y\|^2_2 \leq R_j^2\right) = F_{\chi^2_d(w_{ij})}(R_j^2) \label{t1_7} 
    \end{eqnarray}
    where $F_{\chi^2_d(w_{ij})}(\cdot)$ is the cumulative density function of the $\chi^2_d(w_{ij})$ distribution. 
    
    Substituting inequality \eqref{t1_7} into \eqref{t1_3}, we obtain the following bound
    \begin{eqnarray}
        p_i\left(\bigcup_{j\neq i} S_{j}^{+2\epsilon}\right) &\leq& \sum\limits_{j\neq i}\mathop{\mathbb{P}}\limits_{x\sim D_i}\left(S_j^{+2\epsilon}\right)\nonumber \\
        &\leq& \sum\limits_{j\neq i} F_{\chi^2_d(w_{ij})}(R_j^2)\nonumber
    \end{eqnarray}
    and thus letting $\gamma = \sum\limits_{j\neq i} F_{\chi^2_d(w_{ij})}(R_j^2)$ finishes the proof.  
\end{proof}
\newpage
\section{Proof of Theorem~\ref{thm: certificate_robustness2}}
\label{app: thm: certificate_robustness2}
\begin{proof}
Consider a sample $(x, y)$ with positive margin
\begin{eqnarray}
    m(x) = \text{score}(x, E_y, \pi_y) - \max_{i \neq i_*} \text{score}(x, E_i, \pi_i) > 0\label{eq0_cert_rad} 
\end{eqnarray}
where $i_* = \max_{i\in[K]} \text{score}(x, E_i, \pi_i) = y.$

We want to show that the perturbed sample $x'$ has also positive margin 
\begin{eqnarray}
    m(x') = \text{score}(x', E_{i_*}, \pi_{i_*}) - \max_{i \neq i_*}\text{score}(x', E_i, \pi_{i}) 
\end{eqnarray}
Substituting the definition of $\text{score}$ and rearranging the terms, we have 
\begin{eqnarray}
    m(x') &=& - (x' - \mu_{i_*}) \Sigma_{i_*}^{-1} (x'-\mu_{i_*})^T - \log\left(\text{det}(\Sigma_{i_*})\right) + 2 \log(\pi_{i_*}) \nonumber \\
    && - \max_{i \neq {i_*}} \{- (x' - \mu_i) \Sigma_i^{-1} (x'-\mu_i)^T - \log\left(\text{det}(\Sigma_i)\right) + 2 \log(\pi_i)\} \nonumber \\
    &=&  \min_{i \neq {i_*}} \Big\{- (x' - \mu_{i_*}) \Sigma_{i_*}^{-1} (x' -\mu_{i_*})^T + (x' - \mu_i) \Sigma_i^{-1} (x'-\mu_i)^T \nonumber \\
    && \quad \quad \quad - \log\left(\frac{\text{det}(\Sigma_{i_*})}{\text{det}(\Sigma_i)}\right) + 2 \log\left(\frac{\pi_{i_*}}{\pi_i}\right)\Big\} \quad \quad\label{eq1_cert_rad}
\end{eqnarray}
For any $x, x' \in \cX$ and $\forall i \in [K],$ we have that 
\begin{eqnarray}
    (x' - \mu_i) \Sigma_i^{-1} (x' - \mu_i)^T &=& (x' - x) \Sigma^{-1}_i (x'-\mu_i)^T + (x - \mu_i) \Sigma_i^{-1} (x' - \mu_i)^T \nonumber \\
    &=& (x' - x) \Sigma^{-1}_i (x' - x)^T + 2 (x' - x)\Sigma_i^{-1} (x - \mu_i)^T \nonumber \\ 
    && + (x - \mu_i)\Sigma_i^{-1} (x-\mu_i)^T \label{eq2_cert_rad}
\end{eqnarray}
Using \eqref{eq2_cert_rad} into \eqref{eq1_cert_rad} for the terms $- (x' - \mu_{i_*}) \Sigma_{i_*}^{-1} (x'-\mu_{i_*})^T$ and $(x' - \mu_i) \Sigma_i^{-1} (x'-\mu_{i})^T$, we have that 
\begin{eqnarray}
    m(x') &=& \min_{i \neq {i_*}} \Big\{- (x' - x) \Sigma^{-1}_{i_*} (x' - x)^T - 2 (x' - x)\Sigma_{i_*}^{-1} (x - \mu_{i_*})^T - (x - \mu_{i_*})\Sigma_{i_*}^{-1} (x-\mu_{i_*})^T\nonumber \\
    &&\quad \quad \quad + (x' - x) \Sigma^{-1}_i (x' - x)^T + 2 (x' - x)\Sigma_i^{-1} (x - \mu_i)^T + (x - \mu_i)\Sigma_i^{-1} (x-\mu_i)^T \nonumber \\
    &&\quad \quad \quad - \log\left(\frac{\text{det}(\Sigma_{i_*})}{\text{det}(\Sigma_i)}\right) + 2 \log\left(\frac{\pi_{i_*}}{\pi_i}\right)\Big\} \nonumber \\
    &=& \min_{i \neq {i_*}} \Big\{(x' - x) (\Sigma^{-1}_i - \Sigma^{-1}_{i_*}) (x' - x)^T + 2 (x' - x) [\Sigma_i^{-1} (x - \mu_i)^T - \Sigma_{i_*}^{-1} (x - \mu_{i_*})^T] \nonumber \\
    && \quad \quad \quad - (x - \mu_{i_*})\Sigma_{i_*}^{-1} (x-\mu_{i_*})^T + (x - \mu_i)\Sigma_i^{-1} (x-\mu_i)^T \nonumber \\
    && \quad \quad \quad - \log\left(\frac{\text{det}(\Sigma_{i_*})}{\text{det}(\Sigma_i)}\right) + 2 \log\left(\frac{\pi_{i_*}}{\pi_i}\right)\Big\}  \label{eq2b_cert_rad} 
\end{eqnarray}
Given that the matrix $W_i = \Sigma^{-1}_i - \Sigma^{-1}_{i_*}$ is symmetric, as the difference of inverses of symmetric matrices, it holds that $(x' - x) W_i (x' - x)^T \geq \lambda_{\min}(W_i) \|x' - x\|^2_2$. Using that and Cauchy-Schwarz inequality, we get from \eqref{eq2b_cert_rad} 
\begin{eqnarray}
   m(x') &\geq& \min_{i \neq {i_*}} \Big\{\lambda_{\min}(W_i) \|x' - x\|_2^2 - 2 \|x' - x\|_2 \|\Sigma_i^{-1} (x - \mu_i)^T - \Sigma_{i_*}^{-1} (x - \mu_{i_*})^T\|_2 \nonumber \\
    && \quad \quad \quad - (x - \mu_{i_*})\Sigma_{i_*}^{-1} (x-\mu_{i_*})^T + (x - \mu_i)\Sigma_i^{-1} (x-\mu_i)^T \nonumber \\ 
    && \quad \quad \quad + \log\left(\frac{\text{det}(\Sigma_i)}{\text{det}(\Sigma_{i_*})}\right) + 2 \log\left(\frac{\pi_{i_*}}{\pi_i}\right)\Big\}\quad \quad \label{eq2c_cert_rad} 
\end{eqnarray}
Using the subadditivity of the min operator and the definition of margin $m(x)$ from \eqref{eq0_cert_rad}, we get
\begin{eqnarray}
    m(x') \geq m(x) + \min_{i \neq {i_*}}\lambda_{\min}(W_i) \|x' - x\|_2^2 -2 \|x' - x\|_2 \max_{i \neq {i_*}} \|\Sigma_{i_*}^{-1} (x - \mu_{i_*})^T - \Sigma_i^{-1} (x-\mu_i)^T\|_2 \nonumber
\end{eqnarray}
Letting for brevity $\lambda_{\min} = \min_{i \neq {i_*}}\{\lambda_{\min}(W_i)\}$ and $c_M = \max\limits_{i \neq {i_*}} \|\Sigma_{i_*}^{-1} (x - \mu_{i_*})^T - \Sigma_i^{-1} (x-\mu_i)^T\|_2,$ in order for $m(x')$ to be non-negative, it suffices that
\begin{eqnarray}
    m(x) + \lambda_{\min} \|x' - x\|_2^2 -2c_M \|x' - x\|_2 &>& 0 \nonumber
\end{eqnarray}
If $\lambda_{\min} < 0,$ then we have that
\begin{eqnarray}
    \|x' - x\|_2 &\leq& \frac{c_M - \sqrt{c_M^2 - m(x) \lambda_{\min}}}{\lambda_{\min}} = \frac{m(x)}{c_M +\sqrt{c_M^2 - m(x) \lambda_{\min}}}\label{eq_cert_radius_case_a_app} 
\end{eqnarray}
If $\lambda_{\min} \geq 0,$ then it suffices that 
\begin{eqnarray}
    \|x' - x\|_2 &\leq& \frac{m(x)}{2c_M} \label{eq_cert_radius_case_b_app}.
\end{eqnarray}
Combining the expressions in \eqref{eq_cert_radius_case_a_app} and \eqref{eq_cert_radius_case_b_app}, we get the final result.
\end{proof}
\newpage
\section{Proof of Generalization Bound}
\label{app: generalization_bound}
We, first, provide some necessary notation and preparatory Lemmas for bounding the associated quantities appearing in the generalization bound and then we provide the proof of Theorem~\ref{thm: generalization_bound} in Section~\ref{app: thm: generalization_bound}.
\subsection{Notation}
For a real matrix $A \in \mathbb{R}^{d\times d}$, we denote the minimum eigenvalue of $A$ with $\lambda_{\min}(A)$. We let $\lambda_{\min}^{\Sigma_i}$ denote the minimum over all eigenvalues of the covariance matrices $\Sigma_i, \forall i \in [K]$ and $\lambda_{\min}^{W_{i}}$ or for brevity $\lambda_{\min}$ the minimum over all eigenvalues of the matrices $W_i = \Sigma^{-1}_j - \Sigma^{-1}_{j_*}, \forall j \neq j_*$.
\subsection{Preparatory Lemmas}
\label{app: prep}
\begin{lemma}\label{lemma: mahalanobis_dist_bound}
    For a Gaussian marginal with true mean $\mu$ and covariance matrix $\Sigma$ and empirical mean and covariance $\hat{\mu}, \hat{\Sigma}$ satisfying $\|\hat{\mu} - \mu\|_{\Sigma^{-1}} \leq \mathcal{\Tilde{O}}\left(\sqrt{\frac{d}{n}}\right), \|\hat{\Sigma} - \Sigma\|_{op} \leq \mathcal{\Tilde{O}}\left(\sqrt{\frac{d}{n}}\right),$ we have that for any point $x\in\cX$ it holds that
    \begin{eqnarray}
        |d_{M}^2(x, \mu) - d_M^2(x, \hat{\mu})| &\leq& \mathcal{\Tilde{O}}\left(\sqrt{\frac{d}{n}}\right) \nonumber
    \end{eqnarray}
    where $d_M(x, \mu)$ corresponds to the Mahalanobis distance. 
\end{lemma}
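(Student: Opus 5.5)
The plan is to interpret $d_M^2(x,\mu)=(x-\mu)^T\Sigma^{-1}(x-\mu)$ and $d_M^2(x,\hat\mu)=(x-\hat\mu)^T\hat\Sigma^{-1}(x-\hat\mu)$. The statement leaves implicit which matrix the second distance uses. The lemma is later applied to compare true and learnt scores, so I take it to be the empirical one; if it is meant to be $\Sigma$, only the first step below is needed. I will split the difference into a mean-perturbation part and a covariance-perturbation part:
\begin{equation*}
d_M^2(x,\mu)-d_M^2(x,\hat\mu) = \underbrace{(x-\mu)^T\Sigma^{-1}(x-\mu)-(x-\hat\mu)^T\Sigma^{-1}(x-\hat\mu)}_{T_1} + \underbrace{(x-\hat\mu)^T(\Sigma^{-1}-\hat\Sigma^{-1})(x-\hat\mu)}_{T_2}.
\end{equation*}
I will bound each term separately, treating $\|x-\mu\|_2$, $\lambda_{\min}(\Sigma)$ and $\|\Sigma\|_{op}$ as constants absorbed into $\tilde{\mathcal{O}}$, as the paper does after Theorem~\ref{thm: generalization_bound}.

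For $T_1$, set $\Delta=\hat\mu-\mu$ and use the expansion already exploited in the proof of Theorem~\ref{thm: certificate_robustness2}:
\begin{equation*}
T_1 = 2(x-\mu)^T\Sigma^{-1}\Delta-\Delta^T\Sigma^{-1}\Delta .
\end{equation*}
Cauchy--Schwarz in the $\Sigma^{-1}$ inner product then gives
\begin{equation*}
|T_1|\le 2\,d_M(x,\mu)\,\|\Delta\|_{\Sigma^{-1}}+\|\Delta\|_{\Sigma^{-1}}^2 .
\end{equation*}
The first summand is $\tilde{\mathcal{O}}(\sqrt{d/n})$ by hypothesis. The second is $\tilde{\mathcal{O}}(d/n)$, which is of lower order once $n\gtrsim d$.

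For $T_2$, use the resolvent identity
\begin{equation*}
\Sigma^{-1}-\hat\Sigma^{-1}=\Sigma^{-1}(\hat\Sigma-\Sigma)\hat\Sigma^{-1}.
\end{equation*}
This gives
\begin{equation*}
|T_2|\le \|x-\hat\mu\|_2^2\,\|\Sigma^{-1}\|_{op}\,\|\hat\Sigma^{-1}\|_{op}\,\|\hat\Sigma-\Sigma\|_{op}.
\end{equation*}
To bound $\|x-\hat\mu\|_2$, I use
\begin{equation*}
\|x-\hat\mu\|_2\le\|x-\mu\|_2+\|\Sigma\|_{op}^{1/2}\|\Delta\|_{\Sigma^{-1}} = \|x-\mu\|_2+\tilde{\mathcal{O}}(\sqrt{d/n}),
\end{equation*}
which is $\mathcal{O}(1)$.

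The main obstacle is controlling $\|\hat\Sigma^{-1}\|_{op}$, since the empirical covariance could be near singular. I will use Weyl's inequality,
\begin{equation*}
\lambda_{\min}(\hat\Sigma)\ge\lambda_{\min}(\Sigma)-\|\hat\Sigma-\Sigma\|_{op}.
\end{equation*}
Once $n$ is large enough that $\tilde{\mathcal{O}}(\sqrt{d/n})\le\lambda_{\min}(\Sigma)/2$, this gives $\|\hat\Sigma^{-1}\|_{op}\le 2/\lambda_{\min}(\Sigma)$. This requirement is consistent with the restriction $\epsilon<\epsilon_{\min}$ involving $\lambda_{\min}^{\Sigma_i}$ in Theorem~\ref{thm: generalization_bound}. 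With it, $|T_2|\le \frac{2\|x-\hat\mu\|_2^2}{\lambda_{\min}(\Sigma)^2}\,\tilde{\mathcal{O}}(\sqrt{d/n})$.

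Combining the two bounds by the triangle inequality yields $|d_M^2(x,\mu)-d_M^2(x,\hat\mu)|\le\tilde{\mathcal{O}}(\sqrt{d/n})$. The hidden constants depend polynomially on $\|x-\mu\|_2$ and $1/\lambda_{\min}(\Sigma)$.
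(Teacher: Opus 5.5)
Your proof is correct and follows essentially the paper's route: you add and subtract $(x-\hat\mu)^T\Sigma^{-1}(x-\hat\mu)$, apply Cauchy--Schwarz in the $\Sigma^{-1}$ inner product to the mean term, and bound the covariance term in operator norm. You are more careful than the paper on $T_2$: you derive $\|\hat\Sigma^{-1}-\Sigma^{-1}\|_{op}=\tilde{\mathcal{O}}(\sqrt{d/n})$ from the resolvent identity and Weyl's inequality, where the paper only asserts it, and you correctly record $T_2$ as $\tilde{\mathcal{O}}(\sqrt{d/n})$ rather than the paper's unjustified $\tilde{\mathcal{O}}(d^{3/2}/n^{3/2})$.
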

\begin{proof}
We have that
\begin{eqnarray}
    |d_{M}^2(x, \mu) - d_M^2(x, \hat{\mu})| &=& \left|(x-\mu)^T \Sigma^{-1} (x-\mu) - (x-\hat{\mu})^T \hat{\Sigma}^{-1} (x-\hat{\mu})\right| \nonumber 
\end{eqnarray}
Adding and subtracting the term $(x-\hat{\mu})^T \Sigma^{-1} (x-\hat{\mu})$ and applying the triangle inequality, we get
\begin{eqnarray}
   |d_{M}^2(x, \mu) - d_M^2(x, \hat{\mu})| &=& \Big|(x-\mu)^T \Sigma^{-1} (x-\mu) - (x-\hat{\mu})^T \Sigma^{-1} (x-\hat{\mu}) \nonumber \\
   && \quad + (x-\hat{\mu})^T \Sigma^{-1} (x-\hat{\mu}) - (x-\hat{\mu})^T \hat{\Sigma}^{-1} (x-\hat{\mu})\Big| \nonumber\\
   &\leq& \underbrace{\left|(x-\mu)^T \Sigma^{-1} (x-\mu) - (x-\hat{\mu})^T \Sigma^{-1} (x-\hat{\mu})\right|}_{T_1} \nonumber \\
   && + \underbrace{\left|(x-\hat{\mu})^T \left(\hat{\Sigma}^{-1} - \Sigma^{-1}\right) (x-\hat{\mu})\right|}_{T_2} \label{lem_eq0}
\end{eqnarray}
We, next, bound the two terms $T_1, T_2$. By rearranging the terms in $T_1$, we get
\begin{eqnarray}
    T_1 &=& \left|(x-\mu)^T \Sigma^{-1} (x-\mu) - (x-\hat{\mu})^T \Sigma^{-1} (x-\hat{\mu})\right| \nonumber \\
    &=& \left|(\hat{\mu} - \mu)^T \Sigma^{-1} x -(x-\mu)^T \Sigma^{-1} \mu +(x-\hat{\mu})^T \Sigma^{-1} \hat{\mu}\right|\nonumber\\
    &=& \left|(\hat{\mu} - \mu)^T \Sigma^{-1} x-(x-\mu)^T \Sigma^{-1} \mu +(x-\hat{\mu})^T \Sigma^{-1} (\hat{\mu} - \mu) + (x-\hat{\mu})^T \Sigma^{-1} \mu \right| \nonumber\\
    &=& \left|2 \Delta\hat{\mu}^T \Sigma^{-1} (x-\mu) + \Delta\hat{\mu}^T \Sigma^{-1} \Delta\hat{\mu} \right| \nonumber \\
    &\leq& 2\|\Delta\hat{\mu}\|_{\Sigma^{-1}} \|x - \mu\|_{\Sigma^{-1}} + \|\Delta\hat{\mu}\|_{\Sigma^{-1}}^2 \nonumber 
\end{eqnarray}
Using the fact that $\|\Delta\hat{\mu}\|_{\Sigma^{-1}} = \mathcal{\Tilde{O}}\left(\sqrt{\frac{d}{n}}\right)$, we obtain
\begin{eqnarray}
    T_1 &\leq& \mathcal{\Tilde{O}}\left(\sqrt{\frac{d}{n}}\right) \label{lem_eq1}
\end{eqnarray}
We can bound the term $T_2$ using the inequality
\begin{eqnarray}
    T_2 = \left|(x-\hat{\mu})^T \left(\hat{\Sigma}^{-1} - \Sigma^{-1}\right) (x-\hat{\mu})\right| \leq \|x-\hat{\mu}\|^2_2 \|\hat{\Sigma}^{-1} - \Sigma^{-1}\|_{\text{op}} \nonumber
\end{eqnarray}
Adding and subtracting the true mean $\mu$ and utilizing the definition of $\Delta \hat{\mu} = \hat{\mu} - \mu$, we get 
\begin{eqnarray}
   T_2 &\leq&  \|x-\mu + \Delta \hat{\mu}\|^2_2 \|\hat{\Sigma}^{-1} - \Sigma^{-1}\|_{\text{op}} \nonumber \\
   &\stackrel{}{\leq}& 2\left(\|x - \mu\|^2_2 + \|\Delta\hat{\mu}\|_2^2\right) \|\hat{\Sigma}^{-1} - \Sigma^{-1}\|_{\text{op}} \nonumber
\end{eqnarray}
where at the last step we have applied the inequality $\|a+b\|^2 \leq 2\|a\|^2+2\|b\|^2$.
Using the fact that $\|\Delta\hat{\mu}\|_2 = \mathcal{\Tilde{O}}\left(\sqrt{\frac{d}{n}}\right), \|\hat{\Sigma}^{-1} - \Sigma^{-1}\|_{\text{op}} = \mathcal{\Tilde{O}}\left(\sqrt{\frac{d}{n}}\right),$ we obtain that
\begin{eqnarray}
    T_2 &\leq& \mathcal{\Tilde{O}}\left(\frac{d^{3/2}}{n^{3/2}}\right) \label{lem_eq2} 
\end{eqnarray}
Substituting inequalities \eqref{lem_eq1}, \eqref{lem_eq2} into \eqref{lem_eq0}, we have that
\begin{eqnarray}
  |d_{M}^2(x, \mu) - d_M^2(x, \hat{\mu})| &\leq& \mathcal{\Tilde{O}}\left(\sqrt{\frac{d}{n}}\right) + \mathcal{\Tilde{O}}\left(\frac{d^{3/2}}{n^{3/2}}\right) \leq \mathcal{\Tilde{O}}\left(\sqrt{\frac{d}{n}}\right) \nonumber
\end{eqnarray}
\end{proof}
\begin{lemma}\label{lemma: basic_matrix_inequality}
    For any symmetric positive definite matrices $A, B$ with $\lambda_{\min}(A) > \|B-A\|_{\text{op}}$, it holds that
    \begin{eqnarray}
        |\log \text{det}(A) - \log \text{det}(B)| &\leq& \frac{d \|B-A\|_{\text{op}}}{\lambda_{\min}(A)-\|B-A\|_{\text{op}}} \nonumber
    \end{eqnarray}
\end{lemma}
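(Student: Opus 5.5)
We prove Lemma~\ref{lemma: basic_matrix_inequality} by writing the log-determinant as a sum over eigenvalues and controlling each term with Weyl's inequality and the elementary bound $|\log a - \log b| \leq |a-b|/\min(a,b)$.

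\textbf{Step 1 (eigenvalue perturbation).} Let $\lambda_1(A)\geq\dots\geq\lambda_d(A)$ and $\lambda_1(B)\geq\dots\geq\lambda_d(B)$ be the ordered eigenvalues, and set $\Delta = \|B-A\|_{\text{op}}$. Weyl's inequality gives $|\lambda_k(B)-\lambda_k(A)|\leq \Delta$ for every $k\in[d]$. Combined with the hypothesis $\lambda_{\min}(A)>\Delta$, this yields
\[
\lambda_k(B)\;\geq\;\lambda_k(A)-\Delta\;\geq\;\lambda_{\min}(A)-\Delta\;>\;0 .
\]
So both spectra lie to the right of $\lambda_{\min}(A)-\Delta$.

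\textbf{Step 2 (log-determinant as a sum).} Since $A$ and $B$ are symmetric positive definite,
\[
\log\det(A)-\log\det(B)=\sum_{k=1}^d \bigl(\log\lambda_k(A)-\log\lambda_k(B)\bigr).
\]
By the triangle inequality it suffices to bound each summand in absolute value.

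\textbf{Step 3 (mean value theorem).} For $a,b>0$ there is $\xi$ between $a$ and $b$ with $|\log a-\log b| = |a-b|/\xi$. Hence $|\log a-\log b|\leq |a-b|/\min(a,b)$. Take $a=\lambda_k(A)$ and $b=\lambda_k(B)$. By Step 1, $\min(a,b)\geq \lambda_{\min}(A)-\Delta$ and $|a-b|\leq\Delta$, so each summand is at most $\Delta/(\lambda_{\min}(A)-\Delta)$. Summing the $d$ terms gives the claimed bound $d\,\Delta/(\lambda_{\min}(A)-\Delta)$.

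The only step needing care is pairing the eigenvalues through Weyl's inequality (in sorted order) so that the summands are matched one-to-one. It is also where the hypothesis $\lambda_{\min}(A)>\|B-A\|_{\text{op}}$ enters: it keeps every $\lambda_k(B)$, and hence the denominator, strictly positive. An equivalent route is to integrate $\frac{d}{dt}\log\det(A+t(B-A)) = \operatorname{tr}\bigl((A+t(B-A))^{-1}(B-A)\bigr)$ over $t\in[0,1]$. One bounds the trace by $d\,\|(A+t(B-A))^{-1}\|_{\text{op}}\,\Delta$, and Weyl's inequality gives $\|(A+t(B-A))^{-1}\|_{\text{op}}\leq 1/(\lambda_{\min}(A)-\Delta)$ uniformly in $t$.
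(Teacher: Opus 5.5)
Your proof is correct, but your main line of argument differs from the paper's. The paper's proof is the alternative you sketch at the end. It writes $\log\det B-\log\det A=\int_0^1\operatorname{tr}\bigl((A+t(B-A))^{-1}(B-A)\bigr)\,dt$ and bounds the integrand by $d\,\|(A+t(B-A))^{-1}\|_{\text{op}}\|B-A\|_{\text{op}}$. It then controls the inverse uniformly in $t$ via $\lambda_{\min}(A+t(B-A))\geq\lambda_{\min}(A)-\|B-A\|_{\text{op}}$. The written proof has a sign slip at that step, putting $+$ where $-$ is needed; your version is the right one.

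Your primary argument instead reduces everything to scalars. You write $\log\det$ as a sum of logarithms of eigenvalues, use the sorted form of Weyl's inequality to pair $\lambda_k(A)$ with $\lambda_k(B)$, and finish with the one-variable mean value theorem. This gains transparency: it isolates the per-eigenvalue loss $|\lambda_k(A)-\lambda_k(B)|/(\lambda_{\min}(A)-\Delta)$, with $\Delta=\|B-A\|_{\text{op}}$. That shows the factor $d\Delta$ is only a crude bound on $\sum_k|\lambda_k(A)-\lambda_k(B)|$, which Lidskii's theorem would sharpen to the nuclear norm $\|B-A\|_*$.

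The cost is that you need the full ordered Weyl inequality to match eigenvalues one-to-one. The path-integral route needs no matching at all, only the Rayleigh-quotient bound $\lambda_{\min}(A+tE)\geq\lambda_{\min}(A)-t\|E\|_{\text{op}}$. It is also the template that extends to matrix functionals that are not spectral sums, and to higher-order expansions of $\log\det$.
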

\begin{proof}
   From the trace representation, we have that for any two symmetric positive definite matrices $A, B$ it holds that
    \begin{eqnarray}
        |\log \text{det}(A) - \log \text{det}(B)| &=& \left|\text{Tr}\left(\int_{0}^{1} \left(A + t (B -A)\right)^{-1} (B-A) dt\right)\right| \nonumber \\
        &\leq& \frac{1}{\lambda_{\min}(A)-\|B-A\|_{\text{op}}} \|B-A\|_{\text{op}} \label{lem_logdet_eq1}
    \end{eqnarray}
    Using triangle inequality, we get
    \begin{eqnarray}
        |\log \text{det}(A) - \log \text{det}(B)| &\leq& \int_{0}^{1} \left|\text{Tr}\left(\left(A + t (B -A)\right)^{-1} (B-A)\right)\right|dt \label{log_eq_1}
    \end{eqnarray}
    From Holder's inequality for trace we have that for any two matrices $X, Y$ it holds that $|\text{Tr} (XY)| \leq d \|X\|_{\text{op}} \|Y\|_{\text{op}}$. Applying that for $X = \left(A + t (B -A)\right)^{-1}$ and $Y = B-A,$ we obtain 
    \begin{eqnarray}
       \left|\text{Tr}\left(\left(A + t (B -A)\right)^{-1} (B-A)\right)\right|&\leq& d \|B-A\|_{\text{op}} \left\|\left(A + t (B -A)\right)^{-1}\right\|_{\text{op}} \label{log_eq_2}
    \end{eqnarray}
    Substituting \eqref{log_eq_2} into \eqref{log_eq_1}, we obtain
    \begin{eqnarray}
        |\log \text{det}(A) - \log \text{det}(B)| &\leq& d \|B-A\|_{\text{op}} \int_{0}^{1} \left\|\left(A + t (B -A)\right)^{-1}\right\|_{\text{op}} dt \label{log_eq_3}
    \end{eqnarray}
    We, now, bound the operator norm $\left\|\left(A + t (B -A)\right)^{-1}\right\|_{\text{op}}$. Using the fact that $\left\|\left(A + t (B -A)\right)^{-1}\right\|_{\text{op}} \leq \frac{1}{\lambda_{\min}(A + t (B -A))}$ and Weyl's inequality we have that $\lambda_{\min}(A + t (B -A)) \geq \lambda_{\min}(A) + \|B-A\|_{\text{op}}, \forall t \in [0, 1]$. Thus, we get
    \begin{eqnarray}
        \left\|\left(A + t (B -A)\right)^{-1}\right\|_{\text{op}} &\leq& \frac{1}{\lambda_{\min}(A) - \|B-A\|_{\text{op}}} \label{log_eq_4}
    \end{eqnarray}
    Substituting \eqref{log_eq_4} into \eqref{log_eq_3}, we have
    \begin{eqnarray}
        |\log \text{det}(A) - \log \text{det}(B)|&\leq& d \|B-A\|_{\text{op}} \int_{0}^{1} \frac{dt}{\lambda_{\min}(A) - \|B-A\|_{\text{op}}} = \frac{d \|B-A\|_{\text{op}}}{\lambda_{\min}(A) - \|B-A\|_{\text{op}}} \nonumber
    \end{eqnarray} 
\end{proof}
\begin{lemma}\label{lemma: logdet_bound}
    If the number of samples observed from two Gaussian marginals $\mathcal{D}_{i_*}, \mathcal{D}_{i_2'}$ is at least $n > d$ and $\|\Sigma_{i_*}-\hat{\Sigma}_{i_*}\|_{\text{op}} < \lambda_{\min}(\Sigma_{i_*}), \|\Sigma_{i_*}-\hat{\Sigma}_{i_*}\|_{\text{op}} < \lambda_{\min}(\Sigma_{i_2'})$, the following holds 
    \begin{eqnarray}
        |\log \text{det}(\Sigma_{i_*}) - \log \text{det}(\hat{\Sigma}_{i_*})| + |\log \text{det}(\Sigma_{i_2'}) - \log \text{det}(\hat{\Sigma}_{i_2'})| &\leq& \mathcal{\Tilde{O}}\left(\frac{d^{3/2}}{n^{1/2}}\right)\nonumber
    \end{eqnarray}
\end{lemma}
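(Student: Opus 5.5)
The plan is to apply Lemma~\ref{lemma: basic_matrix_inequality} separately to each of the two classes $i_*$ and $i_2'$ and add the two resulting bounds. For class $i_*$ we take $A=\Sigma_{i_*}$ and $B=\hat{\Sigma}_{i_*}$. The hypothesis $\|\Sigma_{i_*}-\hat{\Sigma}_{i_*}\|_{\text{op}}<\lambda_{\min}(\Sigma_{i_*})$ is exactly what the lemma requires, so
\begin{eqnarray}
|\log\det(\Sigma_{i_*})-\log\det(\hat{\Sigma}_{i_*})| &\leq& \frac{d\,\|\hat{\Sigma}_{i_*}-\Sigma_{i_*}\|_{\text{op}}}{\lambda_{\min}(\Sigma_{i_*})-\|\hat{\Sigma}_{i_*}-\Sigma_{i_*}\|_{\text{op}}}. \nonumber
\end{eqnarray}
For class $i_2'$ we use the analogous bound with $A=\Sigma_{i_2'}$ and $B=\hat{\Sigma}_{i_2'}$. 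The stated hypothesis for this class is literally written in terms of $\hat{\Sigma}_{i_*}$; I will read it as the intended condition $\|\Sigma_{i_2'}-\hat{\Sigma}_{i_2'}\|_{\text{op}}<\lambda_{\min}(\Sigma_{i_2'})$.

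Next we control the numerators with the standard sample-covariance concentration bound for Gaussians. With $n>d$ samples, with probability at least $1-\delta$,
\begin{eqnarray}
\|\hat{\Sigma}_i-\Sigma_i\|_{\text{op}} &\leq& C\|\Sigma_i\|_{\text{op}}\left(\sqrt{\frac{d+\log(1/\delta)}{n}}+\frac{d+\log(1/\delta)}{n}\right), \nonumber
\end{eqnarray}
which is $\tilde{\mathcal{O}}(\sqrt{d/n})$. This is the same rate assumed in Lemma~\ref{lemma: mahalanobis_dist_bound}. A union bound over the two classes keeps the probability of failure of the same order.

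The denominators are the step I expect to be the main obstacle. We need $\lambda_{\min}(\Sigma_i)-\|\hat{\Sigma}_i-\Sigma_i\|_{\text{op}}$ to be bounded below by a constant, not merely to be positive. I would handle this by restricting to the regime where $n$ is large enough that the concentration bound gives $\|\hat{\Sigma}_i-\Sigma_i\|_{\text{op}}\leq \lambda_{\min}(\Sigma_i)/2$. In that regime each denominator is at least $\lambda_{\min}^{\Sigma_i}/2$, a problem-dependent constant that the $\tilde{\mathcal{O}}$ notation absorbs, as the paper already does with $\|x-\mu_i\|_2$.

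Each term is then at most $\frac{2d}{\lambda_{\min}^{\Sigma_i}}\tilde{\mathcal{O}}(\sqrt{d/n})=\tilde{\mathcal{O}}(d^{3/2}/n^{1/2})$. Summing the two terms gives the claimed bound.
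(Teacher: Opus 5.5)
Your proposal is correct and follows the paper's proof essentially step for step: apply Lemma~\ref{lemma: basic_matrix_inequality} to each of the two classes, insert the $\tilde{\mathcal{O}}(\sqrt{d/n})$ operator-norm concentration of the sample covariance, and sum the two bounds. The one difference is the denominator. The paper argues via the geometric series that $1/(1-\|\Sigma_{i_*}-\hat{\Sigma}_{i_*}\|_{\text{op}}/\lambda_{\min}(\Sigma_{i_*}))=\tilde{\mathcal{O}}(1)$, which tacitly needs that ratio bounded away from one. Your explicit restriction to $\|\hat{\Sigma}_i-\Sigma_i\|_{\text{op}}\le\lambda_{\min}(\Sigma_i)/2$ supplies exactly that condition, and your corrected reading of the hypothesis for $i_2'$ matches how the paper uses it.
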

\begin{proof}
    From Lemma~\ref{lemma: basic_matrix_inequality}, for $A = \Sigma_{i_*}, B = \hat{\Sigma}_{i_*}$ and $\epsilon = \|\Sigma_{i_*}-\hat{\Sigma}_{i_*}\|_{\text{op}}$, we get
    \begin{eqnarray}
        |\log \text{det}(\Sigma_{i_*}) - \log \text{det}(\hat{\Sigma}_{i_*})| &\leq& \frac{d \epsilon}{\lambda_{\min}(\Sigma_{i_*}) (1 - \epsilon/\lambda_{\min}(\Sigma_{i_*}))}\label{lem_logdet_eq2}
    \end{eqnarray}
    Given that $\epsilon = \|\Sigma_{i_*}-A\|_{\text{op}} = \mathcal{\Tilde{O}}\left(\sqrt{\frac{d}{n}}\right)$, we get for $\epsilon < \lambda_{\min}(\Sigma_{i_*})$ from the Taylor expansion of the function $f(x) = \frac{1}{1 - x} = \sum_{n=0}^{+\infty} x^n$ that
    \begin{eqnarray}
        \frac{1}{1 - \frac{\epsilon}{\lambda_{\min}(\Sigma_{i_*})}} \leq \sum_{n=0}^{+\infty} \left(\frac{\epsilon}{\lambda_{\min}(\Sigma_{i_*})}\right)^n = \mathcal{\Tilde{O}}\left(1\right) \label{eq_2a}
    \end{eqnarray}
    Substituting \eqref{eq_2a} into \eqref{lem_logdet_eq2}, we get that
    \begin{eqnarray}
        |\log \text{det}(\Sigma_{i_*}) - \log \text{det}(\hat{\Sigma}_{i_*})| &\leq& \mathcal{\Tilde{O}}\left(d\epsilon\right) = \mathcal{\Tilde{O}}\left(\frac{d^{3/2}}{n^{1/2}}\right) \label{log_eq_last1}
    \end{eqnarray}
    where at the last step we have used the fact that $\epsilon = \|\Sigma_{i_*}-A\|_{\text{op}} = \mathcal{\Tilde{O}}\left(\sqrt{\frac{d}{n}}\right)$.
    By applying the above steps similarly for $A = \Sigma_{i_2'}$ and $B = \hat{\Sigma}_{i_2'}$ and using the fact that $\|\Sigma_{i_2'} - \hat{\Sigma}_{i_2'}\|_{\text{op}} = \mathcal{\Tilde{O}}\left(\sqrt{\frac{d}{n}}\right)$, we obtain
    \begin{eqnarray}
      |\log \text{det}(\Sigma_{i_2'}) - \log \text{det}(\hat{\Sigma}_{i_2'})| &\leq& \mathcal{\Tilde{O}}\left(\frac{d^{3/2}}{n^{1/2}}\right) \label{log_eq_last2}  
    \end{eqnarray}
    Adding inequalities \eqref{log_eq_last1}, \eqref{log_eq_last2}, we get the final result
    \begin{eqnarray}
        |\log \text{det}(\Sigma_{i_*}) - \log \text{det}(\hat{\Sigma}_{i_*})| + |\log \text{det}(\Sigma_{i_2'}) - \log \text{det}(\hat{\Sigma}_{i_2'})| &\leq& \mathcal{\Tilde{O}}\left(\frac{d^{3/2}}{n^{1/2}}\right)\nonumber
    \end{eqnarray}
    \end{proof}

\begin{lemma}\label{lemma: logpi_bound}
    If the number of samples observed from two Gaussian marginals $\mathcal{D}_{i_*}, \mathcal{D}_{i_2'}$ is at least $n > d$ and the true priors satisfy $\pi_{i_*} ,\pi_{i_2'} > \pi_{\min} > 0$, then it holds that
    \begin{eqnarray}
        |\log\left(\hat{\pi}_{i_2'}\right) - \log\left(\pi_{i_2'}\right)| + |\log\left(\hat{\pi}_{i_*}\right) - \log\left(\pi_{i_*}\right)| &\leq& \mathcal{\Tilde{O}}\left(\sqrt{\frac{d}{n}}\right) \nonumber
    \end{eqnarray}
\end{lemma}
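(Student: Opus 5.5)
We need to bound the log-prior estimation error, where $\hat\pi_i = n_i/n$ is an empirical frequency. The plan is to combine a concentration bound on $|\hat\pi_i - \pi_i|$ with the local Lipschitz property of $\log$ on $[\pi_{\min}/2, 1]$, and to handle the two classes $i_*$ and $i_2'$ symmetrically.

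\textbf{Concentration of the frequencies.} The label counts are multinomial, so each $\hat\pi_i$ is an average of $n$ i.i.d. Bernoulli$(\pi_i)$ indicators. Hoeffding's inequality then gives, for any $\delta\in(0,1)$, with probability at least $1-\delta$ (after a union bound over the two classes),
\[
|\hat\pi_i - \pi_i| \leq \sqrt{\frac{\log(4/\delta)}{2n}}, \qquad i\in\{i_*, i_2'\}.
\]
Because $d\geq 1$, this is $\tilde{\mathcal{O}}(1/\sqrt{n}) \leq \tilde{\mathcal{O}}(\sqrt{d/n})$. Here $\tilde{\mathcal{O}}$ absorbs the $\log(1/\delta)$ factor, consistent with how the preceding lemmas treat the deviations of $\hat\mu$ and $\hat\Sigma$.

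\textbf{Passing to logarithms.} By the mean value theorem, $|\log\hat\pi_i - \log\pi_i| = |\hat\pi_i-\pi_i|/\xi_i$ for some $\xi_i$ between $\hat\pi_i$ and $\pi_i$. The main obstacle is the behaviour of $\log$ near $0$, since $\hat\pi_i$ could in principle be tiny or even zero. To rule this out I would use the assumption $\pi_i>\pi_{\min}$ together with $n$ large enough that the concentration radius $\sqrt{\log(4/\delta)/(2n)}$ is at most $\pi_{\min}/2$. On the good event this gives $\hat\pi_i \geq \pi_{\min}/2$, hence $\xi_i\geq \pi_{\min}/2$.

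This requirement is a lower bound $n \gtrsim \log(1/\delta)/\pi_{\min}^2$, analogous to the conditions $\|\Sigma-\hat\Sigma\|_{op}<\lambda_{\min}$ in Lemma~\ref{lemma: logdet_bound}. I would either add it to the hypothesis $n>d$ or absorb it into the $\tilde{\mathcal{O}}$ constants, which already depend on problem parameters such as $\pi_{\min}$. An alternative is the bound $\log(1+u)\le u$ applied in both directions, which yields the same constant $2/\pi_{\min}$.

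\textbf{Conclusion.} With $\xi_i\geq\pi_{\min}/2$, each term satisfies $|\log\hat\pi_i-\log\pi_i| \leq \frac{2}{\pi_{\min}}\sqrt{\frac{\log(4/\delta)}{2n}}$. Summing the two terms gives a bound of $\frac{4}{\pi_{\min}}\sqrt{\frac{\log(4/\delta)}{2n}} = \tilde{\mathcal{O}}(\sqrt{d/n})$, which is the claim.
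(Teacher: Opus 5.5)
Your proposal is correct and follows essentially the same route as the paper: Hoeffding concentration of $\hat\pi_i$ at rate $\tilde{\mathcal{O}}(1/\sqrt{n})\le\tilde{\mathcal{O}}(\sqrt{d/n})$, then a first-order linearization of $\log$ around $\pi_i>\pi_{\min}$, which is valid once $n$ is large enough. The paper linearizes via the Taylor series of $\log(1+x)$ with $x=(\hat\pi-\pi)/\pi$ and $|x|<1$; your mean-value argument with $\hat\pi_i\ge\pi_{\min}/2$ (i.e.\ $|x|\le 1/2$) is a slightly tighter version that gives the explicit constant $2/\pi_{\min}$, and your condition $n\gtrsim\log(1/\delta)/\pi_{\min}^2$ is the quantitative form of the paper's ``for $n$ sufficiently large.''
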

\begin{proof}
For a Gaussian component with true prior $\pi$ and empirical prior $\widehat{\pi}$, we write
\[
\log\left(\frac{\widehat{\pi}}{\pi}\right)
=
\log\left(1+\frac{\widehat{\pi}-\pi}{\pi}\right).
\]
Assume that $\pi$ is bounded away from zero, namely
$\pi\geq \pi_{\min}>0$. Since
\[
|\widehat{\pi}-\pi|
\leq
\mathcal{\Tilde{O}}\left(\sqrt{\frac dn}\right),
\]
we have, for $n$ sufficiently large,
\[
\left|\frac{\widehat{\pi}-\pi}{\pi}\right|
\leq
\frac{|\widehat{\pi}-\pi|}{\pi_{\min}}
<1.
\]
Therefore, using the Taylor expansion of $\log(1+x)$
\[
\log(1+x)
=
\sum_{i=1}^{+\infty}
\frac{(-1)^{i-1}}{i}x^i,
\qquad |x|<1,
\]
with
\[
x=\frac{\widehat{\pi}-\pi}{\pi},
\]
we obtain
\[
\left|
\log\left(\frac{\widehat{\pi}}{\pi}\right)
\right|
=
\left|
\log\left(1+\frac{\widehat{\pi}-\pi}{\pi}\right)
\right|
=
\mathcal{\Tilde{O}}\left(\sqrt{\frac dn}\right).
\]
Applying the aforementioned statement twice, for $\pi_{i_2'}$ and $\pi_{i_*}$, and summing we obtain the final result.
\end{proof}

\begin{lemma}\label{lemma: margin_difference}
Let the functions 
\begin{eqnarray}
   m(x) = d_M^2(x, \mu_{i_2'}) - d_M^2(x, \mu_{i_*}) +\log\left(\frac{\text{det}\left(\Sigma_{i_2'}\right)}{\text{det}\left(\Sigma_{i_*}\right)}\right) -2\log\left(\frac{\pi_{i_2'}}{\pi_{i_*}}\right), \nonumber \\
   \hat{m}(x) = d_M^2(x, \hat{\mu}_{i_2'}) - d_M^2(x, \hat{\mu}_{i_*}) +\log\left(\frac{\text{det}\left(\hat{\Sigma}_{i_2'}\right)}{ \text{det}\left(\hat{\Sigma}_{i_*}\right)}\right) -2\log\left(\frac{\hat{\pi_{i_2'}}}{\hat{\pi_{i_*}}}\right) \nonumber.
\end{eqnarray}
where $\hat{\mu}_j, \hat{\Sigma}_j$ is the empirical mean and covariance of the Gaussian marginal $\mathcal{D}_j, \forall j\in [K]$. 
If the number of samples observed from each marginal $\mathcal{D}_j, \forall j \in [K]$ is $n > d,$ then we have that
\begin{eqnarray}
    |\hat{m}(x) - m(x)| &\leq& \mathcal{\Tilde{O}}\left(\sqrt{\frac{d}{n}}\right)\nonumber  
\end{eqnarray}    
\end{lemma}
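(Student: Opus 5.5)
The plan is to split the margin difference along its three natural components and bound each with the preparatory lemmas. By the definitions of $m$ and $\hat m$ and the triangle inequality,
\begin{eqnarray}
|\hat m(x)-m(x)| &\leq& |d_M^2(x,\hat\mu_{i_2'})-d_M^2(x,\mu_{i_2'})| + |d_M^2(x,\hat\mu_{i_*})-d_M^2(x,\mu_{i_*})| \nonumber\\
&& + |\log\det(\Sigma_{i_*})-\log\det(\hat\Sigma_{i_*})| + |\log\det(\Sigma_{i_2'})-\log\det(\hat\Sigma_{i_2'})| \nonumber\\
&& + 2|\log\hat\pi_{i_2'}-\log\pi_{i_2'}| + 2|\log\hat\pi_{i_*}-\log\pi_{i_*}|. \nonumber
\end{eqnarray}
Here $d_M^2(x,\hat\mu_j)$ is understood with $\hat\Sigma_j$ in place of $\Sigma_j$, as in Lemma~\ref{lemma: mahalanobis_dist_bound}.

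First, I would record the standard Gaussian concentration facts that feed the lemmas. With $n>d$ samples per class, with probability at least $1-\delta$ (after a union bound over the $K$ classes) we have $\|\hat\mu_j-\mu_j\|_{\Sigma_j^{-1}}\le\tilde{\mathcal O}(\sqrt{d/n})$, $\|\hat\Sigma_j-\Sigma_j\|_{op}\le\tilde{\mathcal O}(\sqrt{d/n})$, and $|\hat\pi_j-\pi_j|\le\tilde{\mathcal O}(\sqrt{1/n})$. For $n$ large enough, the covariance deviation is also below $\lambda_{\min}(\Sigma_j)$, which gives the invertibility conditions required by Lemma~\ref{lemma: logdet_bound} and guarantees $\hat\Sigma_j^{-1}$ exists. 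A Neumann-series argument then gives $\|\hat\Sigma_j^{-1}-\Sigma_j^{-1}\|_{op}\le\tilde{\mathcal O}(\sqrt{d/n})$, which is the bound used inside the proof of Lemma~\ref{lemma: mahalanobis_dist_bound}.

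With these in hand, I would bound the terms one group at a time.
\begin{itemize}
\item Applying Lemma~\ref{lemma: mahalanobis_dist_bound} once to class $i_2'$ and once to class $i_*$ bounds the two Mahalanobis terms by $\tilde{\mathcal O}(\sqrt{d/n})$, treating $\|x-\mu_j\|$ as a constant absorbed into $\tilde{\mathcal O}$.
\item Lemma~\ref{lemma: logpi_bound} bounds the prior terms by $\tilde{\mathcal O}(\sqrt{d/n})$.
\item Lemma~\ref{lemma: logdet_bound} bounds the log-determinant terms, but only by $\tilde{\mathcal O}(d^{3/2}/n^{1/2})$.
\end{itemize}

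The log-determinant terms are the main obstacle, since the dimension factor from the trace bound in Lemma~\ref{lemma: basic_matrix_inequality} makes them larger than the claimed rate. To recover $\sqrt{d/n}$, I would first try to sharpen Lemma~\ref{lemma: basic_matrix_inequality}: bound $|\mathrm{Tr}((A+t(B-A))^{-1}(B-A))|$ through the whitened perturbation $\Sigma^{-1/2}(\hat\Sigma-\Sigma)\Sigma^{-1/2}$. Its trace equals the trace of the normalized sample covariance minus $d$, which concentrates at scale $\sqrt{d/n}$ rather than $d\sqrt{d/n}$, since it is a centered $\chi^2_{nd}/n$ variable. The quadratic remainder of the log-det expansion is controlled by the squared Frobenius norm, which is of order $d^2/n$, so I would need $n\gtrsim d^{3/2}$ for it to be dominated by $\sqrt{d/n}$. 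If this refinement fails, the fallback is to treat $d$ as a constant hidden in $\tilde{\mathcal O}$ and state the bound in $n$, matching the paper's convention in Theorem~\ref{thm: generalization_bound} that $\mathcal O(\cdot)$ hides constants. Summing the three groups then gives $|\hat m(x)-m(x)|\le\tilde{\mathcal O}(\sqrt{d/n})$.
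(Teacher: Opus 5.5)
Your decomposition is the paper's: the same triangle-inequality split into Mahalanobis, log-determinant and prior terms, bounded by Lemmas~\ref{lemma: mahalanobis_dist_bound}, \ref{lemma: logdet_bound} and \ref{lemma: logpi_bound}. You are also right about the obstacle. Lemma~\ref{lemma: logdet_bound} gives only $\tilde{\mathcal{O}}(d^{3/2}/n^{1/2})$. The paper gets past this only because its final display quotes that term as $\tilde{\mathcal{O}}(d^{3/2}/n^{3/2})$, which is inconsistent with the bound $\tilde{\mathcal{O}}(d^{3/2}/n^{1/2})$ it has just derived. So the argument through Lemma~\ref{lemma: logdet_bound}, in the paper as in your proposal, yields the stated rate only when $d$ is absorbed into the constant. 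Your fallback is the version that actually closes.

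Your refinement, however, cannot recover the dimension dependence under the hypothesis $n>d$. Take $E=\Sigma_j^{-1/2}\hat\Sigma_j\Sigma_j^{-1/2}-I$. The remainder $\tfrac12\|E\|_F^2\approx d^2/(2n)$ is dominated by $\sqrt{d/n}$ only when $n\gtrsim d^3$ (since $d^2/n\le\sqrt{d/n}$ exactly when $n\ge d^3$), not when $n\gtrsim d^{3/2}$.

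This is not slack in your estimate. Since $n\hat\Sigma_j$ is Wishart with $n-1$ degrees of freedom, the Bartlett decomposition writes $\log\det(I+E)$ as a sum of independent terms $\log(\chi^2_{n-i}/n)$, $i=1,\dots,d$. That sum has mean approximately $-d^2/(2n)$. So each individual log-determinant error really is of order $d^2/n$ when $d\ll n\ll d^3$, and no bound on the terms separately can give $\sqrt{d/n}$ there.

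If you want a dimension-honest statement, keep the two log-determinants together. The law of $\log\det\hat\Sigma_j-\log\det\Sigma_j$ does not depend on $\Sigma_j$. So when the two classes have equal sample sizes, the biases cancel in $\hat m-m$. What remains, $\log\det(I+E_{i_2'})-\log\det(I+E_{i_*})$, is a difference of two i.i.d.\ variables and concentrates at scale $\sqrt{d/n}$ (for $n\ge 2d$, say).
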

\begin{proof}
    Using the expressions of margins and applying the triangle inequality, we have that 
\begin{eqnarray}
    |\hat{m}(x) - m(x)| &=& \Big|d_M^2(x, \hat{\mu}_{i_2'}) - d_M^2(x, \mu_{i_2'}) - d_M^2(x, \hat{\mu}_{i_*}) + d_M^2(x, \mu_{i_*}) \nonumber \\
    &\quad& + \log\left(\frac{\text{det}\left(\hat{\Sigma}_{i_2'}\right)}{ \text{det}\left(\hat{\Sigma}_{i_*}\right)}\right) - \log\left(\frac{\text{det}\left(\Sigma_{i_2'}\right)}{\text{det}\left(\Sigma_{i_*}\right)}\right) - 2\log\left(\frac{\hat{\pi_{i_2'}}}{\hat{\pi_{i_*}}}\right) + 2\log\left(\frac{\pi_{i_2'}}{\pi_{i_*}}\right)\Big| \nonumber\\
    &\leq& \underbrace{\left|d_M^2(x, \hat{\mu}_{i_2'}) - d_M^2(x, \mu_{i_2'})\right| +\left|d_M^2(x, \hat{\mu}_{i_*}) - d_M^2(x, \mu_{i_*})\right|}_{T_1} \nonumber \\
    && + \underbrace{\left|\log\left(\text{det}\left(\hat{\Sigma}_{i_2'}\right)\right) - \log\left(\text{det}\left(\Sigma_{i_2'}\right)\right)\right| + \left|\log\left(\text{det}\left(\hat{\Sigma}_{i_*}\right)\right)- \log\left(\text{det}\left(\Sigma_{i_*}\right)\right)\right|}_{T_2} \nonumber \\
    && +2 [\underbrace{\left|\log\left(\hat{\pi_{i_2'}}\right) - \log\left(\pi_{i_2'}\right)\right|+ \left|\log\left(\hat{\pi_{i_*}}\right) - \log\left(\pi_{i_*}\right)\right|}_{T_3}] \label{lemma_margin_eq1}
\end{eqnarray}
We, next, bound the three terms $T_1, T_2, T_3$. Applying Lemma~\ref{lemma: mahalanobis_dist_bound} for the Gaussian marginals $\mathcal{D}_{i_2}, \mathcal{D}_{i_*}$, we have that
\begin{eqnarray}
    \left|d_M^2(x, \hat{\mu}_{i_2'}) - d_M^2(x, \mu_{i_2'})\right| &\leq& \mathcal{\Tilde{O}}\left(\sqrt{\frac{d}{n}}\right) \label{lemma_margin_eq2} \\
    \left|d_M^2(x, \hat{\mu}_{i_*}) - d_M^2(x, \mu_{i_*})\right| &\leq& \mathcal{\Tilde{O}}\left(\sqrt{\frac{d}{n}}\right) \label{lemma_margin_eq3}
\end{eqnarray}
Adding inequalities \eqref{lemma_margin_eq2}, \eqref{lemma_margin_eq3}, we get that
\begin{eqnarray}
    T_1 &\leq& \mathcal{\Tilde{O}}\left(\sqrt{\frac{d}{n}}\right) \label{lemma_margin_eq4}
\end{eqnarray}
We, next, bound the term $T_2$ by using Lemma \ref{lemma: logdet_bound}
\begin{eqnarray}
    T_2 &\leq& \mathcal{\Tilde{O}}\left(\frac{d^{3/2}}{n^{1/2}}\right) \label{lemma_margin_eq5}
\end{eqnarray}
For the term $T_3$, we use Lemma \ref{lemma: logpi_bound} and get 
\begin{eqnarray}
    T_3 &\leq& \mathcal{\Tilde{O}}\left(\sqrt{\frac{d}{n}}\right) \label{lemma_margin_eq6} 
\end{eqnarray}
Substituting inequalities \eqref{lemma_margin_eq4}, \eqref{lemma_margin_eq5}, \eqref{lemma_margin_eq6} into \eqref{lemma_margin_eq1}, we obtain
\begin{eqnarray}
   |\hat{m}(x) - m(x)| \leq \mathcal{\Tilde{O}}\left(\sqrt{\frac{d}{n}}\right) + \mathcal{\Tilde{O}}\left(\frac{d^{3/2}}{n^{3/2}}\right) + 2 \mathcal{\Tilde{O}}\left(\sqrt{\frac{d}{n}}\right)\leq \mathcal{\Tilde{O}}\left(\sqrt{\frac{d}{n}}\right) \nonumber
\end{eqnarray}
\end{proof}

\begin{lemma}\label{lemma: lambda_difference}
If the number of samples observed from each marginal $\mathcal{D}_j, \forall j \in [K],$ is at least $n  \geq d,$ then we have that
\begin{eqnarray}
    \left|\lambda_{\min}-\hat\lambda_{\min}\right| &\leq& \mathcal{\Tilde{O}}\left(\sqrt{\frac{d}{n}}\right) \nonumber 
\end{eqnarray} 
where $\lambda_{\min}$ is the minimum over all eigenvalues of $W_i = \Sigma^{-1}_i - \Sigma^{-1}_{i_*}, \forall i \neq i_*$. 
\end{lemma}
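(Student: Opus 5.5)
The approach is to reduce the claim to perturbation bounds on inverse covariance matrices and then apply Weyl's inequality to each $W_i$.

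\textbf{Step 1: the estimated matrices.} Write $\hat W_i = \hat\Sigma_i^{-1} - \hat\Sigma_{i_*}^{-1}$, so that
\[
\hat W_i - W_i = (\hat\Sigma_i^{-1} - \Sigma_i^{-1}) - (\hat\Sigma_{i_*}^{-1} - \Sigma_{i_*}^{-1}).
\]
By the triangle inequality, $\|\hat W_i - W_i\|_{\text{op}} \leq \|\hat\Sigma_i^{-1} - \Sigma_i^{-1}\|_{\text{op}} + \|\hat\Sigma_{i_*}^{-1} - \Sigma_{i_*}^{-1}\|_{\text{op}}$.

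\textbf{Step 2: inverse covariance perturbation.} For each term I use the resolvent identity
\[
\hat\Sigma^{-1} - \Sigma^{-1} = \hat\Sigma^{-1}(\Sigma - \hat\Sigma)\Sigma^{-1},
\]
which gives
\[
\|\hat\Sigma^{-1} - \Sigma^{-1}\|_{\text{op}} \leq \frac{\|\hat\Sigma - \Sigma\|_{\text{op}}}{\lambda_{\min}(\Sigma)\,\lambda_{\min}(\hat\Sigma)}.
\]
I will invoke the standard Gaussian covariance concentration $\|\hat\Sigma - \Sigma\|_{\text{op}} \leq \tilde{\mathcal O}(\sqrt{d/n})$ for $n \geq d$. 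This is the same rate assumed in Lemma~\ref{lemma: mahalanobis_dist_bound}, and it holds with the high probability implicit in $\tilde{\mathcal O}$. Weyl's inequality then gives $\lambda_{\min}(\hat\Sigma) \geq \lambda_{\min}(\Sigma) - \|\hat\Sigma - \Sigma\|_{\text{op}}$. Once $n$ is large enough that this deviation is at most $\lambda_{\min}(\Sigma)/2$, which is the same regime as in Lemma~\ref{lemma: logdet_bound}, the denominator is bounded below by a constant. Hence $\|\hat\Sigma^{-1} - \Sigma^{-1}\|_{\text{op}} \leq \tilde{\mathcal O}(\sqrt{d/n})$, with constants depending on $\lambda_{\min}^{\Sigma_i}$.

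\textbf{Step 3: eigenvalues and minimum over $i$.} Weyl's inequality applied to the symmetric matrices $W_i$ and $\hat W_i$ gives
\[
|\lambda_{\min}(\hat W_i) - \lambda_{\min}(W_i)| \leq \|\hat W_i - W_i\|_{\text{op}} \leq \tilde{\mathcal O}(\sqrt{d/n}).
\]
Since $|\min_i a_i - \min_i b_i| \leq \max_i |a_i - b_i|$, taking the minimum over $i \neq i_*$ transfers the bound to $|\lambda_{\min} - \hat\lambda_{\min}|$. A union bound over the $K$ classes costs only a logarithmic factor, which $\tilde{\mathcal O}$ absorbs.

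\textbf{Main obstacle.} The delicate point is that $i_*$ is defined via the scores, so the learned classifier might select a different top class. I will handle this as in Lemma~\ref{lemma: margin_difference}: fix $i_*$ as the index determined by the true parameters and compare $W_i$ and $\hat W_i$ for this same index set. Alternatively, since there are only $K$ choices, I can bound the difference uniformly over all pairs $(i,j)$ of $\hat\Sigma_j^{-1} - \hat\Sigma_i^{-1}$ versus $\Sigma_j^{-1} - \Sigma_i^{-1}$. Either way the same $\tilde{\mathcal O}(\sqrt{d/n})$ rate follows. The other point needing care is making the requirement "$n$ large enough so that $\|\hat\Sigma - \Sigma\|_{\text{op}} < \lambda_{\min}(\Sigma)/2$" explicit, mirroring the hypothesis of Lemma~\ref{lemma: logdet_bound}.
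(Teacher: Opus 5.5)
Your proposal is correct and follows essentially the same route as the paper: reduce via $|\min_i a_i-\min_i b_i|\le\max_i|a_i-b_i|$ and Weyl's inequality to $\|\hat W_i-W_i\|_{\mathrm{op}}$, split this by the triangle inequality into two inverse-covariance perturbations, and bound those using $\|\hat\Sigma-\Sigma\|_{\mathrm{op}}=\tilde{\mathcal{O}}(\sqrt{d/n})$. Your resolvent-identity step, together with the Weyl lower bound on $\lambda_{\min}(\hat\Sigma)$ under the explicit requirement $\|\hat\Sigma-\Sigma\|_{\mathrm{op}}\le\lambda_{\min}(\Sigma)/2$, is slightly more careful than the paper's bound $\|\Sigma_i^{-1}-\hat\Sigma_i^{-1}\|_{\mathrm{op}}\le\|\Sigma_i^{-1}\|_{\mathrm{op}}^2\|\Sigma_i-\hat\Sigma_i\|_{\mathrm{op}}$, which omits the factor $(1-\|\Sigma_i^{-1}\|_{\mathrm{op}}\|\Sigma_i-\hat\Sigma_i\|_{\mathrm{op}})^{-1}$ and needs exactly such a largeness condition on $n$ beyond the stated $n\ge d$.
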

\begin{proof}
    Using the fact that for any two real sequences $a_i, b_i$ it holds that $|\min_i a_i - \min_i b_i| \leq \max_i |a_i - b_i|, $ we have that
    \begin{eqnarray}
        \bigl|\lambda_{\min}-\hat\lambda_{\min}\bigr| &=& \Bigl|\min_{i\in[K]}\;\lambda_{\min}\bigl(W_i\bigr)\;-\;\min_{i\in[K]}\;\lambda_{\min}\bigl(\hat W_i\bigr)\Bigr|\nonumber \\
        &\leq&\;\max_{i\in[K]}\;\Bigl|\lambda_{\min}\bigl(W_i\bigr)-\lambda_{\min}\bigl(\hat W_i\bigr)\Bigr|\\
        \end{eqnarray}
    Using Weyl's and triangle inequality, we get
        \begin{eqnarray}
            \bigl|\lambda_{\min}-\hat\lambda_{\min}\bigr| &\overset{}{\leq}&
        \max_i\;\bigl\|\,W_i-\hat W_i\bigr\|_{\mathrm{op}}\\
        \;&=&\;
        \max_i\;\bigl\|\,(\Sigma_i^{-1}-\Sigma_*^{-1})
                -(\,\hat\Sigma_i^{-1}-\hat\Sigma_*^{-1})\bigr\|_{\mathrm{op}}
        \\
        &\leq&
        \max_i\;\Bigl(\,
          \|\Sigma_i^{-1}-\hat\Sigma_i^{-1}\|_{\mathrm{op}}
         +\|\Sigma_*^{-1}-\hat\Sigma_*^{-1}\|_{\mathrm{op}}
        \Bigr)
        \\
        &\leq&
        \max_i\;\Bigl(\,
          \|\Sigma_i^{-1}\|_{\mathrm{op}}^2\,\|\Sigma_i-\hat\Sigma_i\|_{\mathrm{op}}
         +\|\Sigma_*^{-1}\|_{\mathrm{op}}^2\,\|\Sigma_*-\hat\Sigma_*\|_{\mathrm{op}}
        \Bigr)
        \\
        &=\;&
        \mathcal{\Tilde{O}}\!\Bigl(\sqrt{d/n}\Bigr)
        \end{eqnarray}
        where at the last step we used the fact that $\|\Sigma_i - \hat\Sigma_i\|_{\mathrm{op}} = \mathcal{\Tilde{O}}\left(\sqrt{\frac{d}{n}}\right)$.
\end{proof}

\begin{lemma}\label{lemma: lambda_inverse_difference}
    If the number of samples observed from each marginal $\mathcal{D}_j, \forall j \in [K],$ is at least $n > d$ and $|\hat{\lambda}_{\min} - \lambda_{\min}| < \lambda_{\min},$ then it holds that
    \begin{eqnarray}
        \left|\frac{1}{\hat{\lambda}_{\min}} - \frac{1}{\lambda_{\min}}\right| &\leq& \mathcal{\Tilde{O}}\left(\sqrt{\frac{d}{n}}\right) \nonumber
    \end{eqnarray}
    where $\lambda_{\min}$ is the minimum over all eigenvalues of $W_i = \Sigma^{-1}_i - \Sigma^{-1}_{i_*}, \forall i \neq i_*$.
\end{lemma}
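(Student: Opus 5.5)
The bound follows from writing the difference of reciprocals as a single fraction and then using Lemma~\ref{lemma: lambda_difference} for the numerator and a lower bound on the denominator:
\begin{equation*}
\left|\frac{1}{\hat{\lambda}_{\min}} - \frac{1}{\lambda_{\min}}\right| = \frac{|\lambda_{\min} - \hat{\lambda}_{\min}|}{|\hat{\lambda}_{\min}|\,|\lambda_{\min}|}.
\end{equation*}
Since $n > d$, Lemma~\ref{lemma: lambda_difference} already gives $|\lambda_{\min} - \hat{\lambda}_{\min}| \leq \mathcal{\Tilde{O}}(\sqrt{d/n})$. It therefore remains to show that the denominator is bounded below by a positive constant independent of $n$ and $d$, up to the constants hidden in the $\mathcal{\Tilde{O}}$ notation.

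For the denominator I will use the hypothesis $|\hat{\lambda}_{\min} - \lambda_{\min}| < \lambda_{\min}$. Implicitly this requires $\lambda_{\min} > 0$, which is the regime the statement is really about. By the reverse triangle inequality it gives
\begin{equation*}
\hat{\lambda}_{\min} \geq \lambda_{\min} - |\hat{\lambda}_{\min} - \lambda_{\min}| > 0,
\end{equation*}
so $\hat{\lambda}_{\min}$ is positive and no division by zero occurs. This strict inequality alone only rules out zero; it does not keep $\hat{\lambda}_{\min}$ away from zero uniformly. To get a quantitative bound I will use Lemma~\ref{lemma: lambda_difference} again. For $n$ large enough that its $\mathcal{\Tilde{O}}(\sqrt{d/n})$ bound is at most $\lambda_{\min}/2$, we get $\hat{\lambda}_{\min} \geq \lambda_{\min}/2$.

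Combining the two estimates,
\begin{equation*}
\left|\frac{1}{\hat{\lambda}_{\min}} - \frac{1}{\lambda_{\min}}\right| \leq \frac{2}{\lambda_{\min}^2}\,\mathcal{\Tilde{O}}\left(\sqrt{\frac{d}{n}}\right) = \mathcal{\Tilde{O}}\left(\sqrt{\frac{d}{n}}\right),
\end{equation*}
where the factor $2/\lambda_{\min}^2$ is absorbed into the $\mathcal{\Tilde{O}}$ constant, in the same way the paper treats $\lambda_{\min}(\Sigma_i)$ as a constant elsewhere.

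The main obstacle is this uniform lower bound on $\hat{\lambda}_{\min}$. The stated hypothesis allows $\hat{\lambda}_{\min}$ to be arbitrarily close to zero. The factor-$1/2$ device above requires $n \gtrsim d/\lambda_{\min}^2$ up to logarithms, which is consistent with the paper's convention that $\mathcal{\Tilde{O}}$ statements hold for sufficiently large $n$. An alternative, matching the style of Lemma~\ref{lemma: logdet_bound}, writes $\hat{\lambda}_{\min} = \lambda_{\min}(1 - t)$ with $|t| < 1$, expands $1/(1-t)$ as a geometric series, and bounds it by $\mathcal{O}(1)$ once $|t| = \mathcal{\Tilde{O}}(\sqrt{d/n})$ is small. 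Both routes rest on the same large-$n$ assumption.
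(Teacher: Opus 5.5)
Your proof is correct and follows essentially the same route as the paper: write the difference of reciprocals as $|\hat{\lambda}_{\min}-\lambda_{\min}|/(\lambda_{\min}\hat{\lambda}_{\min})$, control the numerator with Lemma~\ref{lemma: lambda_difference}, and keep the denominator away from zero. For the last step the paper expands $1/(1-\epsilon_\lambda/\lambda_{\min})$ as a geometric series and declares it $\mathcal{\Tilde{O}}(1)$, which tacitly relies on the same large-$n$ smallness of $\epsilon_\lambda$ that you make explicit with the $\hat{\lambda}_{\min}\geq\lambda_{\min}/2$ device (your version also avoids the paper's sign slip in that denominator when $\epsilon_\lambda<0$).
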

\begin{proof}
    Let $\epsilon_{\lambda} = \hat{\lambda}_{\min} - \lambda_{\min}$ denote the error between the estimate and the true minimum eigenvalue. We have that
    \begin{eqnarray}
        \left|\frac{1}{\hat{\lambda}_{\min}} - \frac{1}{\lambda_{\min}}\right| &=& \left|\frac{\lambda_{\min} - \hat{\lambda}_{\min}}{\lambda_{\min}\hat{\lambda}_{\min}}\right| 
        = \left|\frac{\epsilon_{\lambda}}{\lambda_{\min}\hat{\lambda}_{\min}}\right| 
        = \frac{\left|\epsilon_{\lambda}\right|}{\lambda_{\min}\left|\hat{\lambda}_{\min} - \lambda_{\min} + \lambda_{\min}\right|}\nonumber\\
        &\leq& \frac{\left|\epsilon_{\lambda}\right|}{\lambda_{\min}\left(\lambda_{\min} - |\epsilon_{\lambda}\right|)} \nonumber\\
        &\leq& \frac{\left|\epsilon_{\lambda}\right|}{\lambda^2_{\min}(1- \frac{\epsilon_{\lambda}}{\lambda_{\min}})} \label{eq_1_eps_lambda}
    \end{eqnarray}
    In order to bound the term $\frac{1}{1 - \frac{\epsilon_{\lambda}}{\lambda_{\min}}}$, we use the Taylor expansion of the function $f(x) = \frac{1}{1-x} = \sum_{n=0}^{+\infty} x^n$ and get for $\epsilon_{\lambda} < \lambda_{\min}$ that
    \begin{eqnarray}
        \frac{1}{1 - \frac{\epsilon_{\lambda}}{\lambda_{\min}}} \leq \sum_{n=0}^{+\infty} \left(\frac{\epsilon_\lambda}{\lambda_{\min}}\right)^n = \mathcal{\Tilde{O}}\left(1\right)\label{eq_2_eps_lambda}
    \end{eqnarray}
    Substituting inequality \eqref{eq_2_eps_lambda} into \eqref{eq_1_eps_lambda} and using from Lemma~\ref{lemma: lambda_difference} the fact that $\epsilon_\lambda = \mathcal{\Tilde{O}}\left(\sqrt{\frac{d}{n}}\right)$, we obtain
    \begin{eqnarray}
        \left|\frac{1}{\hat{\lambda}_{\min}} - \frac{1}{\lambda_{\min}}\right| &\leq& \mathcal{\Tilde{O}}\left(\sqrt{\frac{d}{n}}\right) \nonumber
    \end{eqnarray}
\end{proof}
\begin{lemma}\label{lemma: c_M_dif}
For a sample $(x, y),$ let 
\begin{eqnarray}
   c_M(x) = \max\limits_{i \neq y} \|\Sigma_y^{-1} (x - \mu_y)^T - \Sigma_i^{-1} (x-\mu_i)^T\|_2 \nonumber \\
   \hat{c}_M(x) = \max\limits_{i \neq y} \|\hat{\Sigma}_y^{-1} (x - \hat{\mu}_y)^T - \hat{\Sigma}_i^{-1} (x-\hat{\mu}_i)^T\|_2 \nonumber
\end{eqnarray}
and $\hat{\mu}_j, \hat{\Sigma}_j$ the empirical mean and covariance of the Gaussian marginal $\mathcal{D}_j, \forall j\in [K]$. 
If the number of samples observed from each marginal $\mathcal{D}_j, \forall j \in [K]$ is $n > d$ and $\|\Sigma_j - \hat{\Sigma}_j\|_{\text{op}} < \frac{1}{\|\Sigma_i^{-1} \|_{\text{op}}},$ then we have that
\begin{eqnarray}
    |c_M - \hat{c}_M| &\leq& \mathcal{\Tilde{O}}\left(\sqrt{\frac{d}{n}}\right) \nonumber 
\end{eqnarray}    
\end{lemma}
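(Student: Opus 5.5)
We reduce the difference of maxima to a uniform bound over the competing classes, and then perturb each vector inside the norm. Since $|\max_i a_i - \max_i b_i| \le \max_i |a_i - b_i|$ and, by the reverse triangle inequality, $\bigl|\|u\|_2 - \|v\|_2\bigr| \le \|u-v\|_2$, it suffices to show that for every fixed $i \neq y$,
\begin{equation*}
\bigl\|\Sigma_y^{-1}(x-\mu_y) - \hat\Sigma_y^{-1}(x-\hat\mu_y) - \Sigma_i^{-1}(x-\mu_i) + \hat\Sigma_i^{-1}(x-\hat\mu_i)\bigr\|_2 \le \tilde{\mathcal{O}}\bigl(\sqrt{d/n}\bigr).
\end{equation*}
By the triangle inequality this reduces to bounding, for a single component $j \in \{y, i\}$, the quantity $D_j := \|\Sigma_j^{-1}(x-\mu_j) - \hat\Sigma_j^{-1}(x-\hat\mu_j)\|_2$, uniformly in $j$. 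Because $K$ is finite, the maximum over $i$ only costs a constant.

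For a single component, add and subtract $\hat\Sigma_j^{-1}(x-\mu_j)$:
\begin{equation*}
D_j \le \|\Sigma_j^{-1} - \hat\Sigma_j^{-1}\|_{\text{op}}\,\|x-\mu_j\|_2 + \|\hat\Sigma_j^{-1}\|_{\text{op}}\,\|\hat\mu_j - \mu_j\|_2 .
\end{equation*}
For the first factor, use the resolvent identity $\Sigma_j^{-1} - \hat\Sigma_j^{-1} = \Sigma_j^{-1}(\hat\Sigma_j - \Sigma_j)\hat\Sigma_j^{-1}$. This gives $\|\Sigma_j^{-1} - \hat\Sigma_j^{-1}\|_{\text{op}} \le \|\Sigma_j^{-1}\|_{\text{op}}\,\|\hat\Sigma_j^{-1}\|_{\text{op}}\,\|\Sigma_j - \hat\Sigma_j\|_{\text{op}}$. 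Next, control $\|\hat\Sigma_j^{-1}\|_{\text{op}}$ by Weyl's inequality: $\lambda_{\min}(\hat\Sigma_j) \ge \lambda_{\min}(\Sigma_j) - \|\Sigma_j - \hat\Sigma_j\|_{\text{op}}$. This is exactly where the hypothesis $\|\Sigma_j - \hat\Sigma_j\|_{\text{op}} < 1/\|\Sigma_j^{-1}\|_{\text{op}} = \lambda_{\min}(\Sigma_j)$ enters; I read the stated index $i$ as $j$. With it,
\begin{equation*}
\|\hat\Sigma_j^{-1}\|_{\text{op}} \le \frac{\|\Sigma_j^{-1}\|_{\text{op}}}{1 - \|\Sigma_j^{-1}\|_{\text{op}}\|\Sigma_j - \hat\Sigma_j\|_{\text{op}}},
\end{equation*}
which is $\tilde{\mathcal{O}}(1)$ once the deviation is bounded away from $\lambda_{\min}(\Sigma_j)$. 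This mirrors the geometric-series argument in Lemma~\ref{lemma: lambda_inverse_difference}.

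Finally, we plug in the standard Gaussian concentration rates already used in Lemmas~\ref{lemma: mahalanobis_dist_bound} and \ref{lemma: lambda_difference}: with high probability, for $n > d$, $\|\hat\mu_j - \mu_j\|_2 = \tilde{\mathcal{O}}(\sqrt{d/n})$ and $\|\hat\Sigma_j - \Sigma_j\|_{\text{op}} = \tilde{\mathcal{O}}(\sqrt{d/n})$. A union bound over the $K$ components makes these hold simultaneously. The quantities $\|x-\mu_j\|_2$ and $\|\Sigma_j^{-1}\|_{\text{op}}$ are treated as constants hidden in $\tilde{\mathcal{O}}$, following the convention stated after Theorem~\ref{thm: generalization_bound}. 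This gives $D_j = \tilde{\mathcal{O}}(\sqrt{d/n})$, hence $|c_M - \hat c_M| \le 2\max_j D_j = \tilde{\mathcal{O}}(\sqrt{d/n})$.

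The main obstacle is keeping $\|\hat\Sigma_j^{-1}\|_{\text{op}}$ uniformly bounded, i.e.\ making sure the factor $1/(1 - \|\Sigma_j^{-1}\|_{\text{op}}\|\Sigma_j-\hat\Sigma_j\|_{\text{op}})$ is $\mathcal{O}(1)$. The hypothesis as stated only gives strict inequality, which by itself does not bound this factor. I would handle it either by taking $n$ large enough that the deviation is at most $\lambda_{\min}(\Sigma_j)/2$, or by absorbing the factor into $\tilde{\mathcal{O}}$ as the preceding lemmas do. A secondary point is that the row-vector transposes in the statement are only notational: the bound is identical for the column-vector form.
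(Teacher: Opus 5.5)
Your proof is correct and follows the same route as the paper: you reduce $|c_M-\hat c_M|$ via $|\max_i a_i-\max_i b_i|\le\max_i|a_i-b_i|$ and the reverse triangle inequality to per-component errors $\|\Sigma_j^{-1}(x-\mu_j)-\hat\Sigma_j^{-1}(x-\hat\mu_j)\|_2$, split these into a mean-error term and an inverse-covariance-error term, and plug in Gaussian concentration; the paper's three-term split is yours with $\hat\Sigma_j^{-1}$ written as $\Sigma_j^{-1}+(\hat\Sigma_j^{-1}-\Sigma_j^{-1})$. Your handling of $\|\Sigma_j^{-1}-\hat\Sigma_j^{-1}\|_{\text{op}}$ via the resolvent identity and Weyl's inequality is more careful than the paper's, which asserts $\|\Sigma_j^{-1}-\hat\Sigma_j^{-1}\|_{\text{op}}\le\|\Sigma_j^{-1}\|_{\text{op}}^2\|\Sigma_j-\hat\Sigma_j\|_{\text{op}}$ and so drops the factor $1/(1-\|\Sigma_j^{-1}\|_{\text{op}}\|\Sigma_j-\hat\Sigma_j\|_{\text{op}})$ that you correctly keep and control.
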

\begin{proof}
    Let $\alpha_i =  \Sigma_i^{-1} (x-\mu_i)^T, \forall i \in [K]$ and $\hat{\alpha}_i =  \hat{\Sigma}_i^{-1} (x-\hat{\mu}_i)^T, \forall i \in [K]$. Then, we have that 
    \begin{eqnarray}
        c_M(x) = \max\limits_{i\neq y} \|\alpha_y - \alpha_i\|_2, \quad \text{ and} \quad \hat{c}_M(x) = \max\limits_{i \neq y} \|\hat{\alpha}_y - \hat{\alpha}_i\|_2 \nonumber
    \end{eqnarray}
    Using the fact that for any two arbitrary sequences of real numbers $b_i, c_i$ it holds that
    \begin{eqnarray}
        \left|\max_ib_i - \max_i c_i\right| &\leq& \max_{i} |b_i - c_i| \nonumber
    \end{eqnarray}
    we have for $b_i = \|\alpha_y - \alpha_i\|_2$ and $c_i = \|\hat{\alpha}_y - \hat{\alpha}_i\|_2$ that
    \begin{eqnarray}
        \left|c_M - \hat{c}_M\right| = \left|\max_{i\neq y} \|\alpha_y - \alpha_i\|_2 - \max_{i\neq y} \|\hat{\alpha}_y - \hat{\alpha}_i\|_2\right| \leq \max_{i \neq y} \left|\|\alpha_y - \alpha_i\|_2 - \|\hat{\alpha}_y - \hat{\alpha}_i\|_2\right| \label{eq1_c_M_dif}
    \end{eqnarray}
    Applying the triangle inequality on the norm $\|\alpha_y - \alpha_i\|_2 - \|\hat{\alpha}_y - \hat{\alpha}_i\|_2,$ we have that $\|\alpha_y - \alpha_i\|_2 - \|\hat{\alpha}_y - \hat{\alpha}_i\|_2 \leq \|\alpha_y - \hat{{\alpha}}_y + \hat{\alpha}_i - \alpha_i\|_2$ we obtain
    \begin{eqnarray}
        \left|c_M - \hat{c}_M\right| \leq \max_{i \neq y} \left|\|\alpha_y - \alpha_i\|_2 - \|\hat{\alpha}_y - \hat{\alpha}_i\|_2\right| \leq \max_{i \neq y} \|\alpha_y - \hat{\alpha}_y\|_2 + \|\alpha_i - \hat{\alpha}_i\|_2 \label{eq1b_c_M} 
    \end{eqnarray}
    We, next, bound the error on the terms $\|\alpha_y - \hat{\alpha}_y\|_2$ and $\alpha_i - \hat{\alpha}_i$. It holds that
    \begin{eqnarray}
        \alpha_i - \hat{\alpha}_i &=& \Sigma_i^{-1} (x-\mu_i)^T - \hat{\Sigma}_i^{-1} (x-\hat{\mu}_i)^T \nonumber\\
        &=& \Sigma_i^{-1} (x-\mu_i)^T - \Sigma_i^{-1} (x-\hat{\mu}_i)^T + \Sigma_i^{-1} (x-\hat{\mu}_i)^T -\hat{\Sigma}_i^{-1} (x-\hat{\mu}_i)^T \nonumber\\
        &=& \Sigma_i^{-1} (\hat{\mu}_i - \mu_i)^T + (\Sigma_i^{-1} - \hat{\Sigma}_i^{-1}) (x-\hat{\mu}_i)^T \nonumber\\
        &=& \Sigma_i^{-1} (\hat{\mu}_i - \mu_i)^T + (\Sigma_i^{-1} - \hat{\Sigma}_i^{-1}) (\mu_i-\hat{\mu}_i)^T + (\Sigma_i^{-1} - \hat{\Sigma}_i^{-1}) (x-\mu_i)^T \nonumber
    \end{eqnarray}
    Taking the norm and applying the triangle inequality, we get
    \begin{eqnarray}
        \|\alpha_i - \hat{\alpha}_i\|_2 &\leq& \|\Sigma_i^{-1} (\hat{\mu}_i - \mu_i)^T\|_2 + \|(\Sigma_i^{-1} - \hat{\Sigma}_i^{-1}) (\mu_i-\hat{\mu}_i)^T\|_2 + \|(\Sigma_i^{-1} - \hat{\Sigma}_i^{-1}) (x-\mu_i)^T\|_2 \nonumber\\
        &\leq& \|\Sigma_i^{-1}\|_{\text{op}} \|\hat{\mu}_i - \mu_i\|_2 + \|\Sigma_i^{-1} - \hat{\Sigma}_i^{-1}\|_{\text{op}} \|\mu_i-\hat{\mu}_i\|_2 + \|\Sigma_i^{-1} - \hat{\Sigma}_i^{-1}\|_{\text{op}} \|x - \mu_i\|_2 \quad \quad \label{eq2_lemm_c}
    \end{eqnarray}
    In order to bound inequality \eqref{eq2_lemm_c}, we need upper bounds on $\|\mu_i-\hat{\mu}_i\|_2$ and $\|\Sigma_i^{-1} - \hat{\Sigma}_i^{-1}\|_{\text{op}}$. We have that $\|\mu_i-\hat{\mu}_i\|_2 = \mathcal{\Tilde{O}}\left(\sqrt{\frac{d}{n}}\right),$ where $n$ is the minimum number of samples observed from each marginal $\mathcal{D}_i, \forall i\in[K]$. 
    For the term $\|\Sigma_i^{-1} - \hat{\Sigma}_i^{-1}\|_{\text{op}},$ we get for $\|\Sigma_i - \hat{\Sigma}_i\|_{\text{op}} < \frac{1}{\|\Sigma_i^{-1} \|_{\text{op}}}$ that the following inequality holds 
    \begin{eqnarray}
         \|\Sigma_i^{-1} - \hat{\Sigma}_i^{-1}\|_{\text{op}} \leq \|\Sigma_i^{-1} \|_{\text{op}}^2 \|\Sigma_i - \hat{\Sigma}_i\|_{\text{op}}\nonumber
    \end{eqnarray}
    Using the fact that $\|\Sigma_i - \hat{\Sigma}_i\|_{\text{op}} = \mathcal{\Tilde{O}}\left(\sqrt{\frac{d}{n}}\right)$ we obtain
    \begin{eqnarray}
        \|\Sigma_i^{-1} - \hat{\Sigma}_i^{-1}\|_{\text{op}} \leq \mathcal{\Tilde{O}}\left(\sqrt{\frac{d}{n}}\right) \label{eq3_lemm_c}
    \end{eqnarray}
    Substituting inequality \eqref{eq3_lemm_c} and the fact that $\|\mu_i-\hat{\mu}_i\|_2 = \mathcal{\Tilde{O}}\left(\sqrt{\frac{d}{n}}\right)$ into \eqref{eq2_lemm_c}
    \begin{eqnarray}
        \|\alpha_i - \hat{\alpha}_i\|_2 &\leq& \|\Sigma_i^{-1}\|_{\text{op}} \mathcal{\Tilde{O}}\left(\sqrt{\frac{d}{n}}\right) + \mathcal{\Tilde{O}}\left(\frac{d}{n}\right) + \mathcal{\Tilde{O}}\left(\sqrt{\frac{d}{n}}\right) \leq \mathcal{\Tilde{O}}\left(\sqrt{\frac{d}{n}}\right) \label{eq4_lemm_c}
    \end{eqnarray}
    Similarly, for $i = y$ we have that
    \begin{eqnarray}
        \|\alpha_y - \hat{\alpha}_y\|_2 &\leq& \mathcal{\Tilde{O}}\left(\sqrt{\frac{d}{n}}\right) \label{eq5_lemm_c}
    \end{eqnarray}
    From \eqref{eq1b_c_M}, \eqref{eq4_lemm_c}, \eqref{eq5_lemm_c}, we get that
    \begin{eqnarray}
      \left|c_M - \hat{c}_M\right| &\leq&  \|\alpha_y - \hat{\alpha}_y\|_2  + \max_{i \neq y} \|\alpha_i - \hat{\alpha}_i\|_2 \leq \mathcal{\Tilde{O}}\left(\sqrt{\frac{d}{n}}\right) \nonumber
    \end{eqnarray}
\end{proof}

\begin{lemma}\label{lemma: A_dif}
Let $A = \sqrt{c_M^2 - m(x) \lambda_{\min}}$ and $\hat{A} = \sqrt{\hat{c}_M^2 - \hat{m}(x) \hat{\lambda}_{\min}}$. 
If the minimum number of samples observed from each marginal $\mathcal{D}_j, \forall j \in [K]$ is $n > d$ and $\|\Sigma_j - \hat{\Sigma}_j\|_{\text{op}} < \frac{1}{\|\Sigma_j^{-1} \|_{\text{op}}},$ then we have that
\begin{eqnarray}
    |A - \hat{A}| &\leq& \mathcal{\Tilde{O}}\left(\frac{d^{1/4}}{n^{1/4}}\right)\nonumber  
\end{eqnarray}    
\end{lemma}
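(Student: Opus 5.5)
The plan is to reduce the difference of square roots to a difference of their radicands. Then I bound the radicand difference using the perturbation lemmas already established: Lemma~\ref{lemma: margin_difference} for the margin, Lemma~\ref{lemma: lambda_difference} for $\lambda_{\min}$, and Lemma~\ref{lemma: c_M_dif} for $c_M$.

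The $n^{-1/4}$ rate suggests using the elementary H\"older-$1/2$ inequality for the square root. For all $a,b\geq 0$,
\begin{equation*}
|\sqrt{a}-\sqrt{b}|\leq \sqrt{|a-b|},
\end{equation*}
which follows from $(\sqrt{a}-\sqrt{b})^2\leq |\sqrt a-\sqrt b|(\sqrt a+\sqrt b)=|a-b|$. This inequality needs no lower bound on $A+\hat A$. That is convenient, because in the regime $\lambda_{\min}<0$ relevant to Theorem~\ref{thm: certificate_robustness2} both radicands are nonnegative, but there is no a priori positive lower bound on them. I would therefore first note that both radicands $a=c_M^2-m(x)\lambda_{\min}$ and $b=\hat c_M^2-\hat m(x)\hat\lambda_{\min}$ are nonnegative in the regime of interest. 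This holds when $\lambda_{\min},\hat\lambda_{\min}\le 0$ and the margins are positive. Otherwise the square root is interpreted through the $(\cdot)_+$ convention, and I would handle that case the same way.

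The main step is bounding $|a-b|$. I would split it as
\begin{equation*}
|a-b|\leq |c_M-\hat c_M|\,(c_M+\hat c_M) + |m(x)|\,|\lambda_{\min}-\hat\lambda_{\min}| + |\hat\lambda_{\min}|\,|m(x)-\hat m(x)|.
\end{equation*}
The first factor of each term is $\mathcal{\Tilde{O}}(\sqrt{d/n})$ by Lemmas~\ref{lemma: c_M_dif}, \ref{lemma: lambda_difference} and \ref{lemma: margin_difference}, respectively. The second factors are $\mathcal{\Tilde{O}}(1)$. Indeed, $c_M$, $m(x)$ and $\lambda_{\min}$ are fixed constants depending on $x$ and the true parameters, and the estimates satisfy $\hat c_M\le c_M+\mathcal{\Tilde{O}}(\sqrt{d/n})$ and $|\hat\lambda_{\min}|\le|\lambda_{\min}|+\mathcal{\Tilde{O}}(\sqrt{d/n})$. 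The hypothesis $\|\Sigma_j-\hat\Sigma_j\|_{\text{op}}<1/\|\Sigma_j^{-1}\|_{\text{op}}$ is exactly what makes Lemma~\ref{lemma: c_M_dif} and the inverse-perturbation bound inside Lemma~\ref{lemma: lambda_difference} applicable. Summing the three terms gives $|a-b|\le\mathcal{\Tilde{O}}(\sqrt{d/n})$, and taking square roots yields $|A-\hat A|\le\mathcal{\Tilde{O}}((d/n)^{1/4})$, as claimed.

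The main obstacle is bookkeeping rather than a genuine difficulty. The constants hidden in $\mathcal{\Tilde{O}}$, such as $\|x-\mu_i\|_2$, $\|\Sigma_i^{-1}\|_{\text{op}}$ and $\pi_{\min}$, must stay uniform. Also, Lemma~\ref{lemma: margin_difference} has to be applied with the same runner-up index for both the true and empirical margins. I would handle the latter as in Lemma~\ref{lemma: c_M_dif}, using $|\min_i a_i-\min_i b_i|\le\max_i|a_i-b_i|$, so that a switch in the argmin index costs nothing. If a sharper rate $\sqrt{d/n}$ were wanted, one would instead divide by $A+\hat A\ge A>0$ when $c_M>0$; but the stated $n^{-1/4}$ bound only requires the square-root H\"older estimate above.
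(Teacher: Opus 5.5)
Your proposal is correct and follows essentially the paper's own route. Like the paper, you use the $\tfrac12$-H\"older estimate $|\sqrt a-\sqrt b|\le\sqrt{|a-b|}$, split the radicand difference into pieces controlled by Lemmas~\ref{lemma: c_M_dif}, \ref{lemma: lambda_difference} and \ref{lemma: margin_difference}, and finish with a square root. The only differences are cosmetic: the paper also writes $\hat\lambda_{\min}=\lambda_{\min}+\epsilon_\lambda$ and takes square roots term by term via $\sqrt{u+v}\le\sqrt u+\sqrt v$, whereas you sum the bounds first, take a single square root, and treat absolute values and the nonnegativity of the radicands more carefully than the paper does.
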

\begin{proof}
    Using the Holder's inequality for $f(x) = \sqrt{x},$ we have that
    \begin{eqnarray}
        \left|A - \hat{A}\right| &=& \left|\sqrt{c_M^2 - m(x) \lambda_{\min}} - \sqrt{\hat{c}_M^2 - \hat{m}(x) \hat{\lambda}_{\min}}\right| \nonumber \\
        &\leq& \sqrt{\left|c_M^2 - \hat{c}_M^2 - m(x) \lambda_{\min} + \hat{m}(x) \hat{\lambda}_{\min}\right|} \label{eq_1_A_dif}
    \end{eqnarray}
    We, now, express the quantity under the root with respect to $\epsilon_c = \hat{c}_M - c_M$ and $\epsilon_{\lambda} = \hat{\lambda}_{min} - \lambda_{min},$ as follows 
    \begin{eqnarray}
        c_M^2 - \hat{c}_M^2 - m(x) \lambda_{\min} + \hat{m}(x) \hat{\lambda}_{\min} &\leq& (c_M - \hat{c}_M) (c_M + \hat{c}_M) - m(x) \lambda_{\min} + \hat{m}(x) \hat{\lambda}_{\min} \nonumber \\
        &=& \epsilon_c (2c_M + \epsilon) - m(x) (\lambda_{\min} - \hat{\lambda}_{\min}) + \hat{\lambda}_{\min} (m(x) - \hat{m}(x)) \nonumber \\
       &=& \epsilon_c (2c_M + \epsilon) - m(x) (\lambda_{\min} - \hat{\lambda}_{\min}) \nonumber \\
       && + (\hat{\lambda}_{\min}-\lambda_{\min}) (m(x) - \hat{m}(x)) + \lambda_{\min} (m(x) - \hat{m}(x)) \quad \quad \label{eq_2_A_dif}
    \end{eqnarray}
    Combining \eqref{eq_1_A_dif}, \eqref{eq_2_A_dif}, using triangle inequality and the concavity of the square root, we get
    \begin{eqnarray}
        \left|A - \hat{A}\right| &\leq& \sqrt{\epsilon_c (2c_M + \epsilon_c)} + \sqrt{m(x) \left|\lambda_{\min} - \hat{\lambda}_{\min}\right|} + \sqrt{\left|(\hat{\lambda}_{\min}-\lambda_{\min}) (m(x) - \hat{m}(x))\right|} \nonumber \\ 
        && + \sqrt{\left|\lambda_{\min} (m(x) - \hat{m}(x))\right|} \nonumber \\
        &\leq& \sqrt{\epsilon_c (2c_M + \epsilon_c)} + \sqrt{m(x)\epsilon_{\lambda}} + \sqrt{\epsilon_{\lambda} \epsilon_m} + \sqrt{\left|\lambda_{\min} \epsilon_m\right|} \label{eq_3_A_dif}
    \end{eqnarray}
    From Lemma~\ref{lemma: lambda_difference}, \ref{lemma: c_M_dif}, \ref{lemma: margin_difference}, we have that $\epsilon_m = \mathcal{\Tilde{O}}\left(\sqrt{\frac{d}{n}}\right), \epsilon_c = \mathcal{\Tilde{O}}\left(\sqrt{\frac{d}{n}}\right)$ and $\epsilon_{\lambda} = \mathcal{\Tilde{O}}\left(\sqrt{\frac{d}{n}}\right)$ and thus
    \begin{eqnarray}
        \left|A - \hat{A}\right| &\leq& \mathcal{\Tilde{O}}\left(\frac{d^{1/4}}{n^{1/4}}\right) + \mathcal{\Tilde{O}}\left(\frac{d^{1/4}}{n^{1/4}}\right) + \mathcal{\Tilde{O}}\left(\sqrt{\frac{d}{n}}\right)+ \mathcal{\Tilde{O}}\left(\frac{d^{1/4}}{n^{1/4}}\right)= \mathcal{\Tilde{O}}\left(\frac{d^{1/4}}{n^{1/4}}\right) \nonumber
    \end{eqnarray}
\end{proof}

\begin{lemma}\label{lemma: case1}
    For an input sample $(x, y),$ with $|\hat{c}_M - c_M| \leq c_M$ and number of observed samples from each marginal $\mathcal{D}_j, \forall j \in [K]$ at least $n > d,$ it holds that
    \begin{eqnarray}
        \left|\frac{m(x)}{2c_M} - \frac{\hat{m}(x)}{2\hat{c}_M}\right| &\leq& \mathcal{\Tilde{O}}\left(\frac{d^{1/2}}{n^{1/2}}\right) \nonumber
    \end{eqnarray}
\end{lemma}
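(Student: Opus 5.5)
We bound the difference of ratios by a single common-denominator decomposition, isolating the margin error and the $c_M$ error, and then invoke Lemma~\ref{lemma: margin_difference} and Lemma~\ref{lemma: c_M_dif}. Writing $\epsilon_m = \hat{m}(x) - m(x)$ and $\epsilon_c = \hat{c}_M - c_M$, we have
\begin{equation*}
\frac{m(x)}{2c_M} - \frac{\hat{m}(x)}{2\hat{c}_M} = \frac{m(x)\hat{c}_M - \hat{m}(x) c_M}{2 c_M \hat{c}_M} = \frac{m(x)\,\epsilon_c - c_M\,\epsilon_m}{2 c_M \hat{c}_M},
\end{equation*}
using $m(x)\hat{c}_M - \hat{m}(x)c_M = m(x)(c_M+\epsilon_c) - (m(x)+\epsilon_m)c_M$. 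The triangle inequality then gives
\begin{equation*}
\left|\frac{m(x)}{2c_M} - \frac{\hat{m}(x)}{2\hat{c}_M}\right| \leq \frac{|m(x)|\,|\epsilon_c|}{2c_M \hat{c}_M} + \frac{|\epsilon_m|}{2\hat{c}_M}.
\end{equation*}

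Next we control the denominator $\hat{c}_M$ from below. Here the hypothesis $|\hat{c}_M - c_M| \leq c_M$ is meant to be used, but by itself it only yields $\hat{c}_M \geq 0$, which is too weak. So we will strengthen it in the regime of interest. Since Lemma~\ref{lemma: c_M_dif} gives $|\epsilon_c| = \tilde{\mathcal{O}}(\sqrt{d/n})$, for $n$ large enough that $|\epsilon_c| \leq c_M/2$ we get $\hat{c}_M \geq c_M - |\epsilon_c| \geq c_M/2$. This is the same device used in Lemma~\ref{lemma: lambda_inverse_difference}, where $1/(1-\epsilon_c/c_M)$ is expanded as a geometric series and is $\tilde{\mathcal{O}}(1)$. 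Hence $1/\hat{c}_M \leq 2/c_M$, and both terms acquire denominators of order $c_M$ and $c_M^2$. These are constants depending on $x$ and the true parameters, and they are absorbed into the $\tilde{\mathcal{O}}$ notation, as the paper does with $\|x-\mu_i\|_2$.

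Finally we substitute the two rates. Lemma~\ref{lemma: c_M_dif} gives $|\epsilon_c| = \tilde{\mathcal{O}}(\sqrt{d/n})$, which requires the covariance closeness condition already assumed there. Lemma~\ref{lemma: margin_difference} gives $|\epsilon_m| = \tilde{\mathcal{O}}(\sqrt{d/n})$. Treating $|m(x)|$ as a constant at the fixed point $x$, the bound becomes
\begin{equation*}
\frac{|m(x)|}{c_M^2}\,\tilde{\mathcal{O}}\!\left(\sqrt{d/n}\right) + \frac{1}{c_M}\,\tilde{\mathcal{O}}\!\left(\sqrt{d/n}\right) = \tilde{\mathcal{O}}\!\left(\frac{d^{1/2}}{n^{1/2}}\right).
\end{equation*}

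The main obstacle is the lower bound on $\hat{c}_M$ in the second step. If $c_M$ is very small, the constant $1/c_M^2$ blows up, so the argument needs $c_M$ bounded away from zero; this is consistent with the restriction $\epsilon < \epsilon_{\min} \leq c_M$ in Theorem~\ref{thm: generalization_bound}. We would state explicitly that the hidden constant scales like $(1+|m(x)|/c_M)/c_M$.
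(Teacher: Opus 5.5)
Your proof is correct and essentially matches the paper's. The paper adds and subtracts $\hat{m}(x)/(2c_M)$ where you take a common denominator, and then uses Lemma~\ref{lemma: margin_difference} and Lemma~\ref{lemma: c_M_dif} as you do. For the denominator it bounds $|\hat{c}_M| \geq c_M - \epsilon_c$ and asserts $1/(1-\epsilon_c/c_M) = \tilde{\mathcal{O}}(1)$ via a geometric series, which silently needs $\epsilon_c/c_M$ bounded away from $1$; that is exactly your explicit $|\epsilon_c| \leq c_M/2$ strengthening, and you are right that the stated hypothesis alone is too weak.
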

\begin{proof}
    From triangle inequality, we have that
    \begin{eqnarray}
        \left|\frac{m(x)}{2c_M} - \frac{\hat{m}(x)}{2\hat{c}_M}\right| &=& \left|\frac{m(x)}{2c_M} - \frac{\hat{m}(x)}{2c_M} + \frac{\hat{m}(x)}{2c_M} - \frac{\hat{m}(x)}{2\hat{c}_M}\right|\nonumber\\
        &\leq& \frac{1}{2c_M} \left|m(x)-\hat{m}(x)\right| + \frac{\hat{m}(x)}{2} \left|\frac{1}{c_M} - \frac{1}{\hat{c}_M}\right| \nonumber\\
        &\leq& \frac{\left|m(x)-\hat{m}(x)\right|}{2c_M} + \left(\frac{m(x)}{2} + \frac{|m(x)-\hat{m}(x)|}{2}\right)\left|\frac{1}{c_M} - \frac{1}{\hat{c}_M}\right| \nonumber\\
        &=& \frac{\left|m(x)-\hat{m}(x)\right|}{2c_M} + \left(\frac{m(x)}{2c_M} + \frac{|m(x)-\hat{m}(x)|}{2c_M}\right)\left|\frac{\hat{c}_M - c_M}{\hat{c}_M}\right| \label{eq_0_case_1}
    \end{eqnarray}
    We, next, bound the terms $\left|m(x)-\hat{m}(x)\right|$ and $\left|\frac{\hat{c}_M - c_M}{\hat{c}_M}\right|$. From Lemma~\ref{lemma: margin_difference}, we have that 
    \begin{eqnarray}
        |\hat{m}(x) - m(x)| &\leq& \mathcal{\Tilde{O}}\left(\frac{d^{1/2}}{n^{1/2}}\right) \label{eq_1_case1}
    \end{eqnarray}
    From Lemma~\ref{lemma: c_M_dif}, we have that $|\hat{c}_M - c_M| \leq \epsilon_{c}$ with $\epsilon_c = \mathcal{\Tilde{O}}\left(\frac{d^{1/2}}{n^{1/2}}\right)$ and assuming that $\epsilon_c < c_M,$ we have that 
    \begin{eqnarray}
        \left|\frac{\hat{c}_M - c_M}{\hat{c}_M}\right| \leq \frac{\epsilon_c}{\left|\hat{c}_M\right|} \leq \frac{\epsilon_c}{c_M - \left|\hat{c}_M-c_M\right|} 
        \leq  \frac{\epsilon_c}{c_M - \epsilon_c} = \frac{\epsilon_c}{c_M(1 - \epsilon_c / c_M)}\label{eq_2_case1}
    \end{eqnarray}
    In order to bound the term $\frac{1}{1 - \frac{\epsilon_c}{c_M}}$, we use the Taylor expansion of the function $f(x) = \frac{1}{1 - x} = \sum_{n=0}^{+\infty} x^n$ and get for $\epsilon_c < c_M$ that
    \begin{eqnarray}
        \frac{1}{1 - \frac{\epsilon_c}{c_M}} \leq \sum_{n=0}^{+\infty} \left(\frac{\epsilon_c}{c_M}\right)^n = \mathcal{\Tilde{O}}\left(1\right) \label{eq_3_case1}
    \end{eqnarray}
    Substituting \eqref{eq_3_case1} into \eqref{eq_2_case1}, we get that
    \begin{eqnarray}
       \left|\frac{\hat{c}_M - c_M}{\hat{c}_M}\right| \leq \mathcal{\Tilde{O}}\left(\epsilon_c \right) \leq \mathcal{\Tilde{O}}\left(\frac{d^{1/2}}{n^{1/2}}\right) \label{eq_4_case1}
    \end{eqnarray}
    Combining \eqref{eq_1_case1}, \eqref{eq_4_case1} with \eqref{eq_0_case_1}, we obtain
    \begin{eqnarray}
        \left|\frac{m(x)}{2c_M} - \frac{\hat{m}(x)}{2\hat{c}_M}\right| &\leq& \mathcal{\Tilde{O}}\left(\frac{d^{1/2}}{n^{1/2}}\right) + \left[\mathcal{\Tilde{O}}\left(1\right) + \mathcal{\Tilde{O}}\left(\frac{d^{1/2}}{n^{1/2}}\right)\right]\mathcal{\Tilde{O}}\left(\frac{d^{1/2}}{n^{1/2}}\right) \leq \mathcal{\Tilde{O}}\left(\frac{d^{1/2}}{n^{1/2}}\right) 
    \end{eqnarray}
\end{proof}
\begin{lemma}\label{lemma: cross-term}
    For an input sample $(x, y)$ with $ |\hat{\lambda}_{\min} - \lambda_{\min}| < \lambda_{\min}, \|\Sigma_j - \hat{\Sigma}_j\|_{\text{op}} < \frac{1}{\|\Sigma_j^{-1} \|_{\text{op}}},$ and number of observed samples from each marginal $\mathcal{D}_j, \forall j \in [K]$ at least $n > d,$ then it holds that
    \begin{eqnarray}
        \left|\frac{\hat{c}_M - \sqrt{\hat{c}_M^2 - \hat{m}(x) \hat{\lambda}_{min}}}{\hat{\lambda}_{min}} - \frac{\hat{m}(x)}{2\hat{c}_M}\right| &\leq& \mathcal{\Tilde{O}}\left(\sqrt{\frac{d}{n}}\right)\nonumber
    \end{eqnarray}
\end{lemma}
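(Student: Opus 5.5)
Write $\hat{A} = \sqrt{\hat{c}_M^2 - \hat{m}(x)\hat{\lambda}_{\min}}$ and rationalize the first fraction:
\begin{eqnarray}
\frac{\hat{c}_M - \hat{A}}{\hat{\lambda}_{\min}} = \frac{\hat{c}_M^2 - \hat{A}^2}{\hat{\lambda}_{\min}(\hat{c}_M + \hat{A})} = \frac{\hat{m}(x)}{\hat{c}_M + \hat{A}}. \nonumber
\end{eqnarray}
The division by $\hat{\lambda}_{\min}$ cancels. This removes the apparent singularity and avoids using Lemma~\ref{lemma: lambda_inverse_difference}. The quantity to bound then becomes
\begin{eqnarray}
\left|\frac{\hat{m}(x)}{\hat{c}_M + \hat{A}} - \frac{\hat{m}(x)}{2\hat{c}_M}\right| = \frac{\hat{m}(x)\,|\hat{c}_M - \hat{A}|}{2\hat{c}_M(\hat{c}_M + \hat{A})}. \nonumber
\end{eqnarray}
Rationalizing once more gives $|\hat{c}_M - \hat{A}| = \hat{m}(x)|\hat{\lambda}_{\min}|/(\hat{c}_M + \hat{A})$. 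Hence the whole expression equals
\begin{eqnarray}
\frac{\hat{m}(x)^2\,|\hat{\lambda}_{\min}|}{2\hat{c}_M(\hat{c}_M+\hat{A})^2} \leq \frac{\hat{m}(x)^2\,|\hat{\lambda}_{\min}|}{2\hat{c}_M^3}, \nonumber
\end{eqnarray}
using $\hat{A}\geq 0$. The discrepancy is therefore an explicit quantity, quadratic in the margin and linear in the curvature.

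Next, control each empirical factor by its population counterpart.
\begin{itemize}
\item \textbf{Margin.} By Lemma~\ref{lemma: margin_difference}, $\hat{m}(x) \leq m(x) + \tilde{\mathcal{O}}(\sqrt{d/n})$.
\item \textbf{Curvature.} By Lemma~\ref{lemma: lambda_difference} and the hypothesis $|\hat{\lambda}_{\min} - \lambda_{\min}| < \lambda_{\min}$, we get $|\hat{\lambda}_{\min}| \leq 2|\lambda_{\min}|$.
\item \textbf{Gradient term.} By Lemma~\ref{lemma: c_M_dif}, together with $\epsilon_c<c_M$ as in Lemma~\ref{lemma: case1}, $\hat{c}_M$ is bounded below by $c_M - \epsilon_c$. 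The factor $1/\hat{c}_M^3$ is then $\tilde{\mathcal{O}}(1)$ via the same geometric-series argument used for $1/(1-\epsilon_c/c_M)$.
\end{itemize}
Consequently the expression is bounded by a constant depending on $m(x), c_M, \lambda_{\min}$, and these constants are absorbed into $\tilde{\mathcal{O}}$ as elsewhere in the appendix.

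\textbf{Main obstacle.} The stated rate is $\sqrt{d/n}$, not $\mathcal{O}(1)$, and the bound above has no inherent decay in $n$. Rationalization yields a deviation of order $m^2|\lambda_{\min}|/c_M^3$. This is small only when $|\lambda_{\min}|$, i.e.\ the curvature term, is itself small.

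To recover the claimed rate, I would split as follows:
\begin{eqnarray}
\left|\frac{\hat{c}_M-\hat{A}}{\hat{\lambda}_{\min}}-\frac{\hat m}{2\hat c_M}\right| \leq \left|\frac{\hat{c}_M-\hat{A}}{\hat{\lambda}_{\min}}-\frac{c_M-A}{\lambda_{\min}}\right| + \left|\frac{c_M-A}{\lambda_{\min}}-\frac{m}{2c_M}\right| + \left|\frac{m}{2c_M}-\frac{\hat m}{2\hat c_M}\right|. \nonumber
\end{eqnarray}
\begin{itemize}
\item The first term I would bound through the rationalized form $\hat{m}/(\hat{c}_M+\hat{A})$ against $m/(c_M+A)$. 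This uses Lemmas~\ref{lemma: margin_difference}, \ref{lemma: c_M_dif} and a Lipschitz (rather than H\"older) estimate for $\hat A - A$, valid since $A\geq c_M>0$ when $\lambda_{\min}<0$. This gives $\tilde{\mathcal{O}}(\sqrt{d/n})$.
\item The third term is exactly Lemma~\ref{lemma: case1}.
\item The middle term is a population-level deterministic gap. I would argue it is treated as the fixed constant difference between the two branches of the certified radius in Theorem~\ref{thm: certificate_robustness2}, so only the empirical fluctuation is being tracked.
\end{itemize}
If the statement is read purely as a perturbation estimate, that middle term must either vanish in the regime considered or be absorbed into the hidden constants. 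Making this regime explicit is the delicate point I would check first.
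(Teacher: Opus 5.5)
Your rationalization is correct, and it gives an exact identity:
\[
\left|\frac{\hat c_M-\hat A}{\hat\lambda_{\min}}-\frac{\hat m(x)}{2\hat c_M}\right|=\frac{\hat m(x)^2\,|\hat\lambda_{\min}|}{2\hat c_M(\hat c_M+\hat A)^2}.
\]
Your ``main obstacle'' is therefore not something to work around. It shows the statement is false as written. The hypothesis $|\hat\lambda_{\min}-\lambda_{\min}|<\lambda_{\min}$ forces $\lambda_{\min}>0$ and $\hat\lambda_{\min}\in(0,2\lambda_{\min})$. Fix population parameters with $m(x)>0$ and $c_M^2>m(x)\lambda_{\min}$. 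As $n\to\infty$, all hypotheses hold with high probability. By Lemmas~\ref{lemma: margin_difference}, \ref{lemma: c_M_dif} and \ref{lemma: lambda_difference}, the left-hand side then tends to $m(x)^2\lambda_{\min}/(2c_M(c_M+A)^2)\geq m(x)^2\lambda_{\min}/(8c_M^3)>0$, whereas $\tilde{\mathcal{O}}(\sqrt{d/n})\to 0$.

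The paper's proof reaches the same leading term $\hat m^2(x)\hat\lambda_{\min}/(8\hat c_M^3)$ via the binomial series. Then, in passing from \eqref{eq_3_case2} to \eqref{eq_4_case_2ab}, it keeps only the summands carrying some $\epsilon_m$ or $\epsilon_\lambda$ and silently drops the $m^2(x)\lambda_{\min}$ contribution. That is exactly the hole you detected.

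Your three-term split does not repair it. The middle term is a positive number independent of $n$, and ``absorbing it into the hidden constants'' changes the rate from $\sqrt{d/n}$ to $\mathcal{O}(1)$. That is not a matter of constants. Separately, your Lipschitz estimate for the first term assumes $\lambda_{\min}<0$, which the stated hypothesis excludes; for $\lambda_{\min}>0$, $A$ can be close to $0$.

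What you should conclude is that the lemma is provable only under the hypothesis $|\hat\lambda_{\min}|=\tilde{\mathcal{O}}(\sqrt{d/n})$, and this is what the later argument actually uses. The lemma is invoked only in Cases 2 and 3 of the proof of Theorem~\ref{thm: generalization_bound}, through Lemmas~\ref{lemma: case2} and \ref{lemma: case3}. In both cases $\lambda_{\min}$ and $\hat\lambda_{\min}$ have (weakly) opposite signs, so $|\hat\lambda_{\min}|\leq|\hat\lambda_{\min}-\lambda_{\min}|=\tilde{\mathcal{O}}(\sqrt{d/n})$ by Lemma~\ref{lemma: lambda_difference}.

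The stated hypothesis $|\hat\lambda_{\min}-\lambda_{\min}|<\lambda_{\min}$ is incompatible with both cases. It forces $\lambda_{\min}>0$ and $\hat\lambda_{\min}>0$, so it should be replaced, not supplemented. With the corrected hypothesis, your identity finishes the proof in one line, using only $|\hat m(x)|\leq |m(x)|+\tilde{\mathcal{O}}(\sqrt{d/n})$ and $\hat c_M\geq c_M-\epsilon_c>0$. It also handles $\hat\lambda_{\min}=0$ without a singularity and needs no series expansion at all.
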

\begin{proof}
    We use the Taylor expansion of the function $f(x) = \sqrt{\hat{c}_M^2 - x}=\hat{c}_M \sum_{n=0}^\infty \binom{1/2}{n}(-1)^n \left(\frac{x}{\hat c_M^2}\right)^n$ and get
    \begin{eqnarray}
        \sqrt{\hat{c}_M^2 - \hat{m}(x) \hat{\lambda}_{min}} &=& \hat{c}_M \sum_{n=0}^\infty \binom{1/2}{n}(-1)^n \left(\frac{\hat{m}(x) \hat{\lambda}_{min}}{\hat c_M^2}\right)^n \nonumber\\
        \Rightarrow \left|\frac{\hat{c}_M - \sqrt{\hat{c}_M^2 - \hat{m}(x) \hat{\lambda}_{min}}}{\hat{\lambda}_{min}} - \frac{\hat{m}(x)}{2\hat{c}_M}\right| &\leq& \mathcal{\Tilde{O}}\left(\frac{\hat{m}^2(x)\hat{\lambda}_{\min}}{8\hat{c}_M^3}\right) \label{eq_2_case2}
    \end{eqnarray}
    Let $\epsilon_{m} = \hat{m}(x) - m(x), \epsilon_c = \hat{c}_M - c_M$ and $\epsilon_{\lambda} = \hat{\lambda}_{min} - \lambda_{min}$. Then, we have that
    \begin{eqnarray}
        \frac{\hat{m}^2(x)\hat{\lambda}_{\min}}{8\hat{c}_M^3} = \frac{(m(x)+\epsilon_{m})^2 (\lambda_{\min}+\epsilon_{\lambda})}{8(c_M+\epsilon_c)^3} = 
        \frac{(m^2(x)+2 \epsilon_{m} m(x)+ \epsilon_m^2) (\lambda_{\min}+\epsilon_{\lambda})}{8c_M^3(1+\epsilon_c/c_M)^3} \label{eq_3_case2}
    \end{eqnarray}
    Using the Taylor expansion of $f(x) = (1 + \frac{1}{x})^{-3}=\sum_{n=0}^{\infty} (-1)^n\frac{(n+2)(n+1)}{2}\,x^{-n}$, we have for $\epsilon_c < c_M$ that $(1+\epsilon_c/c_M)^{-3} = 1 - 3\frac{\epsilon_c}{c_M} + 6 (\frac{\epsilon_c}{c_M})^2 + ... \leq \mathcal{\Tilde{O}}\left(1\right)$ and thus
    \begin{eqnarray}
        \frac{\hat{m}^2(x)\hat{\lambda}_{\min}}{8\hat{c}_M^3} &\leq&  \frac{(m^2(x)+2 \epsilon_{m} m(x)+ \epsilon_m^2) (\lambda_{\min}+\epsilon_{\lambda})}{8c_M}\mathcal{\Tilde{O}}\left(1\right) \nonumber \\
        &=& \mathcal{\Tilde{O}}\left(\epsilon_{\lambda} +\epsilon_{m}+ \epsilon_{m}^2 + \epsilon_{m}\epsilon_{\lambda}+ \epsilon_{m}^2\epsilon_{\lambda}\right) \nonumber \\
        &=& \mathcal{\Tilde{O}}\left(\sqrt{\frac{d}{n}}\right)\label{eq_4_case_2ab}
    \end{eqnarray}
    where we have used the fact that from Lemmas~\ref{lemma: margin_difference}, \ref{lemma: lambda_difference}, \ref{lemma: c_M_dif}, it holds  
    $\epsilon_m , \epsilon_c , \epsilon_{\lambda} = \mathcal{\Tilde{O}}\left(\sqrt{\frac{d}{n}}\right)$.
    Thus, combining \eqref{eq_2_case2}, \eqref{eq_3_case2}, \eqref{eq_4_case_2ab}, we obtain 
    \begin{eqnarray}
        \left|\frac{\hat{c}_M - \sqrt{\hat{c}_M^2 - \hat{m}(x) \hat{\lambda}_{min}}}{\hat{\lambda}_{min}} - \frac{\hat{m}(x)}{2\hat{c}_M}\right| &\leq& \mathcal{\Tilde{O}}\left(\sqrt{\frac{d}{n}}\right) \label{eq_4_case2}
    \end{eqnarray}
\end{proof}
\begin{lemma}\label{lemma: case2}
    For an input sample $(x, y)$ with $ |\hat{\lambda}_{\min} - \lambda_{\min}| < \lambda_{\min}, \|\Sigma_j - \hat{\Sigma}_j\|_{\text{op}} < \frac{1}{\|\Sigma_j^{-1} \|_{\text{op}}}, |\hat{c}_M - c_M| <c_M$ and number of observed samples from each marginal $\mathcal{D}_j, \forall j \in [K]$ at least $n > d,$ it holds that
    \begin{eqnarray}
        \left|\frac{\hat{c}_M - \sqrt{\hat{c}_M^2 - \hat{m}(x) \hat{\lambda}_{min}}}{\hat{\lambda}_{min}} - \frac{m(x)}{2c_M}\right| &\leq& \mathcal{\Tilde{O}}\left(\sqrt{\frac{d}{n}}\right) \nonumber
    \end{eqnarray}
\end{lemma}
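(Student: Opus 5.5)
The plan is to split the quantity with the triangle inequality, inserting the intermediate term $\frac{\hat{m}(x)}{2\hat{c}_M}$:
\begin{eqnarray}
\left|\frac{\hat{c}_M - \sqrt{\hat{c}_M^2 - \hat{m}(x) \hat{\lambda}_{min}}}{\hat{\lambda}_{min}} - \frac{m(x)}{2c_M}\right| &\leq& \left|\frac{\hat{c}_M - \sqrt{\hat{c}_M^2 - \hat{m}(x) \hat{\lambda}_{min}}}{\hat{\lambda}_{min}} - \frac{\hat{m}(x)}{2\hat{c}_M}\right| \nonumber \\
&& + \left|\frac{\hat{m}(x)}{2\hat{c}_M} - \frac{m(x)}{2c_M}\right|. \nonumber
\end{eqnarray}
Each piece is then handled by one of the two preceding lemmas.

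The first term is bounded by Lemma~\ref{lemma: cross-term}. Its hypotheses are $|\hat{\lambda}_{\min} - \lambda_{\min}| < \lambda_{\min}$, $\|\Sigma_j - \hat{\Sigma}_j\|_{\text{op}} < 1/\|\Sigma_j^{-1}\|_{\text{op}}$ and $n > d$, all of which are assumed in the present statement. It gives $\mathcal{\Tilde{O}}(\sqrt{d/n})$. That lemma also implicitly requires $\hat{c}_M$ to be bounded away from zero, via the factor $(1+\epsilon_c/c_M)^{-3}$. This is exactly the extra assumption $|\hat{c}_M - c_M| < c_M$ in the current statement, so it is available.

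The second term is bounded by Lemma~\ref{lemma: case1}. That lemma needs $|\hat{c}_M - c_M| \leq c_M$, which is implied by the strict inequality we assume, together with $n > d$. It also yields $\mathcal{\Tilde{O}}(\sqrt{d/n})$. The error bounds $\epsilon_m,\epsilon_c,\epsilon_\lambda = \mathcal{\Tilde{O}}(\sqrt{d/n})$ from Lemmas~\ref{lemma: margin_difference}, \ref{lemma: lambda_difference} and \ref{lemma: c_M_dif} enter only through these two lemmas. Summing the two $\mathcal{\Tilde{O}}(\sqrt{d/n})$ bounds gives the claim.

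The main point to check is that the hypotheses of the two invoked lemmas are really covered by those of the present statement. The delicate one is the nonvanishing of $\hat{c}_M$, which I would verify explicitly: $\hat{c}_M \geq c_M - |\hat{c}_M - c_M| > 0$. A second point is that the Taylor expansion in Lemma~\ref{lemma: cross-term} needs $|\hat{m}(x)\hat{\lambda}_{\min}| < \hat{c}_M^2$ for convergence. I would note that this holds for $n$ large enough, since $\hat{m},\hat{\lambda}_{\min},\hat{c}_M$ concentrate around their true values. Beyond that, the argument is a direct combination.
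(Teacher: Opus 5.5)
Your decomposition is exactly the paper's: insert $\hat m(x)/(2\hat c_M)$, bound the first piece by Lemma~\ref{lemma: cross-term} and the second by Lemma~\ref{lemma: case1}. However, the step that invokes Lemma~\ref{lemma: cross-term} fails, and with it the statement as written. Write $\hat s=\sqrt{\hat c_M^2-\hat m(x)\hat\lambda_{\min}}$ and rationalize; no series is needed:
\begin{equation*}
\frac{\hat c_M-\hat s}{\hat\lambda_{\min}}-\frac{\hat m(x)}{2\hat c_M}
=\frac{\hat m(x)}{\hat c_M+\hat s}-\frac{\hat m(x)}{2\hat c_M}
=\frac{\hat m(x)^2\,\hat\lambda_{\min}}{2\hat c_M\,(\hat c_M+\hat s)^2}.
\end{equation*}
This is a continuous function of the estimates. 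As $n\to\infty$ it tends to $m(x)^2\lambda_{\min}/\bigl(2c_M(c_M+s)^2\bigr)$ with $s=\sqrt{c_M^2-m(x)\lambda_{\min}}$, a quantity independent of $n$. The hypothesis $|\hat\lambda_{\min}-\lambda_{\min}|<\lambda_{\min}$ can only hold if $\lambda_{\min}>0$, and it then forces $\hat\lambda_{\min}\in(0,2\lambda_{\min})$. So this limit is strictly positive whenever $m(x)\neq 0$ (and the root is real), and it cannot be $\tilde{\mathcal O}(\sqrt{d/n})$. Since $\hat m(x)/(2\hat c_M)\to m(x)/(2c_M)$, the full left-hand side of the lemma tends to the same positive constant.

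The proof of Lemma~\ref{lemma: cross-term} breaks exactly where it expands $(m^2+2\epsilon_m m+\epsilon_m^2)(\lambda_{\min}+\epsilon_\lambda)$ and keeps only terms containing an $\epsilon$, dropping the $n$-independent product $m(x)^2\lambda_{\min}$. Your checks ($\hat c_M>0$, convergence of the series) make the expansion legitimate but do not make its remainder small. The first piece is precisely the gap between the second-order and first-order radii, which does not shrink with $n$ unless $\hat\lambda_{\min}$ does.

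The missing ingredient is smallness of $\hat\lambda_{\min}$. It is available in the only place the lemma is used, Case~2 of the proof of Theorem~\ref{thm: generalization_bound} ($\lambda_{\min}>0\ge\hat\lambda_{\min}$), but the stated hypothesis excludes that case. In this sign-flip regime, $|\hat\lambda_{\min}|\le|\hat\lambda_{\min}-\lambda_{\min}|=\tilde{\mathcal O}(\sqrt{d/n})$ by Lemma~\ref{lemma: lambda_difference}. Since $\hat s\ge 0$, the identity above then bounds your first piece by $\hat m(x)^2|\hat\lambda_{\min}|/(2\hat c_M^3)=\tilde{\mathcal O}(\sqrt{d/n})$, once $\hat c_M$ is bounded away from zero. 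Your triangle inequality together with Lemma~\ref{lemma: case1} then finishes the argument. So the hypothesis $|\hat\lambda_{\min}-\lambda_{\min}|<\lambda_{\min}$ must be replaced by the sign-flip condition, or by anything else giving $|\hat\lambda_{\min}|=\tilde{\mathcal O}(\sqrt{d/n})$.

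Separately, if you track $d$, the cited lemmas only give $\tilde{\mathcal O}(d^{3/2}/\sqrt n)$ for the second piece. The log-determinant part of $\epsilon_m$ is $\tilde{\mathcal O}(d^{3/2}/n^{1/2})$ by Lemma~\ref{lemma: logdet_bound}, which the last line of Lemma~\ref{lemma: margin_difference} misprints as $d^{3/2}/n^{3/2}$.
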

\begin{proof}
    Applying the triangle inequality, we have that
    \begin{eqnarray}
       \left|\frac{\hat{c}_M - \sqrt{\hat{c}_M^2 - \hat{m}(x) \hat{\lambda}_{min}}}{\hat{\lambda}_{min}} - \frac{m(x)}{2c_M}\right| &\leq& \underbrace{\left|\frac{\hat{c}_M - \sqrt{\hat{c}_M^2 - \hat{m}(x) \hat{\lambda}_{min}}}{\hat{\lambda}_{min}} - \frac{\hat{m}(x)}{2\hat{c}_M}\right|}_{T_1} \nonumber \\
       && + \underbrace{\left|\frac{\hat{m}(x)}{2\hat{c}_M} - \frac{m(x)}{2c_M}\right|}_{T_2} \label{eq_1_case2}
    \end{eqnarray}
    We, next, bound the terms $T_1, T_2$ appearing on the right-hand side of \eqref{eq_1_case2}. From Lemma~\ref{lemma: cross-term}, we have for $\epsilon_c < c_M$ that
    \begin{eqnarray}
        T_1 &\leq& \mathcal{\Tilde{O}}\left(\sqrt{\frac{d}{n}}\right) \label{eq_4_case2}
    \end{eqnarray}
    From Lemma~\ref{lemma: case1}, we have that
    \begin{eqnarray}
        T_2 &\leq& \mathcal{\Tilde{O}}\left(\sqrt{\frac{d}{n}}\right) \label{eq_5_case2}
    \end{eqnarray}
    Substituting \eqref{eq_4_case2}, \eqref{eq_5_case2} to \eqref{eq_1_case2}, we get that
    \begin{eqnarray}
        \left|\frac{\hat{c}_M - \sqrt{\hat{c}_M^2 - \hat{m}(x) \hat{\lambda}_{min}}}{\hat{\lambda}_{min}} - \frac{m(x)}{2c_M}\right| &\leq& \mathcal{\Tilde{O}}\left(\sqrt{\frac{d}{n}}\right)
    \end{eqnarray}
\end{proof}

\begin{lemma}\label{lemma: case4}
    For an input sample $(x, y)$ with $\left|\hat{\lambda}_{\min} - \lambda_{\min}\right| < \lambda_{\min}, |\hat{c}_M - c_M| <c_M$ and number of observed samples from each marginal $\mathcal{D}_j, \forall j \in [K]$ at least $n > d,$ it holds that
    \begin{eqnarray}
        \left|\frac{c_M - \sqrt{c_M^2 - m(x) \lambda_{\min}}}{\lambda_{\min}} - \frac{\hat{c}_M - \sqrt{\hat{c}_M^2 - \hat{m}(x) \hat{\lambda}_{\min}}}{\hat{\lambda}_{\min}}\right| &\leq& \mathcal{\Tilde{O}}\left(\frac{d^{1/4}}{n^{1/4}}\right) \nonumber
    \end{eqnarray}
\end{lemma}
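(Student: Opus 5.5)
We bound the difference by splitting it into a numerator perturbation and a denominator perturbation. Write $A=\sqrt{c_M^2-m(x)\lambda_{\min}}$ and $\hat A=\sqrt{\hat c_M^2-\hat m(x)\hat\lambda_{\min}}$, as in Lemma~\ref{lemma: A_dif}. Adding and subtracting $\frac{\hat c_M-\hat A}{\lambda_{\min}}$ and applying the triangle inequality gives
\begin{eqnarray}
\left|\frac{c_M-A}{\lambda_{\min}}-\frac{\hat c_M-\hat A}{\hat\lambda_{\min}}\right| &\leq& \frac{|c_M-\hat c_M|+|A-\hat A|}{|\lambda_{\min}|} + |\hat c_M-\hat A|\,\left|\frac{1}{\lambda_{\min}}-\frac{1}{\hat\lambda_{\min}}\right|. \nonumber
\end{eqnarray}
In the first term, Lemma~\ref{lemma: c_M_dif} bounds $|c_M-\hat c_M|$ by $\tilde{\mathcal{O}}(\sqrt{d/n})$ and Lemma~\ref{lemma: A_dif} bounds $|A-\hat A|$ by $\tilde{\mathcal{O}}(d^{1/4}/n^{1/4})$. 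Since $\lambda_{\min}$ is a fixed nonzero constant that the $\tilde{\mathcal{O}}$ notation absorbs, the first term is $\tilde{\mathcal{O}}(d^{1/4}/n^{1/4})$. This is the dominant rate, and it comes from the square-root Hölder step inside Lemma~\ref{lemma: A_dif}.

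For the second term, Lemma~\ref{lemma: lambda_inverse_difference} applies under the hypothesis $|\hat\lambda_{\min}-\lambda_{\min}|<|\lambda_{\min}|$ and gives $|1/\lambda_{\min}-1/\hat\lambda_{\min}|=\tilde{\mathcal{O}}(\sqrt{d/n})$. That lemma is phrased with $\lambda_{\min}$ positive, but its proof only uses $|\hat\lambda_{\min}|\geq|\lambda_{\min}|-|\epsilon_\lambda|$. We are in the regime $\lambda_{\min}<0$, where this formula is the one used, so we restate the lemma with absolute values.

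It then remains to show that $|\hat c_M-\hat A|=\tilde{\mathcal{O}}(1)$. We have $|\hat c_M|\leq 2c_M$ because $|\hat c_M-c_M|<c_M$. We also have $\hat A\leq A+|A-\hat A|$, where $A$ is a constant depending on $x$ and the true parameters, and $|A-\hat A|$ is $o(1)$ by Lemma~\ref{lemma: A_dif}. Multiplying, the second term is $\tilde{\mathcal{O}}(\sqrt{d/n})$.

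Summing the two terms gives $\tilde{\mathcal{O}}(\sqrt{d/n})+\tilde{\mathcal{O}}(d^{1/4}/n^{1/4})=\tilde{\mathcal{O}}(d^{1/4}/n^{1/4})$, since $n>d$. As an alternative route, we could compare both expressions to $m/(2c_M)$ via Lemma~\ref{lemma: case2}, which would even give the $\sqrt{d/n}$ rate. However, that route depends on the Taylor-remainder argument of Lemma~\ref{lemma: cross-term}, which needs $|m\lambda_{\min}|\ll c_M^2$ and is not guaranteed here. So we use the direct decomposition above.

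The main obstacle is controlling the factor $1/|\lambda_{\min}|$ and ensuring $\hat A$ is well defined, i.e.\ that $\hat c_M^2-\hat m\hat\lambda_{\min}\geq 0$. When $\lambda_{\min}<0$ and $m>0$, the true radicand is at least $c_M^2>0$. The estimated radicand differs from it by $\tilde{\mathcal{O}}(\sqrt{d/n})$, by the computation in \eqref{eq_2_A_dif}, so for $n$ large enough it stays bounded below by a positive constant. We state this as the implicit requirement on $n$, which is consistent with the condition $\epsilon<\epsilon_{\min}$ in Theorem~\ref{thm: generalization_bound}.
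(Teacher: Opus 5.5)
Your proof is correct and follows the paper's argument, differing only in the pivot of the add-and-subtract step. The paper inserts $\frac{c_M-A}{\hat\lambda_{\min}}$ and then splits $|1/\hat\lambda_{\min}|\le|1/\hat\lambda_{\min}-1/\lambda_{\min}|+|1/\lambda_{\min}|$. You insert $\frac{\hat c_M-\hat A}{\lambda_{\min}}$ and instead show $|\hat c_M-\hat A|=\tilde{\mathcal{O}}(1)$. Both versions close with Lemmas~\ref{lemma: c_M_dif}, \ref{lemma: A_dif} and \ref{lemma: lambda_inverse_difference}, and in both the $|A-\hat A|$ term sets the $d^{1/4}/n^{1/4}$ rate.

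Your absolute-value reading of the hypothesis is the right repair: as written, $|\hat\lambda_{\min}-\lambda_{\min}|<\lambda_{\min}$ forces $\lambda_{\min}>0$, yet the formula is used in Case~4 where $\lambda_{\min}\le 0$. You were also right to avoid the route through Lemma~\ref{lemma: cross-term}, and for a stronger reason than the one you give. Its left-hand side converges to $\bigl|\frac{c_M-\sqrt{c_M^2-m(x)\lambda_{\min}}}{\lambda_{\min}}-\frac{m(x)}{2c_M}\bigr|$, which is a nonzero constant whenever $m(x)\lambda_{\min}\neq 0$. So that route gives no vanishing rate at all, let alone $\sqrt{d/n}$.
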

\begin{proof}
   Let $\mathcal{R}(x), \hat{\mathcal{R}}(x)$ be the true and learnt certificate of robustness from Theorem~\ref{thm: certificate_robustness2}. We have that 
\begin{eqnarray}
    |\hat{\mathcal{R}}(x) - \mathcal{R}(x)| &=& \left|\frac{c_M - \sqrt{c_M^2 - m(x) \lambda_{\min}}}{\lambda_{\min}} - \frac{\hat{c}_M - \sqrt{\hat{c}_M^2 - \hat{m}(x) \hat{\lambda}_{\min}}}{\hat{\lambda}_{\min}}\right| \nonumber \\
    &=& \Big|\frac{c_M - \sqrt{c_M^2 - m(x) \lambda_{\min}}}{\lambda_{\min}} - \frac{c_M - \sqrt{c_M^2 - m(x) \lambda_{\min}}}{\hat{\lambda}_{\min}} \nonumber \\
    && \quad \quad \quad + \frac{c_M - \sqrt{c_M^2 - m(x) \lambda_{\min}}}{\hat{\lambda}_{\min}} - \frac{\hat{c}_M - \sqrt{\hat{c}_M^2 - \hat{m}(x) \hat{\lambda}_{\min}}}{\hat{\lambda}_{\min}}\Big| \nonumber 
\end{eqnarray}
Let $A = \sqrt{c_M^2 - m(x) \lambda_{\min}}$ and $\hat{A} = \sqrt{\hat{c}_M^2 - \hat{m}(x) \hat{\lambda}_{\min}} $. By applying the triangle inequality, we get
\begin{eqnarray}
    |\hat{\mathcal{R}}(x) - \mathcal{R}(x)| &\leq& \left|\frac{c_M - A}{\lambda_{\min}} - \frac{c_M - A}{\hat{\lambda}_{\min}}\right| + \left|\frac{c_M - A}{\hat{\lambda}_{\min}} - \frac{\hat{c}_M - \hat{A}}{\hat{\lambda}_{\min}}\right| \nonumber \\
    &\leq& \left|\frac{c_M - A}{\lambda_{\min}} - \frac{c_M - A}{\hat{\lambda}_{\min}}\right| + \left|\frac{A - \hat{A}}{\hat{\lambda}_{\min}}\right| + \left|\frac{c_M - \hat{c}_M}{\hat{\lambda}_{\min}}\right| \nonumber \\
    &=& \left|c_M - A\right| \left|\frac{1}{\lambda_{\min}} - \frac{1}{\hat{\lambda}_{\min}}\right| + \left|\frac{1}{\hat{\lambda}_{\min}}\right| \left(\left|A - \hat{A}\right| + \left| c_M - \hat{c}_M\right|\right) \nonumber \\
    &=& \left|c_M - A\right| \left|\frac{1}{\lambda_{\min}} - \frac{1}{\hat{\lambda}_{\min}}\right| + \left|\frac{1}{\hat{\lambda}_{\min}} - \frac{1}{\lambda_{\min}}\right| \left(\left|A - \hat{A}\right| + \left| c_M - \hat{c}_M\right|\right) \nonumber \\
    && + \left|\frac{1}{\lambda_{\min}}\right| \left(\left|A - \hat{A}\right| + \left| c_M - \hat{c}_M\right|\right) \label{eq1_gen_bou}
\end{eqnarray}
Thus, in order to bound inequality \eqref{eq1_gen_bou} we need to bound the terms $\left|\frac{1}{\lambda_{\min}} - \frac{1}{\hat{\lambda}_{\min}}\right|, \left|A - \hat{A}\right|$ and $\left| c_M - \hat{c}_M\right|$. 
From Lemmas~\ref{lemma: lambda_inverse_difference},~\ref{lemma: c_M_dif},~\ref{lemma: A_dif}, we have that for $\epsilon_\lambda = \hat{\lambda}_{\min} - \lambda_{\min} \leq \lambda_{\min}$, $\|\Sigma_i - \hat{\Sigma}_i\|_{\text{op}} < \frac{1}{\|\Sigma_i^{-1} \|_{\text{op}}},$ it holds that
\begin{eqnarray}
   \left|\frac{1}{\lambda_{\min}} - \frac{1}{\hat{\lambda}_{\min}}\right| &\leq& \mathcal{\Tilde{O}}\left(\sqrt{\frac{d}{n}}\right)\label{eq2_tg} \\
   \left|A - \hat{A}\right| &\leq& \mathcal{\Tilde{O}}\left(\frac{d^{1/4}}{n^{1/4}}\right)\label{eq3_tg} \\
   \left|c_M - \hat{c}_M\right| &\leq& \mathcal{\Tilde{O}}\left(\sqrt{\frac{d}{n}}\right) \label{eq4_gen_b}
\end{eqnarray}
Substituting inequalities \eqref{eq2_tg}, \eqref{eq3_tg}, \eqref{eq4_gen_b} into \eqref{eq1_gen_bou}, we get
\begin{eqnarray}
    |\hat{\mathcal{R}}(x) - \mathcal{R}(x)| &\leq& \left|c_M - A\right| \left|\frac{1}{\lambda_{\min}} - \frac{1}{\hat{\lambda}_{\min}}\right| + \left|\frac{1}{\hat{\lambda}_{\min}} - \frac{1}{\lambda_{\min}}\right| \left(\left|A - \hat{A}\right| + \left| c_M - \hat{c}_M\right|\right) \nonumber \\ 
    && + \left|\frac{1}{\lambda_{\min}}\right| \left(\left|A - \hat{A}\right| + \left| c_M - \hat{c}_M\right|\right)\nonumber \\
    &\leq& \mathcal{\Tilde{O}}\left(\sqrt{\frac{d}{n}}\right)+ \mathcal{\Tilde{O}}\left(\sqrt{\frac{d}{n}}\right) \mathcal{\Tilde{O}}\left(\frac{d^{1/4}}{n^{1/4}}\right) +  \mathcal{\Tilde{O}}\left(\frac{d^{1/4}}{n^{1/4}}\right) \nonumber \\
    &\leq& \mathcal{\Tilde{O}}\left(\frac{d^{1/4}}{n^{1/4}}\right) \nonumber
\end{eqnarray}
\end{proof}

\begin{lemma}\label{lemma: case3}
    If $\left|\hat{\lambda}_{\min} - \lambda_{\min}\right|  < \lambda_{\min}, \left|\hat{c}_M - c_M\right| <c_M, \|\Sigma_j - \hat{\Sigma}_j\|_{\text{op}} < \frac{1}{\|\Sigma_j^{-1} \|_{\text{op}}},$, the following bound holds
    \begin{eqnarray}
        \left|\frac{c_M - \sqrt{c_M^2 - m(x) \lambda_{\min}}}{\lambda_{\min}} - \frac{\hat{m}(x)}{2\hat{c}_M}\right| &\leq& \mathcal{\Tilde{O}}\left(\frac{d^{1/4}}{n^{1/4}}\right) \nonumber
    \end{eqnarray}
\end{lemma}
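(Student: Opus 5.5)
The natural route is a triangle inequality that splits the quantity into two differences, each already controlled by an earlier lemma. Write $\mathcal{R}_2(x) = \frac{c_M - \sqrt{c_M^2 - m(x)\lambda_{\min}}}{\lambda_{\min}}$ for the true second-order radius and $\hat{\mathcal{R}}_2(x) = \frac{\hat c_M - \sqrt{\hat c_M^2 - \hat m(x)\hat\lambda_{\min}}}{\hat\lambda_{\min}}$ for its empirical counterpart. Inserting $\hat{\mathcal{R}}_2(x)$ gives
\begin{eqnarray}
\left|\mathcal{R}_2(x) - \frac{\hat m(x)}{2\hat c_M}\right| &\leq& \underbrace{\left|\mathcal{R}_2(x) - \hat{\mathcal{R}}_2(x)\right|}_{T_1} + \underbrace{\left|\hat{\mathcal{R}}_2(x) - \frac{\hat m(x)}{2\hat c_M}\right|}_{T_2}. \nonumber
\end{eqnarray}

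The hypotheses $|\hat\lambda_{\min}-\lambda_{\min}|<\lambda_{\min}$ and $|\hat c_M - c_M| < c_M$ are exactly those of Lemma~\ref{lemma: case4}. That lemma also uses the operator-norm condition $\|\Sigma_j - \hat\Sigma_j\|_{\text{op}} < 1/\|\Sigma_j^{-1}\|_{\text{op}}$ internally, through Lemmas~\ref{lemma: c_M_dif} and~\ref{lemma: A_dif}, and that condition is also assumed here. So Lemma~\ref{lemma: case4} applies directly and gives $T_1 \leq \tilde{\mathcal{O}}(d^{1/4}/n^{1/4})$.

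For $T_2$, I invoke Lemma~\ref{lemma: cross-term}. Its hypotheses ($|\hat\lambda_{\min}-\lambda_{\min}|<\lambda_{\min}$ and the operator-norm condition) are again contained in the present assumptions. It gives $T_2 \leq \tilde{\mathcal{O}}(\sqrt{d/n})$. Since $n > d$, we have $\sqrt{d/n} \leq (d/n)^{1/4}$, so the second term is absorbed. The two bounds sum to $\tilde{\mathcal{O}}(d^{1/4}/n^{1/4})$.

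The main obstacle is $T_2$. Its bound relies on the Taylor expansion of $\sqrt{\hat c_M^2 - \hat m\hat\lambda_{\min}}$ around $\hat c_M$, and that expansion converges only when $|\hat m(x)\hat\lambda_{\min}| < \hat c_M^2$. Here $|\hat c_M - c_M|<c_M$ keeps $\hat c_M$ away from zero, and the perturbation bounds from Lemmas~\ref{lemma: margin_difference}, \ref{lemma: lambda_difference} and \ref{lemma: c_M_dif} keep the hatted quantities close to the true ones, so I would check convergence under those facts. A cleaner alternative is to avoid series altogether. The inequality $\left|\frac{c-\sqrt{c^2-u}}{\lambda} - \frac{u}{2c\lambda}\right| \leq \frac{u^2}{2c^3|\lambda|}$ (up to constants) holds for $|u| \leq c^2$. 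Apply it with $u=\hat m\hat\lambda_{\min}$, and bound the resulting quantity by treating the true $m, c_M, \lambda_{\min}$ as constants hidden in $\tilde{\mathcal{O}}$, in line with the paper's convention. With either route, the dominant rate is inherited from Lemma~\ref{lemma: A_dif} through $T_1$.
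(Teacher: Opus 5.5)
You follow exactly the paper's route: insert $\hat{\mathcal{R}}_2(x)$, bound $T_1$ by Lemma~\ref{lemma: case4}, and bound $T_2$ by Lemma~\ref{lemma: cross-term}. However, the step $T_2\le\tilde{\mathcal{O}}(\sqrt{d/n})$ is false, and your elementary inequality exposes this rather than fixing it. Rationalizing gives the exact identity
\[
\hat{\mathcal{R}}_2(x)-\frac{\hat m(x)}{2\hat c_M}
=\frac{\hat m(x)}{\hat c_M+\sqrt{\hat c_M^2-\hat m(x)\hat\lambda_{\min}}}-\frac{\hat m(x)}{2\hat c_M}
=\frac{\hat m(x)^2\,\hat\lambda_{\min}}{2\hat c_M\bigl(\hat c_M+\sqrt{\hat c_M^2-\hat m(x)\hat\lambda_{\min}}\bigr)^2}.
\]
This is the intrinsic gap between the second-order and first-order radius formulas, not an estimation error. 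As $n\to\infty$ it converges to $m(x)^2\lambda_{\min}/\bigl(2c_M(c_M+\sqrt{c_M^2-m(x)\lambda_{\min}})^2\bigr)$, which is nonzero whenever $m(x)\lambda_{\min}\neq 0$. So treating $m,c_M,\lambda_{\min}$ as constants hidden in $\tilde{\mathcal{O}}$ gives $T_2=\mathcal{O}(1)$, not $\tilde{\mathcal{O}}(\sqrt{d/n})$. The proof of Lemma~\ref{lemma: cross-term} makes the same slip: after expanding $(m+\epsilon_m)^2(\lambda_{\min}+\epsilon_\lambda)$, it keeps only the terms containing an $\epsilon$ and discards $m(x)^2\lambda_{\min}$. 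Consequently the statement cannot hold as written. Fix the parameters and any $x$ with $m(x)>0$ (the hypotheses force $\lambda_{\min}>0$). Then the left-hand side converges to the positive gap $\bigl|\mathcal{R}_2(x)-m(x)/(2c_M)\bigr|$, while the right-hand side tends to $0$.

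What is missing is a reason for $|\lambda_{\min}|$ itself to be small. That is exactly what holds where the lemma is used, namely Case~3 of Theorem~\ref{thm: generalization_bound}. There $\lambda_{\min}\le 0<\hat\lambda_{\min}$, so $|\lambda_{\min}|\le\hat\lambda_{\min}-\lambda_{\min}=\tilde{\mathcal{O}}(\sqrt{d/n})$ by Lemma~\ref{lemma: lambda_difference}. This sign disagreement is incompatible with the hypothesis $|\hat\lambda_{\min}-\lambda_{\min}|<\lambda_{\min}$, even if that hypothesis is read with $|\lambda_{\min}|$, so it should replace that hypothesis. Then insert $m(x)/(2c_M)$ rather than $\hat{\mathcal{R}}_2(x)$. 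Apply the same identity to the true quantities, using $m(x)\ge 0$ and $\lambda_{\min}\le 0$ (so $\sqrt{c_M^2-m(x)\lambda_{\min}}\ge c_M$). This gives $\bigl|\mathcal{R}_2(x)-m(x)/(2c_M)\bigr|\le m(x)^2|\lambda_{\min}|/(8c_M^3)=\tilde{\mathcal{O}}(\sqrt{d/n})$. Lemma~\ref{lemma: case1} bounds the remaining term $\bigl|m(x)/(2c_M)-\hat m(x)/(2\hat c_M)\bigr|$ by $\tilde{\mathcal{O}}(\sqrt{d/n})$. This improves the rate to $\tilde{\mathcal{O}}(\sqrt{d/n})$ and avoids $\hat{\mathcal{R}}_2(x)$, Lemma~\ref{lemma: case4}, and the Taylor series altogether.
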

\begin{proof}
    From triangle inequality, we have that 
    \begin{eqnarray}
      \left|\frac{c_M - \sqrt{c_M^2 - m(x) \lambda_{\min}}}{\lambda_{\min}} - \frac{\hat{m}(x)}{2\hat{c}_M}\right| &\leq& \underbrace{\left|\frac{c_M - \sqrt{c_M^2 - m(x) \lambda_{\min}}}{\lambda_{\min}} - \frac{\hat{c}_M - \sqrt{\hat{c}_M^2 - \hat{m}(x) \hat{\lambda}_{min}}}{\hat{\lambda}_{min}}\right|}_{T_1} \nonumber \\
        && + \underbrace{\left|\frac{\hat{c}_M - \sqrt{\hat{c}_M^2 - \hat{m}(x) \hat{\lambda}_{min}}}{\hat{\lambda}_{min}} - \frac{\hat{m}(x)}{2\hat{c}_M}\right|}_{T_2} \label{eq_1_case3}  
    \end{eqnarray}
    From Lemmas~\ref{lemma: cross-term}, \ref{lemma: case4}, we have for $\epsilon_c = |\hat{c}_M - c_M| < c_M$ that 
    \begin{eqnarray}
        T_1 &\leq& \mathcal{\Tilde{O}}\left(\frac{d^{1/4}}{n^{1/4}}\right) \label{eq_2_case3}\\
        T_2 &\leq& \mathcal{\Tilde{O}}\left(\frac{d^{1/2}}{n^{1/2}}\right) \label{eq_3_case3}
    \end{eqnarray}
    Substituting \eqref{eq_2_case3}, \eqref{eq_3_case3} into \eqref{eq_1_case3}, we obtain 
    \begin{eqnarray}
        \left|\frac{c_M - \sqrt{c_M^2 - m(x) \lambda_{\min}}}{\lambda_{\min}} - \frac{\hat{m}(x)}{2\hat{c}_M}\right| &\leq& \mathcal{\Tilde{O}}\left(\frac{d^{1/4}}{n^{1/4}}\right) \nonumber
    \end{eqnarray}
\end{proof}

\subsection{Proof of Theorem~\ref{thm: generalization_bound}.}
\label{app: thm: generalization_bound}
\begin{proof}
Let $\hat{\mu}_i, \hat{\Sigma}_i, \hat{\pi}_i$ be the learnt parameters and $\mu_i, \Sigma_i, \pi_i $ the true parameters of the Gaussian marginal $\mathcal{D}_i, \forall i \in [K]$. Denote with $n_i$ the number of samples observed from $\mathcal{D}_i$ and let $n=\min_{i\in[K]} n_i$ be the minimum number of samples observed from any marginal distribution. 
Using Gaussian concentration results (Theorem 6.1 \citet{Wainwright}), we have that the empirical mean $\hat{\mu}_i$ and empirical covariance $\hat{\Sigma}_i$ of each Gaussian marginal $\mathcal{D}_i, \forall i \in [K],$ satisfy 
\begin{eqnarray}
    \|\hat{\mu}_i - \mu_i\|_{\Sigma^{-1}} \leq \mathcal{\Tilde{O}}\left(\sqrt{\frac{d}{n}}\right), \quad \|\hat{\Sigma} - \Sigma_i\|_{\text{op}} \leq \mathcal{\Tilde{O}}\left(\sqrt{\frac{d}{n}}\right) \nonumber 
\end{eqnarray}
where $\mathcal{\Tilde{O}}(\cdot)$ suppresses logarithmic terms in $\delta$. 
For samples $x_1, x_2, ..., x_n \sim \text{Multinomial}(\pi_1, ..., \pi_{K})$ and $\hat{\pi}_i = \frac{1}{n} \sum_{j=1}^n \mathbb{I}\{y_j = i\},$ from the Hoeffding's bound we get that with probability at least $1 - \delta,$ it holds
\begin{eqnarray}
    \|\hat{\pi}_i - \pi_i\|_2 \leq \mathcal{\Tilde{O}}\left(\sqrt{\frac{1}{n}}\right) , \quad \forall i \in [K],\nonumber
\end{eqnarray}
We denote with $\lambda_{min}$ for brevity the minimum over all eigenvalues of the matrices $W_i = \Sigma^{-1}_j - \Sigma^{-1}_{j_*}, \forall j \neq j_*$ and with $\lambda_{\min}^{\Sigma_i}$ the minimum over all eigenvalues of the covariance matrices $\Sigma_i, \forall i \in [K]$ and $\lambda_{\min}^{W_{i}}$. \\
Given the above bounds on the distance of the estimated parameters from the true ones, we bound the learnt certificate of robustness $\hat{\mathcal{R}}(x)$ from the true certificate $\mathcal{R}(x)$. 
From \eqref{eq_cert_radius_case_a_app}, \eqref{eq_cert_radius_case_b_app} in Theorem~\ref{thm: certificate_robustness2}, depending on the sign of $\lambda_{\min}$, there are two cases for the expression of the certified radius, specifically $\mathcal{R}(x) = \frac{c_M - \sqrt{c_M^2 - m(x) \lambda_{\min}}}{\lambda_{\min}}$ or $\mathcal{R}(x) = \frac{m(x)}{2c_M}$. Similarly, based on whether $\hat{\lambda}_{min}$ is positive or negative, there are two cases for the expression of $\hat{\mathcal{R}}(x)$. 

We, thus, partition the input space $\cX$ into four disjoint subspaces $\mathcal{X}_{1}, \mathcal{X}_{2}, \mathcal{X}_{3}, \mathcal{X}_{4}$, where 
\begin{eqnarray}
    \cX_{1} = \{ x \in \cX: \lambda_{\min} > 0, \hat{\lambda}_{\min} > 0\} \nonumber \\
    \cX_{2} = \{ x \in \cX: \lambda_{\min} > 0, \hat{\lambda}_{\min} \leq 0\} \nonumber \\
    \cX_{3} = \{ x \in \cX: \lambda_{\min} \leq 0, \hat{\lambda}_{\min} > 0\} \nonumber \\
    \cX_{4} = \{ x \in \cX: \lambda_{\min} \leq 0, \hat{\lambda}_{\min} \leq 0\} \nonumber
\end{eqnarray}
Based on the above partition of $\cX$, we have that 
\begin{eqnarray}
    \mathop{\mathbb{P}}\limits_{x\sim\mathcal{D}}\left[|\mathcal{R}(x) - \hat{\mathcal{R}}(x)| \geq \epsilon\right] &=& \sum_{i\in[4]} \mathop{\mathbb{P}}\limits_{x\sim\mathcal{D}}\left[x\in\cX_{i}\right] \mathop{\mathbb{P}}\left[|\mathcal{R}(x) - \hat{\mathcal{R}}(x)| \geq \epsilon_{i} \Big| x \in \cX_{i} \right] \nonumber \\
    &\leq& \sum_{i\in[4]} \mathop{\mathbb{P}}\limits_{x\sim\mathcal{D}}\left[x\in\cX_{i}\right] \delta \nonumber \\
    &\leq& \delta \label{eq_0_gen_bound}
\end{eqnarray}
where $\delta = \mathop{\mathbb{P}}\left[|\mathcal{R}(x) - \hat{\mathcal{R}}(x)| \geq \epsilon_{i} \Big| x \in \cX_{i} \right]$. To prove the needed, thus, it suffices to fix a probability $\delta$ and find the errors $\epsilon_{i}$ for each of the four cases, and, finally, let $\epsilon = \max\limits_{i\in[4]}\epsilon_{i}$ in \eqref{eq_0_gen_bound}.
\\
\textbf{Case 1 ($\cX_1$).} We have that $\mathcal{R}(x) = \frac{m(x)}{2c_M}$ and $\hat{\mathcal{R}}(x) = \frac{\hat{m}(x)}{2\hat{c}_M}$ and thus according to Lemma~\ref{lemma: case1} for $\epsilon < c_M,$ it holds that 
\begin{eqnarray}
    |\mathcal{R}(x) - \hat{\mathcal{R}}(x)| &=& \mathcal{\Tilde{O}}\left(\frac{d^{1/2}}{n^{1/2}}\right) \nonumber
\end{eqnarray}
\textbf{Case 2 ($\cX_2$).} We have that $\mathcal{R}(x) = \frac{m(x)}{2c_M}$ and $\hat{\mathcal{R}}(x) = \frac{\hat{c}_M - \sqrt{\hat{c}_M^2 - \hat{m}(x) \hat{\lambda}_{min}}}{\hat{\lambda}_{min}}$ and according to Lemma~\ref{lemma: case2} for $\epsilon < \min\{\lambda_{min}, \lambda_{min}^{\Sigma_{i}}, c_M\},$ it holds that 
\begin{eqnarray}
    |\mathcal{R}(x) - \hat{\mathcal{R}}(x)| &=& \mathcal{\Tilde{O}}\left(\frac{d^{1/2}}{n^{1/2}}\right) \nonumber
\end{eqnarray}
\textbf{Case 3 ($\cX_3$).} We have that $\mathcal{R}(x) = \frac{c_M - \sqrt{c_M^2 - m(x) \lambda_{\min}}}{\lambda_{\min}}$ and $\hat{\mathcal{R}}(x) = \frac{\hat{m}(x)}{2\hat{c}_M}$ and according to Lemma~\ref{lemma: case3} for $\epsilon < \min\{\lambda_{min}, \lambda_{min}^{\Sigma_{i}}, c_M\},$ it holds that 
\begin{eqnarray}
    |\mathcal{R}(x) - \hat{\mathcal{R}}(x)| &=& \mathcal{\Tilde{O}}\left(\frac{d^{1/4}}{n^{1/4}}\right) \nonumber
\end{eqnarray}
\textbf{Case 4 ($\cX_4$).} We have that $\mathcal{R}(x) = \frac{c_M - \sqrt{c_M^2 - m(x) \lambda_{\min}}}{\lambda_{\min}}$ and $\hat{\mathcal{R}}(x) = \frac{\hat{c}_M - \sqrt{\hat{c}_M^2 - \hat{m}(x) \hat{\lambda}_{min}}}{\hat{\lambda}_{min}}$ and thus according to Lemma~\ref{lemma: case4} for $\epsilon < \min\{\lambda_{min}, c_M\},$ it holds that 
\begin{eqnarray}
    |\mathcal{R}(x) - \hat{\mathcal{R}}(x)| &=& \mathcal{\Tilde{O}}\left(\frac{d^{1/4}}{n^{1/4}}\right) \nonumber
\end{eqnarray}
Combining the above results with \eqref{eq_0_gen_bound}, we get that with probability at least $1 - \delta$ it holds that
\begin{eqnarray}
    |\mathcal{R}(x) - \hat{\mathcal{R}}(x)| \leq \mathcal{\Tilde{O}}\left(\frac{d^{1/4}}{n^{1/4}}\right) \nonumber
\end{eqnarray}
For a fixed error $0 < \epsilon < \epsilon_{\min}=\{\lambda_{\min}^{\Sigma_{i}}, \lambda_{\min}, c_M(x)\},$ in order to satisfy that 
\begin{eqnarray}
   |\hat{\mathcal{R}}(x) - \mathcal{R}(x)| < \mathcal{O}\left(\epsilon\right), \nonumber 
\end{eqnarray}
we have that number of samples needed are $n = \mathcal{\Tilde{O}}\left(\frac{d^{1/4}}{\epsilon^{1/4}}\right)$. 
\end{proof}
\newpage
\section{Proof of Theorem~\ref{thm: gen_ELLIPS}}
\label{app: thm: gen_ELLIPS}
\begin{proof}
\textbf{Case 1: $\epsilon = 0$.}
Denote with $x, x'$ the clean and the corresponding adversarially perturbed sample and with $z = f(x), z' = f(x')$ their embeddings in the latent space. Since the encoder network is locally $L_x$-Lipschitz around the sample $x$, we have that 
\begin{eqnarray}
    \|z - z'\|_2 = \|f(x) - f(x')\|_2 \leq L_x \|x - x'\|_2 \label{ineq1_gen_thm}
\end{eqnarray}
Given that $f(x)$ maps the input distribution to a latent distribution that is a mixture of Gaussians, we have from Theorem \ref{thm: certificate_robustness2} that the ELLIPS classifier remains robust as long as 
\begin{eqnarray}
    \|z' - z\|_2 \leq \frac{m(z)}{\lambda_{\min} \left(\sqrt{c_M^2 + (\lambda_{\min})_{+} m(z)}+c_M\right)} \label{ineq2_gen_thm}
\end{eqnarray}
where $\lambda_{\min}$ is the minimum among all eigenvalues of the matrices $W_i = \Sigma^{-1}_i - \Sigma^{-1}_{i_*}, \forall i \neq i_*$, $(-\lambda_{\min})_+=\max(-\lambda_{\min},0)$, and $c_M = \max\limits_{i \neq {i_*}} \|\Sigma_{i_*}^{-1} (f(x) - \mu_{i_*})^T - \Sigma_i^{-1} (f(x)-\mu_i)^T\|_2$.
Thus, in order for the classifier to remain robust, it suffices that the perturbation in the input space satisfies
\begin{eqnarray}
    L_x \|x - x'\|_2 &\leq& \frac{m(z)}{\lambda_{\min} \left(\sqrt{c_M^2 + (\lambda_{\min})_{+} m(z)}+c_M\right)}\nonumber \\
    \iff R(x) &\leq& \frac{m(z)}{\lambda_{\min} L_x\left(\sqrt{c_M^2 + (\lambda_{\min})_{+} m(z)}+c_M\right)} \nonumber
\end{eqnarray}
\textbf{Case 2: $\epsilon > 0$.}
Let $P_z := f_{\#}\mathcal D_x$
denote the true latent distribution induced by the encoder, and let $Q_z$ be the
Gaussian mixture fitted to this latent distribution. Let $\mathcal{E}, \Pi$ denote the
parameters of $Q_z$, and let $\mathcal C_{\mathcal{E}, \Pi}$ be the set of latent points that
are certified by the \textsc{GenELLIPS} certification rule with respect to
$\mathcal{E}, \Pi$
\[
\mathcal C_{\mathcal{E},\Pi}
:=
\left\{
z\in \mathcal Z:
\textsc{GenELLIPS} \text{ is certified at } z
\right\}.
\]
Therefore, by definition,
\[
\mathrm{CertAcc}(P_z)=P_z(\mathcal C_{\mathcal{E},\Pi}),
\qquad
\mathrm{CertAcc}(Q_z)=Q_z(\mathcal C_{\mathcal{E},\Pi}).
\]
Using assumption $\mathrm{KL}(P_z\|Q_z)\leq \epsilon$ and Pinsker's inequality, we have
\begin{eqnarray}
d_{\mathrm{TV}}(P_z,Q_z)
&\leq&
\sqrt{\frac{1}{2}\mathrm{KL}(P_z\|Q_z)}
\nonumber \\
&\leq&
\sqrt{\frac{\epsilon}{2}} .
\end{eqnarray}
By the definition of total variation distance, for every measurable set
$A\subseteq \mathcal Z$, we have
\[
|P_z(A)-Q_z(A)|
\leq
d_{\mathrm{TV}}(P_z,Q_z).
\]
Applying this inequality to the certified set $A=\mathcal C_{\mathcal{E},\Pi}$ gives
\begin{eqnarray}
P_z(\mathcal C_{\mathcal{E},\Pi})
&\geq&
Q_z(\mathcal C_{\mathcal{E},\Pi})
-
d_{\mathrm{TV}}(P_z,Q_z)
\nonumber \\
&\geq&
Q_z(\mathcal C_{\mathcal{E},\Pi})
-
\sqrt{\frac{\epsilon}{2}} .
\end{eqnarray}
Using the definitions of certified accuracy under $P_z$ and $Q_z$, we obtain
\[
\mathrm{CertAcc}(P_z)
\geq
\mathrm{CertAcc}(Q_z)
-
\sqrt{\frac{\epsilon}{2}} .
\]
\end{proof}
\newpage
\section{On Experiments}
\label{app: experiments}
In Appendix~\ref{app: experiments_details}, we provide more details on the experiments presented in the main paper. In Appendix~\ref{app: experiments_additional_exps}, we provide additional experiments showcasing the performance of the proposed classifier in practice. 
\subsection{Experimental Details}
\label{app: experiments_details}
We first describe the experimental setup used and then provide additional synthetic experiments.


\textbf{Experiments in Benchmark Datasets.} We provide the training details - network architecture, datasets, optimization and hyperparameters for the implementation of the \textsc{GenELLIPS} classifier. 

\textbf{Network Architecture.} To construct the proposed classifier we need to apply first an encoder and then the \textsc{ELLIPS} classifier. We take a FARE-4 encoder \citep{schlarmann2024robustclip} pre-trained and finetune it using a loss that promotes the latent distribution to comprise a mixture of Gaussians. Given that the ImageNet dataset appears to have more classes and be more complex than the CIFAR-10, we have utilized a meticulously constructed loss accustomed to each dataset. Specifically, for CIFAR-10 the used loss combines the MCR$^2$ objective with a term promoting the Gaussian marginals to be isotropic, ensuring that the eigenvalues of the covariance matrices are well-behaved
\begin{eqnarray}
   \mathcal{L} = \mathcal{L}_{\mathrm{MCR}^2}(Z, Y) \;+\; \lambda_{\mathrm{iso}} \mathcal{L}_{\text{iso}} \, 
\end{eqnarray}
where $\mathcal{L}_{\text{iso}} = \sum_{k=1}^K \Big\|C_k \;-\;\frac{\mathrm{Tr}(C_k)}{d}\,I_d\Big\|_F^2$ and 
\begin{eqnarray}
    \qquad Z = \bigl[z_1, \dots, z_B\bigr]^\top,\qquad
\mu_k = \frac{1}{n_k}\sum_{i:y_i=k} z_i,\qquad
C_k = \frac{1}{n_k-1}\sum_{i:y_i=k}(z_i-\mu_k)(z_i-\mu_k)^\top \nonumber
\end{eqnarray}

For the ImageNet dataset, given the significantly more complex underlying distribution, we add an additional regularizer, measuring the maximum mean discrepancy of each class conditional from a Gaussian distribution 
\begin{eqnarray}
    \mathcal{L} = \mathcal{L}_{\text{MCR}^2} + \lambda_{\text{iso}} \cdot \mathcal{L}_{\text{iso}} + \lambda_{\text{MMD}} \cdot \sum_k \mathrm{MMD}^2(z_k, \Tilde{z}_k),
\end{eqnarray}
where $\Tilde{z}_k \sim \mathcal{N}(\mu_k, I)$ and $\mathrm{MMD}^2(z_k, \Tilde{z}_k)$ is a kernel-based distance between two discrete distributions defined as follows
\begin{eqnarray}
    \text{MMD}^2(z_k, \Tilde{z}_k) = \frac{1}{n^2} \sum_{i, j} k(z_i, z_j) + \frac{1}{m^2} \sum_{i, j} k(\Tilde{z}_i, \Tilde{z}_j) - \frac{2}{nm} \sum_{i, j} k(z_i, \Tilde{z}_j)
\end{eqnarray}

where \( k(\cdot, \cdot) \) is a positive-definite kernel function.

The choice of the kernel is the Gaussian Radial Basis Function (RBF) kernel
\[
k(x, y) = \exp\left(-\frac{\|x - y\|^2}{2 \sigma^2}\right)
\]

During training, we freeze the FARE-4 backbone and we add, similarly to \cite{chu2023image}, a pre-feature layer composed of Linear-BatchNorm-ReLU-Linear-ReLU. For feature head and cluster head, we utilize a Linear layer that maps the hidden to the feature dimension $d = 128$. 

\textbf{Optimization, Initialization and Hyperparameters.} We initially warmup our pipeline by training the MCR$^2$ loss
and then optimize simultaneously the feature cluster head using the MLC loss. Following \citet{chu2023image}, we use the SGD optimizer for both the
feature head and cluster head with learning rate equal to 0.0001, momentum
set to 0.9 and weight decay set to 0.0001 and 0.005 respectively. All other hyperparameters remain the same to the ones used in \citet{chu2023image}, thus referring the interested reader to the aforementioned related work.

\subsection{Additional Experiments}
\label{app: experiments_additional_exps}

\textbf{Separation of Classes.} We visualize the correlation of the latent embeddings of different classes showing the effectiveness of the $MCR^2$ loss in the CIFAR-10 dataset. As shown in Figure \ref{fig: ZZ^T}, such an encoder trained with the $MCR^2$ objective maps each class of input samples to points near a low-dimensional subspace, as the singular values of the mapped points drop quickly, while the mapped points from different subspaces tend to be orthogonal.

\begin{figure}[ht]
    \centering
    \includegraphics[width=0.3\linewidth]{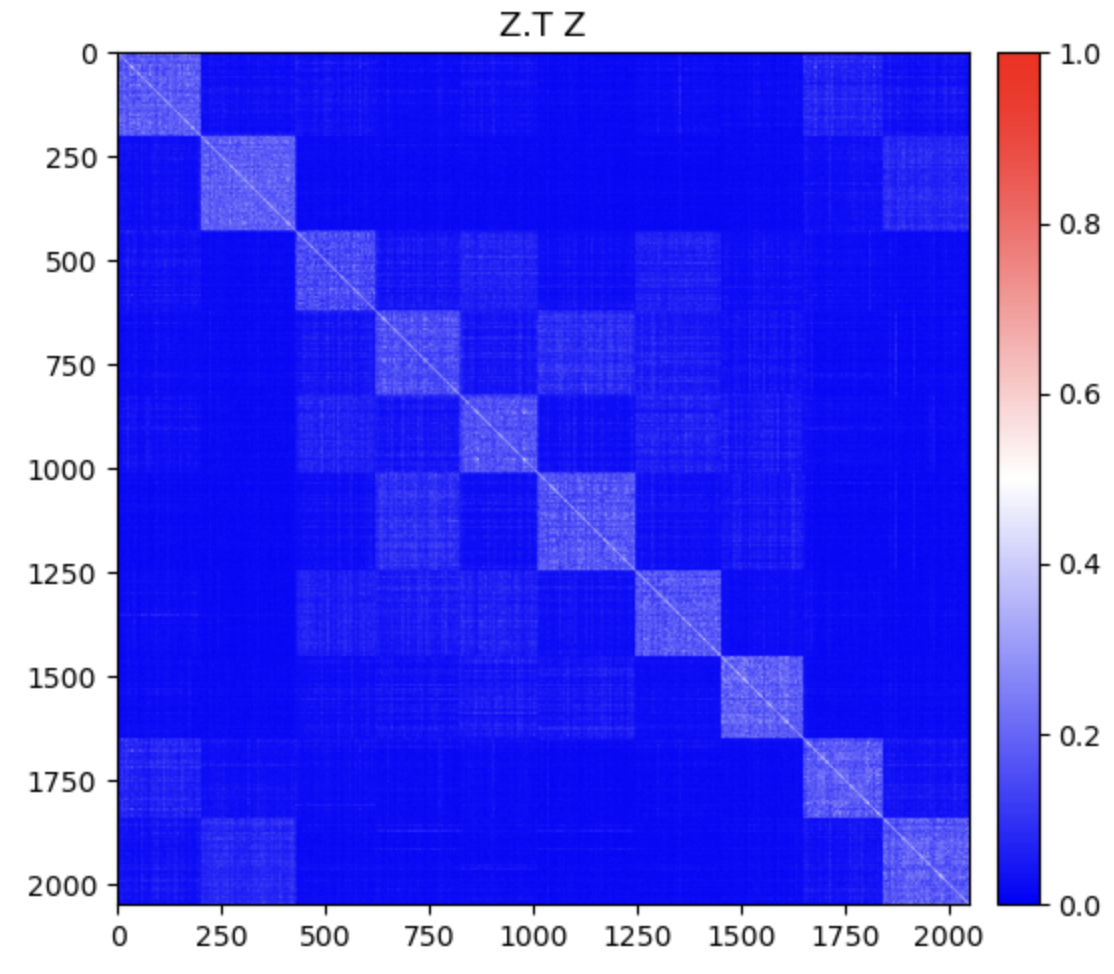}
    \includegraphics[width=0.45\linewidth]{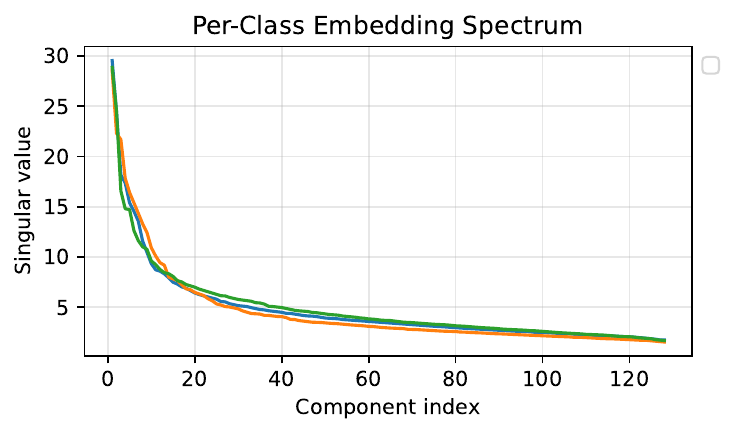}
    \caption{The correlation matrix and minimum eigenvalues of the latent space embeddings for the different classes for CIFAR-10.}
    \label{fig: ZZ^T}
\end{figure}

\textbf{Empirical Validation of Sample Complexity.} In order to empirically validate the result of Theorem \ref{thm: generalization_bound}, we have first expressed the established sample complexity in terms of logarithms, as follows: 
from $n = \Tilde{\mathcal{O}}(\frac{d^{9/2}}{\epsilon^{9/2}})$, we know that there exists a constant $C > 0$ such that 
\begin{eqnarray}
    n &\leq& \frac{Cd^{9/2}}{\epsilon^{9/2}} \nonumber\\
    \Rightarrow \log{\epsilon} &\leq& -\frac{2}{9} \log{n} + \log{d} + \frac{2}{9} \log{C} \label{eq: log_eq_generalization_thm_valid}
\end{eqnarray}
We run multiple experiments for different sample sizes $n \in \{10, 100, 500, 1000\}$ and dimensions $d \in \{2, 10, 100\}$ and estimate in each experiment the distance of the learned certificate from the true certificate of robustness. We, next perform linear regression to estimate the coefficients $\alpha, \beta, \gamma$ in the following equality and compare with the ones of \eqref{eq: log_eq_generalization_thm_valid}. We have found that and , thus validating empirically (1) and hence the sample complexity established in Theorem 4.2.
CIFAR-10 We run multiple experiments for different sample sizes 
 and dimensions 
 and estimate in each experiment the distance 
 of the learned certificate from the true certificate of robustness. We, lastly, performed linear regression to estimate the coefficients 
 in the following equality 
 and compared them with the ones in (1). We have found that 
 
 and 
, thus validating empirically (1) and hence the sample complexity established in Theorem 4.2. For the same example, we have plotted additionally the difference 
 of the learned certified radius from the true one for different sample sizes 
 and have shown how the certified radius 
 scales with respect to the parameters 
 and 
.

\textbf{On Gaussianity of the Latent Distribution.} To empirically validate that the used encoder maps the input distribution to a mixture of Gaussians, we apply Mardia’s statistical normality test \citep{mardia}, a well-known statistical test that evaluates whether a multivariate dataset departs from a Gaussian distribution. More specifically, Mardia’s test computes two statistics:
\begin{enumerate}
    \item multivariate skewness, which accounts for asymmetry
    \item multivariate kurtosis, which evaluates whether the distribution’s tail behavior matches the one of a Gaussian.
\end{enumerate}

Under the null hypothesis, the data follow a multivariate normal distribution. The results show that the embeddings pass this test for all the classes, indicating that the class-conditional distributions are indeed conforming to Gaussians.

\begin{table}[h!]
\centering
\begin{tabular}{|c|c|c|}
\hline
\textbf{Dataset} & \textbf{Mardia’s Average Score} & \textbf{Percentage of Classes Passing Normality Test} \\
\hline
CIFAR-10 & 0.027 & 100\% \\
ImageNet & 0.014 & 100\% \\
\hline
\end{tabular}
\caption{Mardia’s test results validate that the latent distribution of the encoder conforms with a mixture of Gaussian distributions.}
\end{table}

\textbf{Synthetic Experiments.} We conduct experiments in the Gaussian mixture setting, where the input distribution is comprised of $K$ classes and each class is distributed according to $\mathcal{N}(\mu_i, \Sigma_i), \forall i \in [K]$. We run experiments for multiple setups testing for different number of classes $K=\{2, 3, 5, 10\}$ with different distances $R = \{2, 4, 6\}$ between them, as well as isotropic and non-isotropic covariances matrices $\Sigma$. The means are generated to lie in a circle with angle $\frac{2\pi}{K}$ and radius $R = \{2, 4, 6\}$ from the center in order to control the intersection between the classes. The covariance matrices $\Sigma_i$ are selected to be either isotropic or anisotropic. In the case of isotropic covariances, $\Sigma_i = I,$ while in the case of anisotropic covariance matrices the variance in the principal and second principal direction is $1.5, 0.5$ respectively. 

We compare the certified accuracy of the proposed classifier with the method of \citet{paladversarial}. We plot in Figure~\ref{fig: addit_synthetic} the certified accuracy of both methods for the isotropic GMMs and in Figure~\ref{fig: addit_synthetic_anisotropic_covs} for anisotropic covariances. 
As shown in Figure~\ref{fig: addit_synthetic} and Figure~\ref{fig: addit_synthetic_anisotropic_covs}, our method outperforms the one in \citet{paladversarial} and closely approximates the empirical robust accuracy achieved by PGD attack. 

Additionally, we compare the certified radius of Theorem~\ref{thm: certificate_robustness2} with the archetypal technique of randomized smoothing in different settings. As shown in Figure~\ref{fig: addit_rand_smoothing} and Figure~\ref{fig: addit_rand_smoothing_anisotropic}, our method provides higher certified accuracy than randomized smoothing, indicating the tighter certification of the proposed radius of robustness. 

\begin{figure}[ht!]
    \centering
    \includegraphics[width=0.3\linewidth]{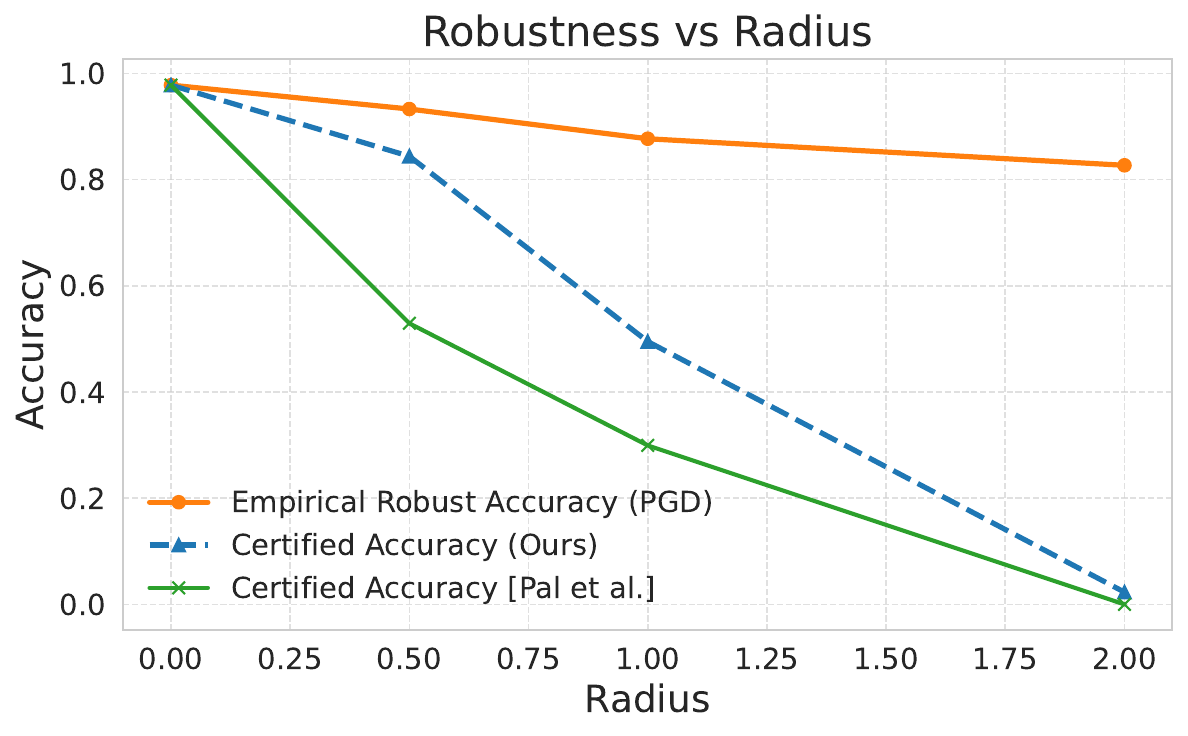}
    \includegraphics[width=0.3\linewidth]{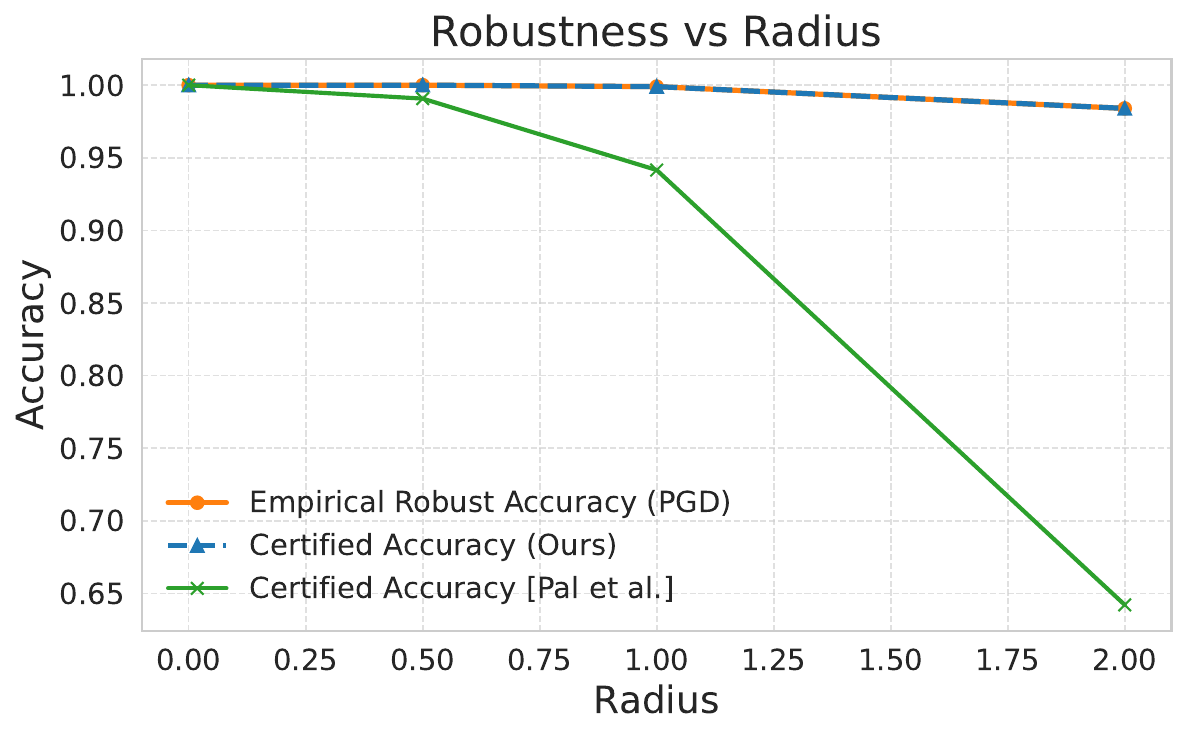}
    \includegraphics[width=0.3\linewidth]{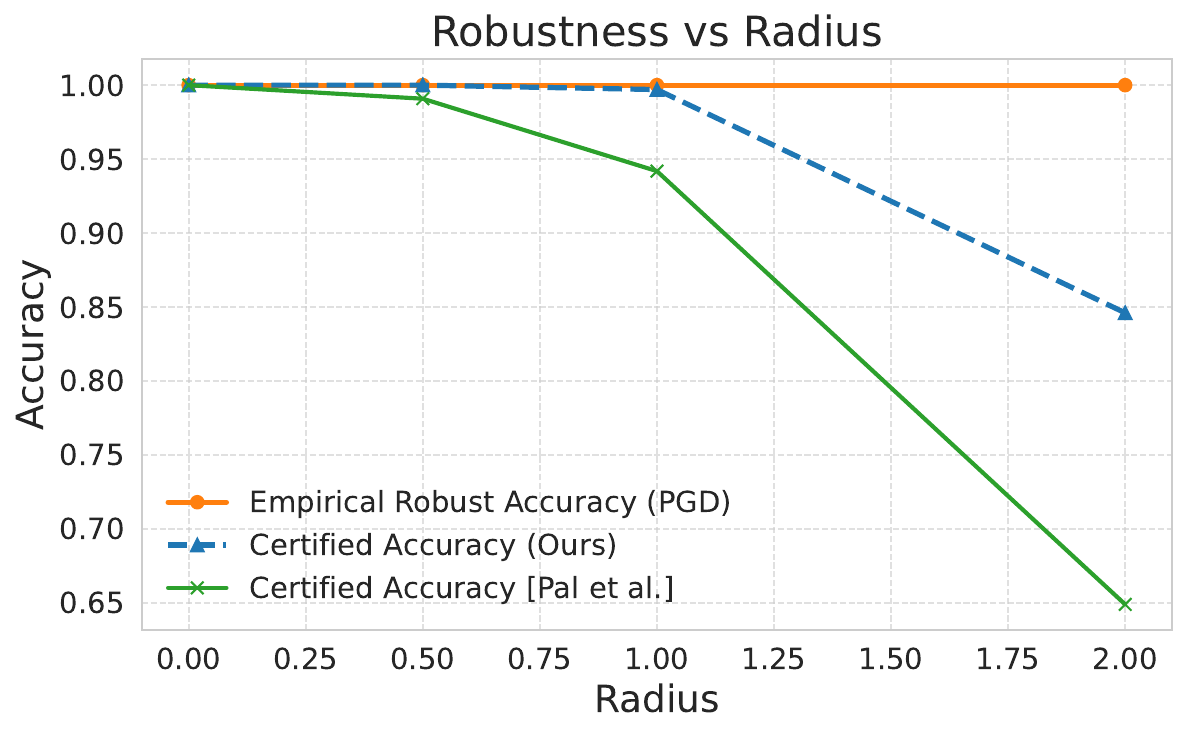}\\
    \includegraphics[width=0.3\linewidth]{Figs/synthetic_new/acc_K3_R2_isotropic_8.pdf}
    \includegraphics[width=0.3\linewidth]{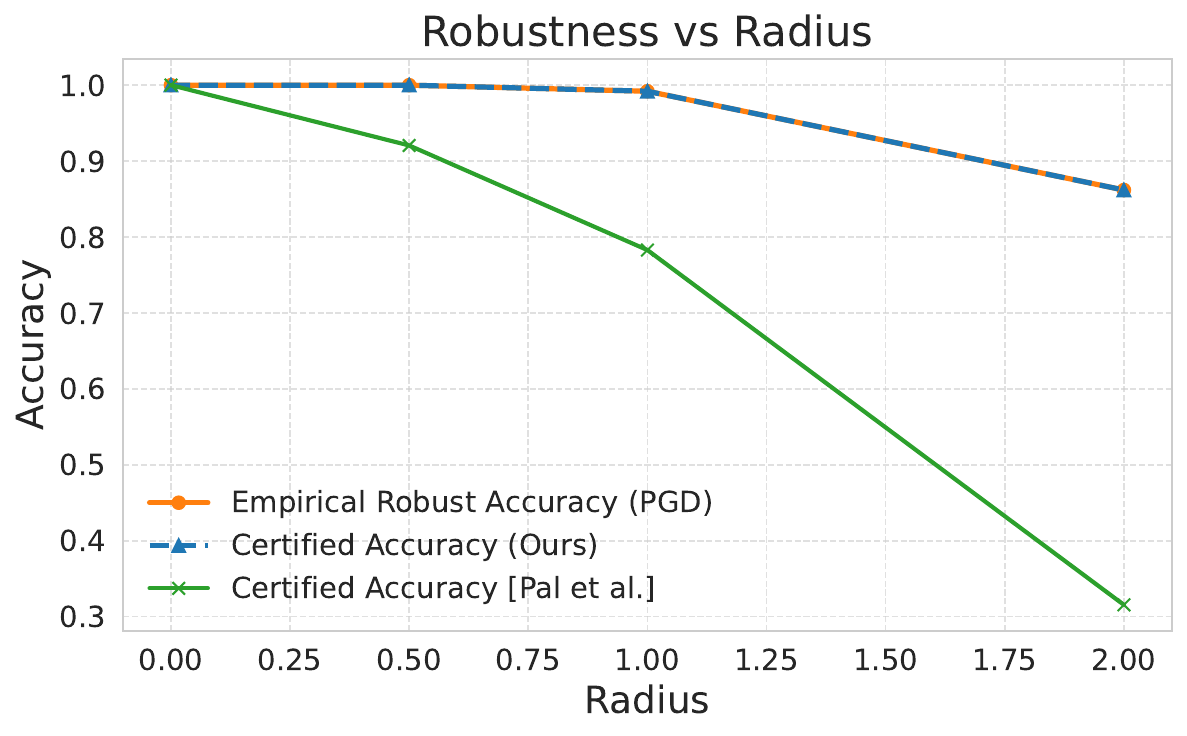}
    \includegraphics[width=0.3\linewidth]{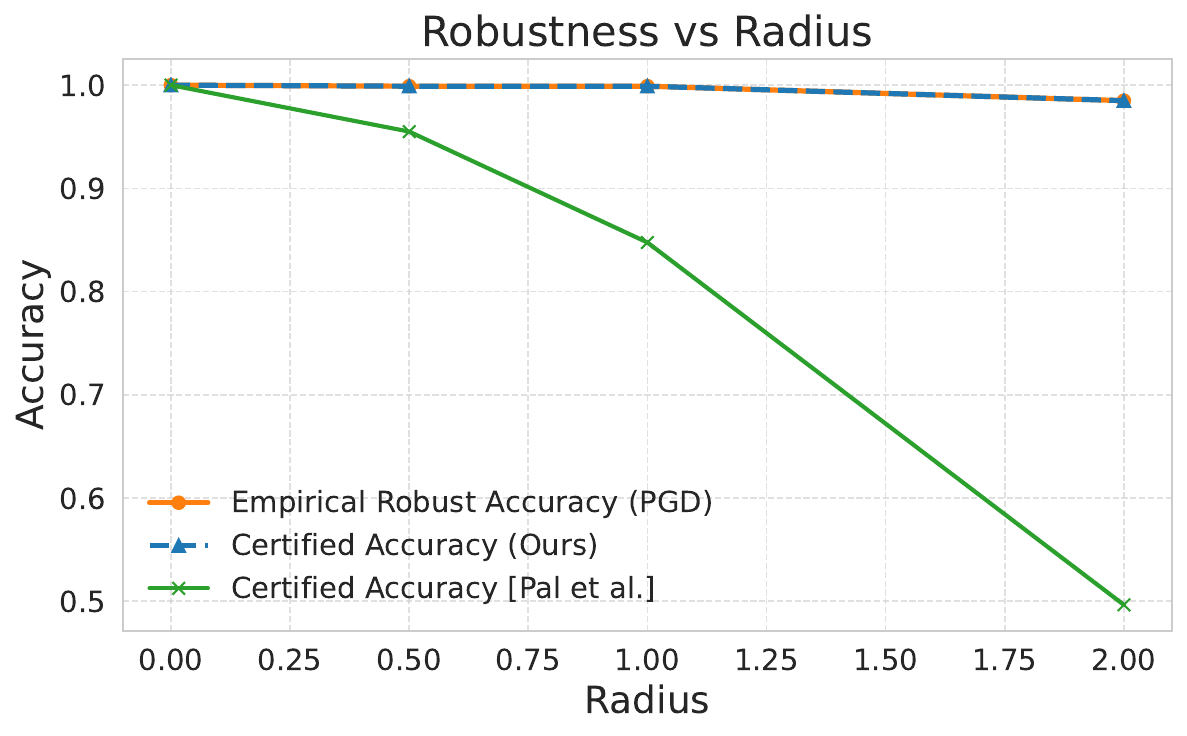}\\
    \includegraphics[width=0.3\linewidth]{Figs/synthetic_new/acc_K2_R2_isotropic_6.pdf}
    \includegraphics[width=0.3\linewidth]{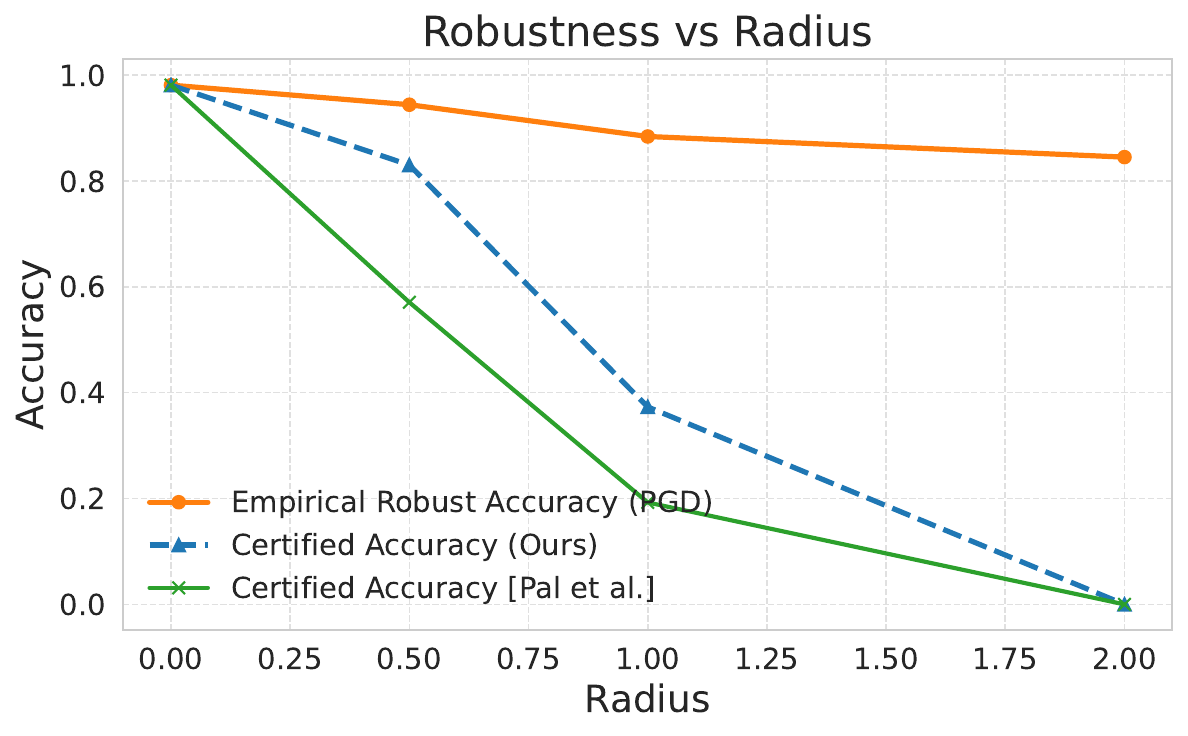}
    \includegraphics[width=0.3\linewidth]{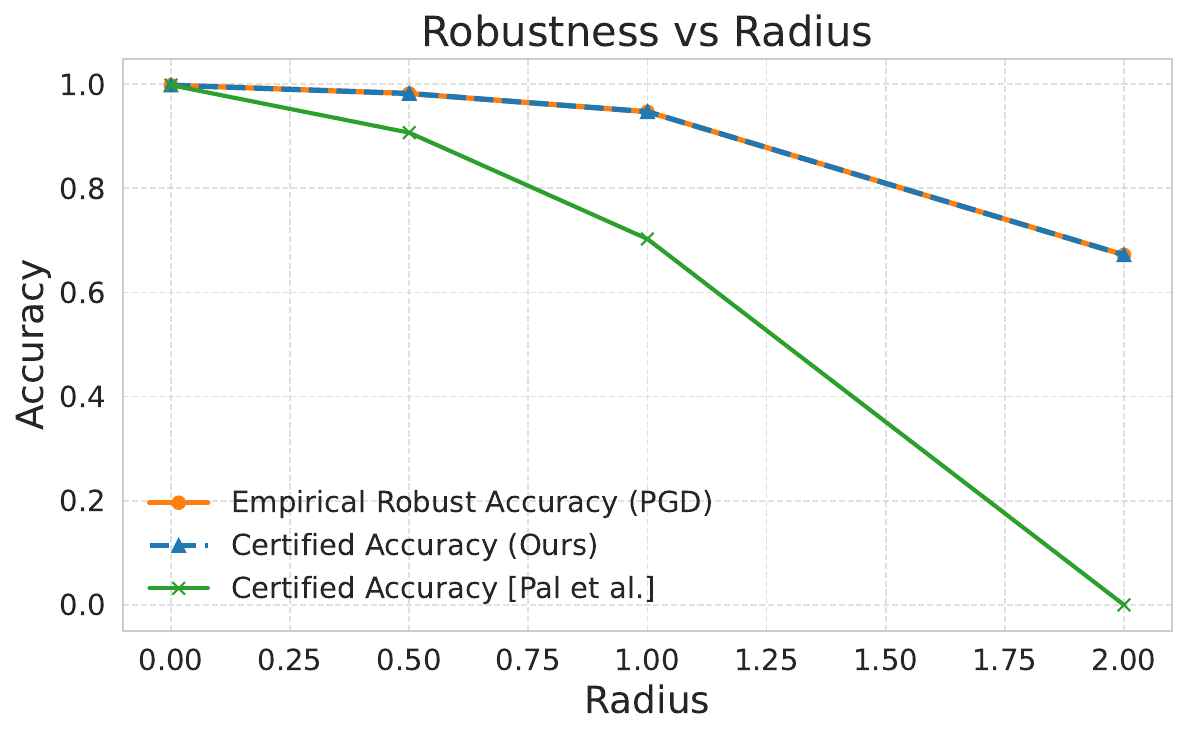}
    \caption{The proposed approach outperforms the method of \citet{paladversarial} in different Gaussian mixture settings. Each row corresponds to a GMM with isotropic covariances and different number of classes $K = \{2, 3, 5\}$, while each column to one with different separation distance $R = \{2, 4, 5\}$.}
    \label{fig: addit_synthetic}
\end{figure}

\begin{figure}[ht]
    \centering
    \includegraphics[width=0.3\linewidth]{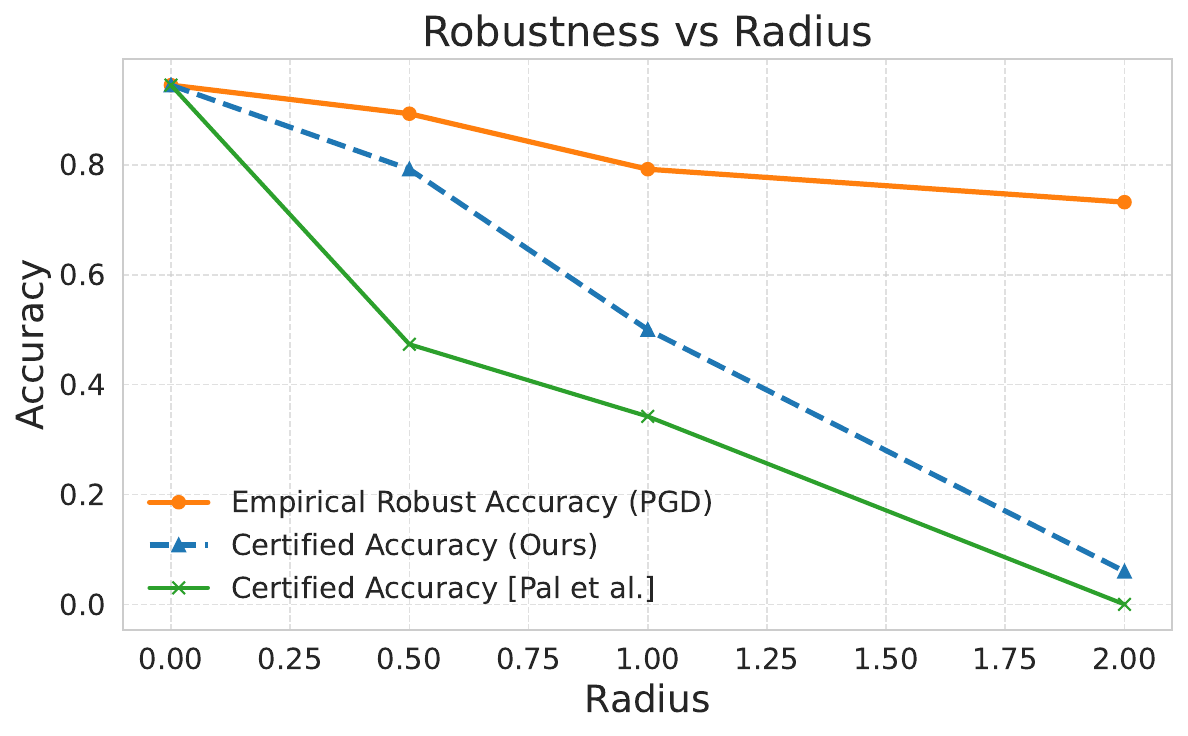}
    \includegraphics[width=0.3\linewidth]{Figs/synthetic_exper_last/acc_K2_R4_isotropic_3.pdf}
    \includegraphics[width=0.3\linewidth]{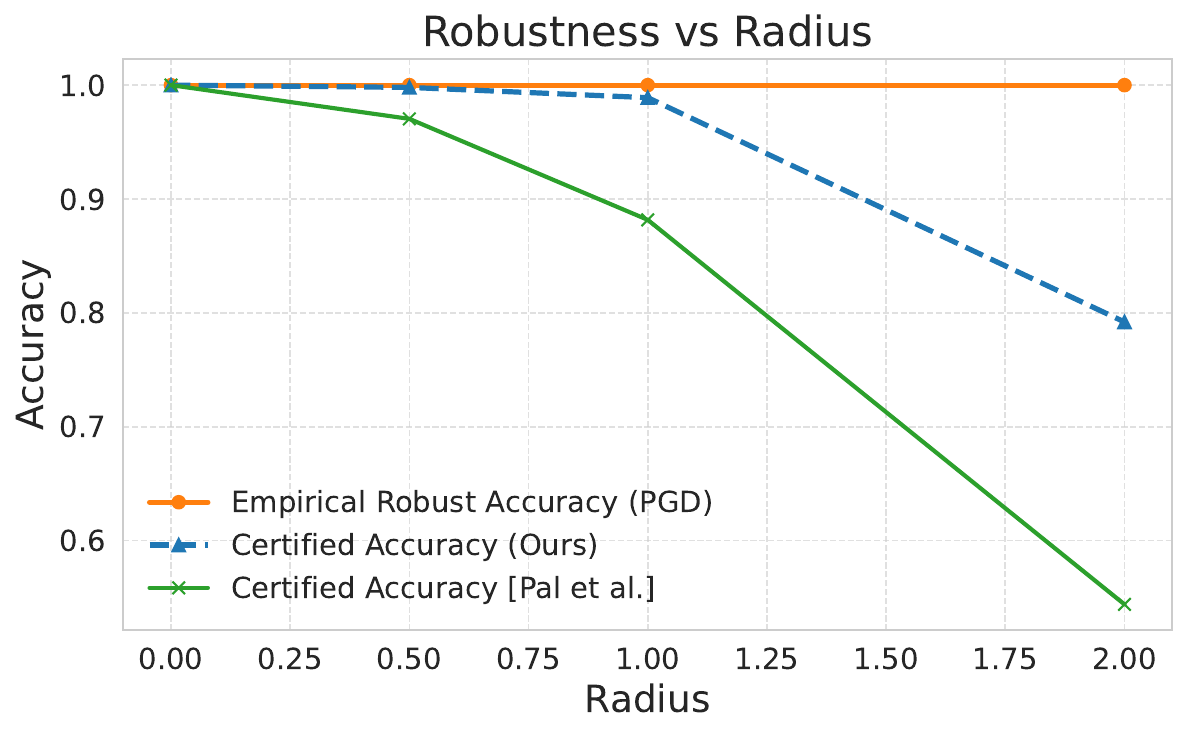}\\
    \includegraphics[width=0.3\linewidth]{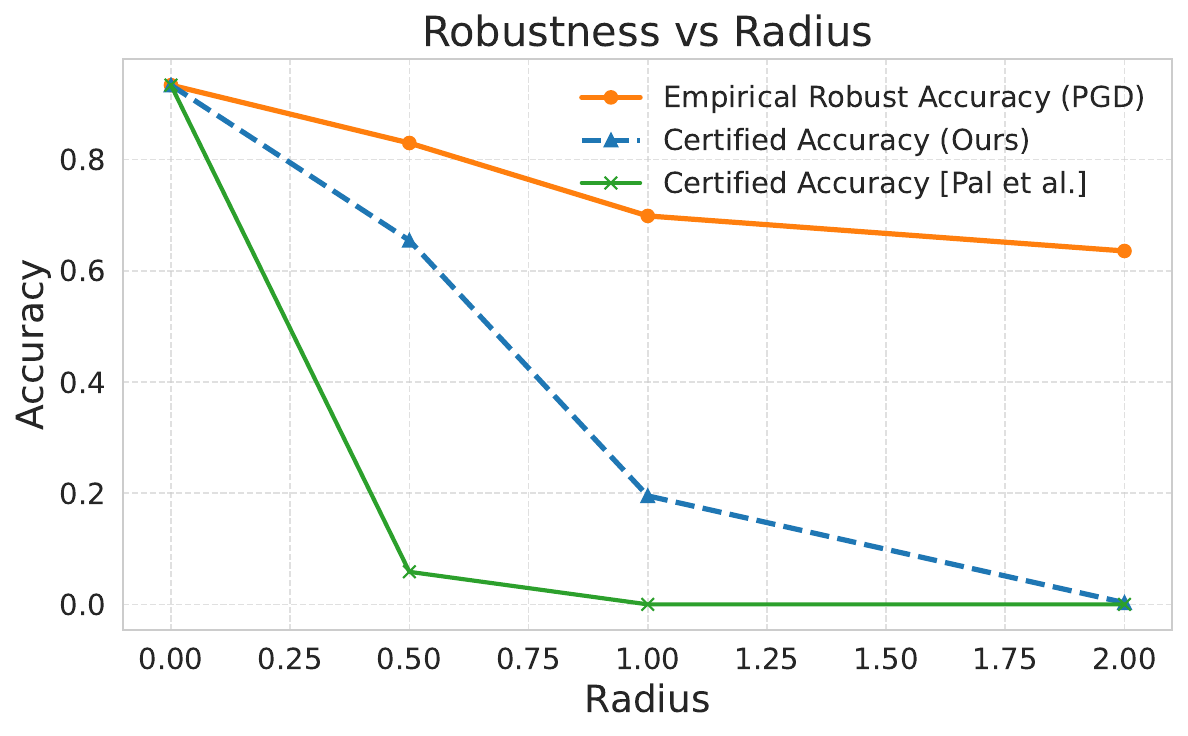}
    \includegraphics[width=0.3\linewidth]{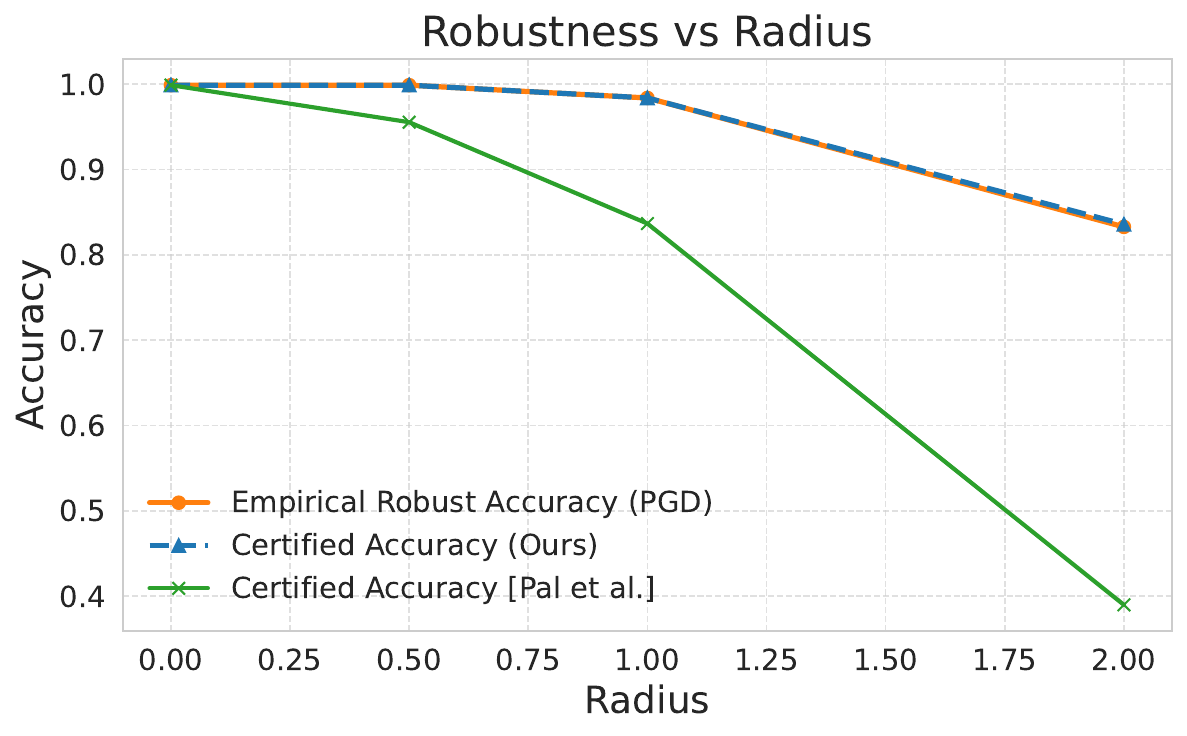}
    \includegraphics[width=0.3\linewidth]{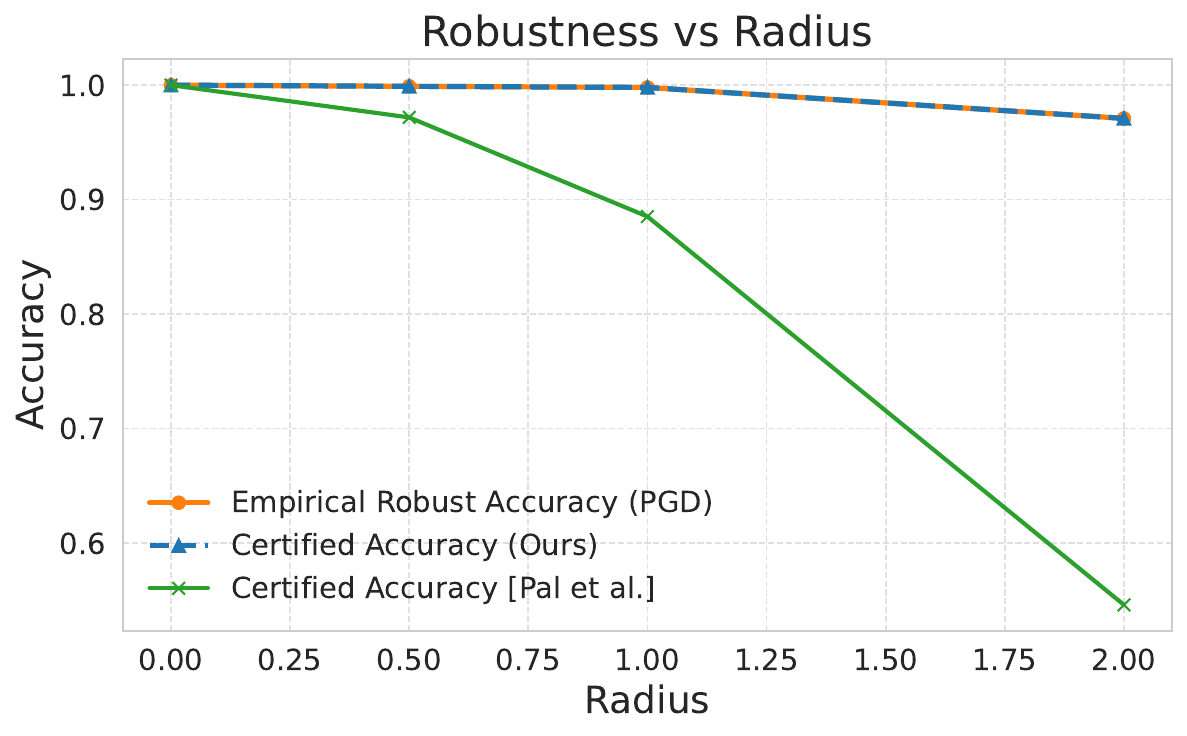}\\
    \includegraphics[width=0.3\linewidth]{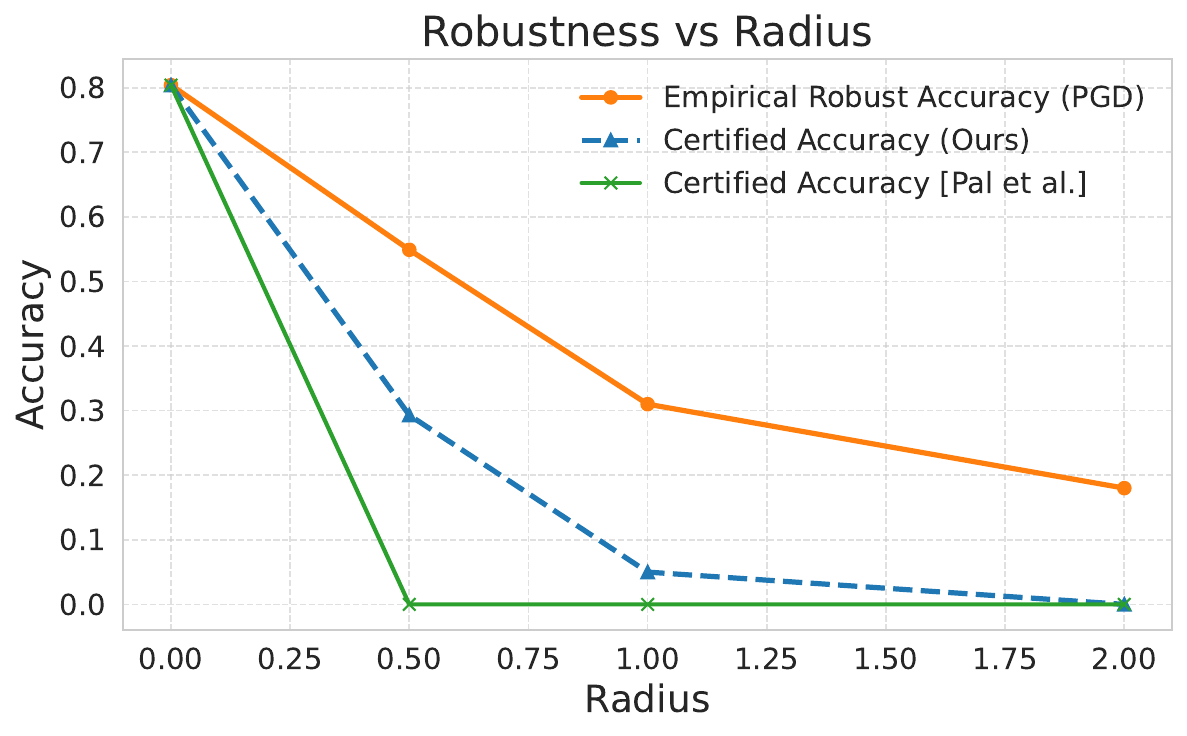}
    \includegraphics[width=0.3\linewidth]{Figs/synthetic_new/acc_K5_R4_anisotropic_1.pdf}
    \includegraphics[width=0.3\linewidth]{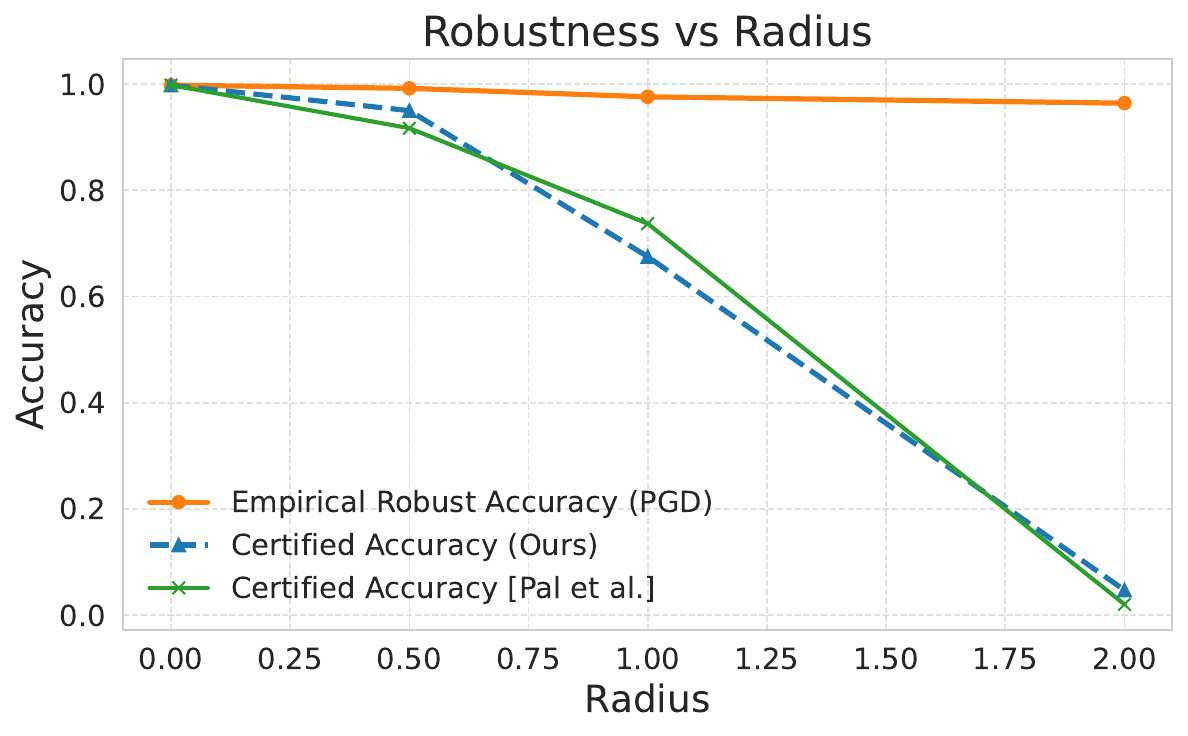}
    \caption{The proposed method outperforms the method of \citet{paladversarial} in different Gaussian mixtures with \textit{anisotropic} covariance matrices. Each row corresponds to a GMM with different number of classes $K = \{2, 3, 5\}$, while each column to one with different separation distance $R = \{2, 4, 5\}$.}
    \label{fig: addit_synthetic_anisotropic_covs}
\end{figure}
\newpage

\begin{figure}[ht!]
    \centering
    \includegraphics[width=0.3\linewidth]{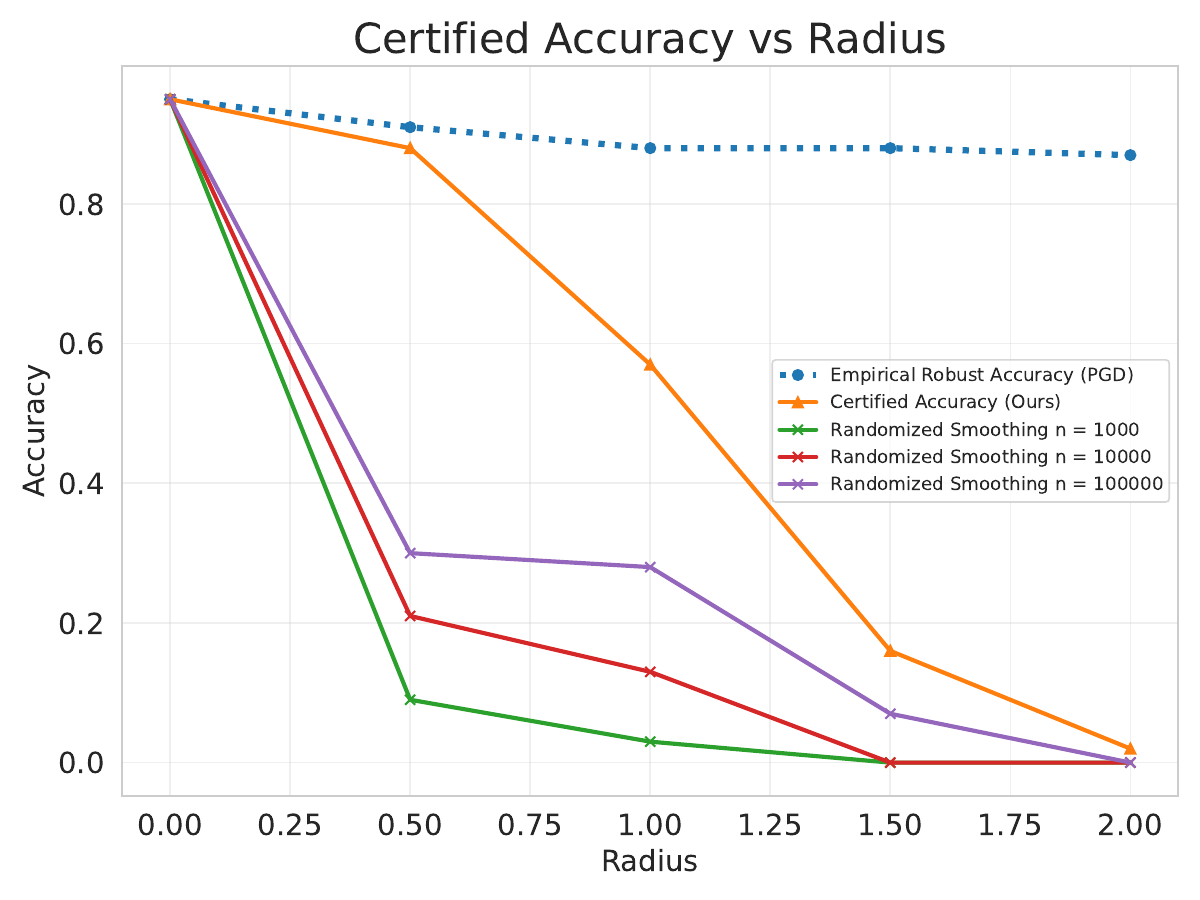}
    \includegraphics[width=0.3\linewidth]{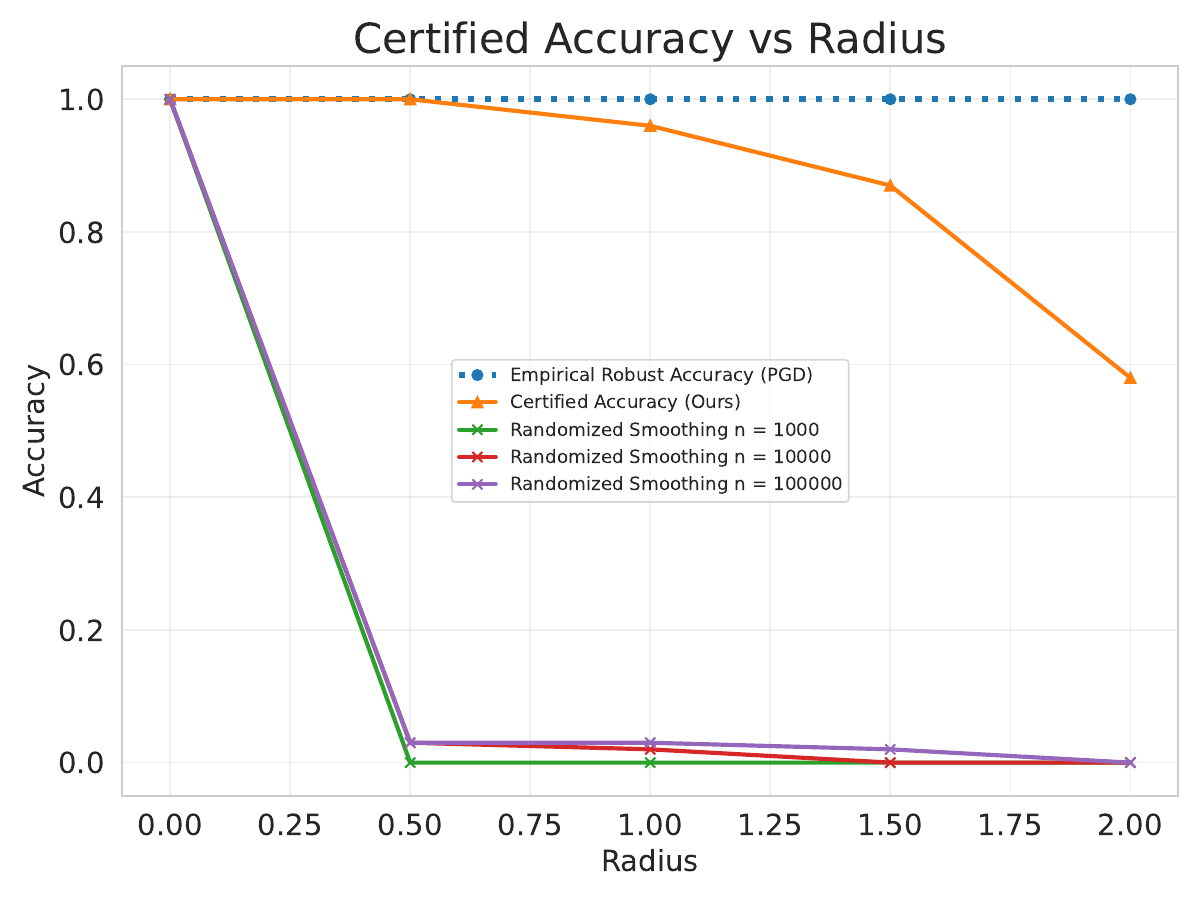}
    \includegraphics[width=0.3\linewidth]{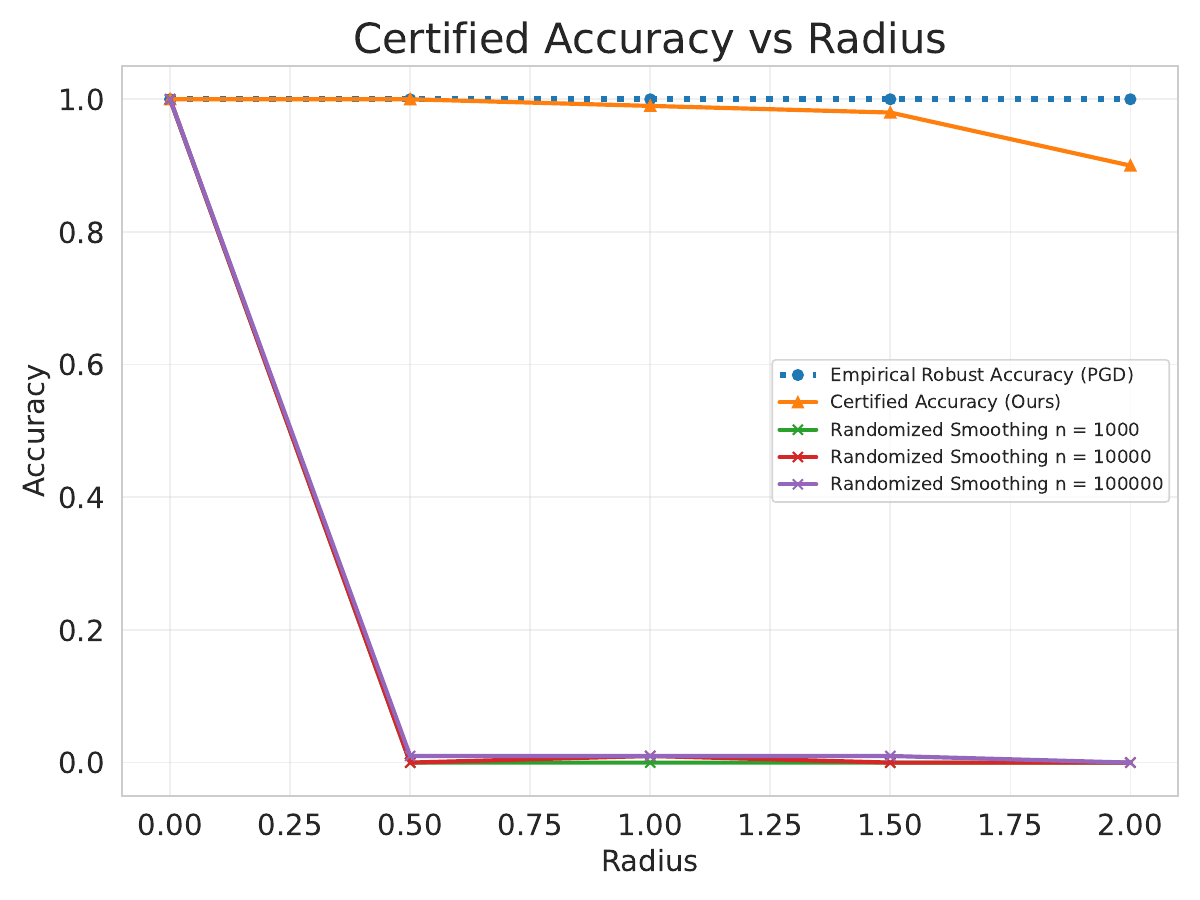}\\
    \includegraphics[width=0.3\linewidth]{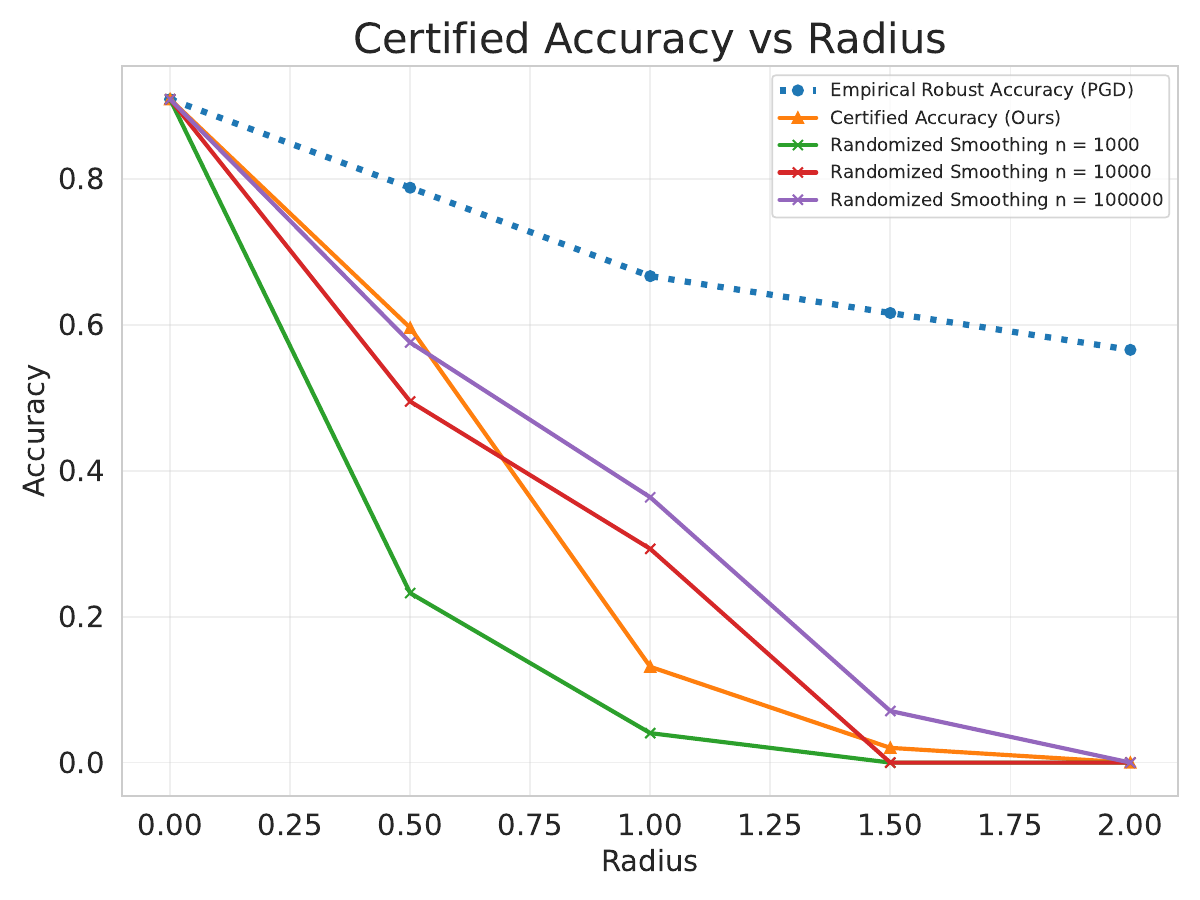}
    \includegraphics[width=0.3\linewidth]{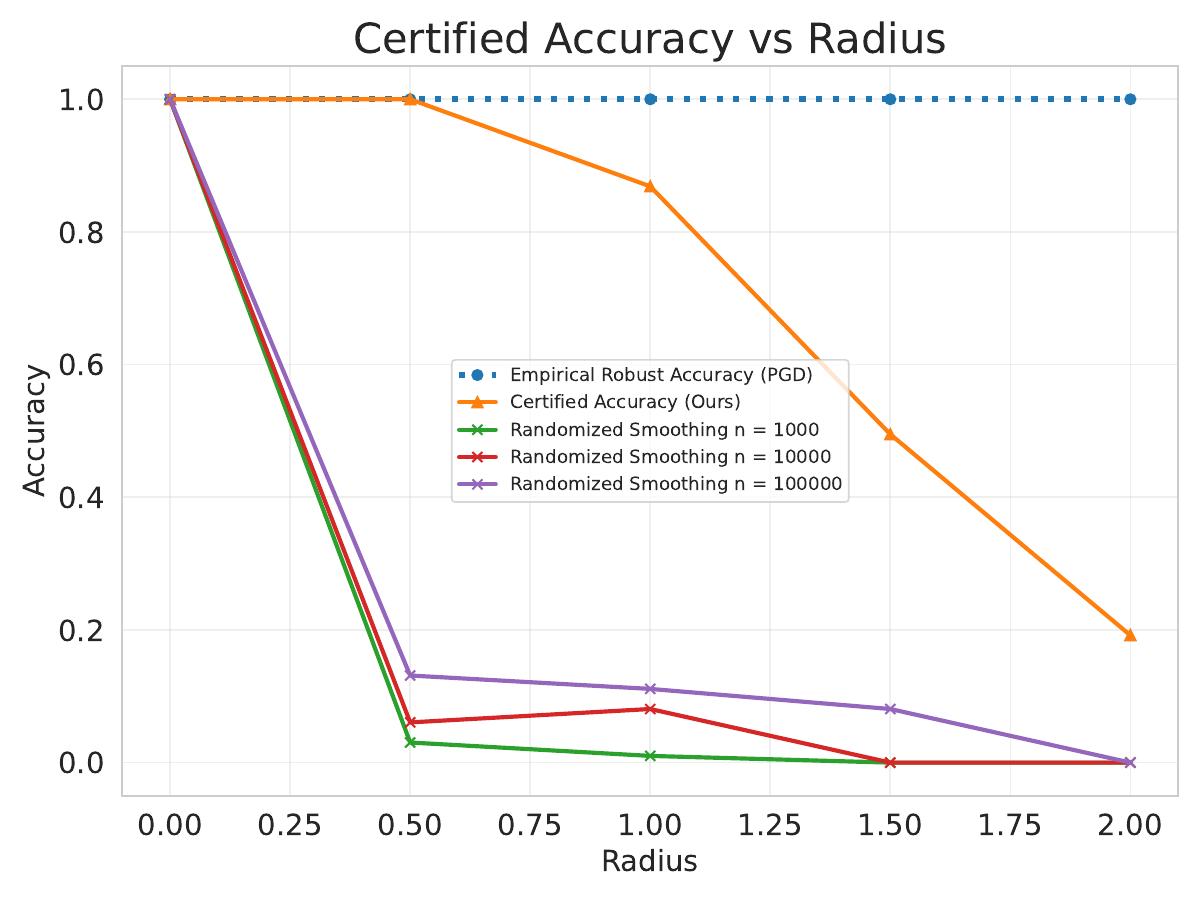}
    \includegraphics[width=0.3\linewidth]{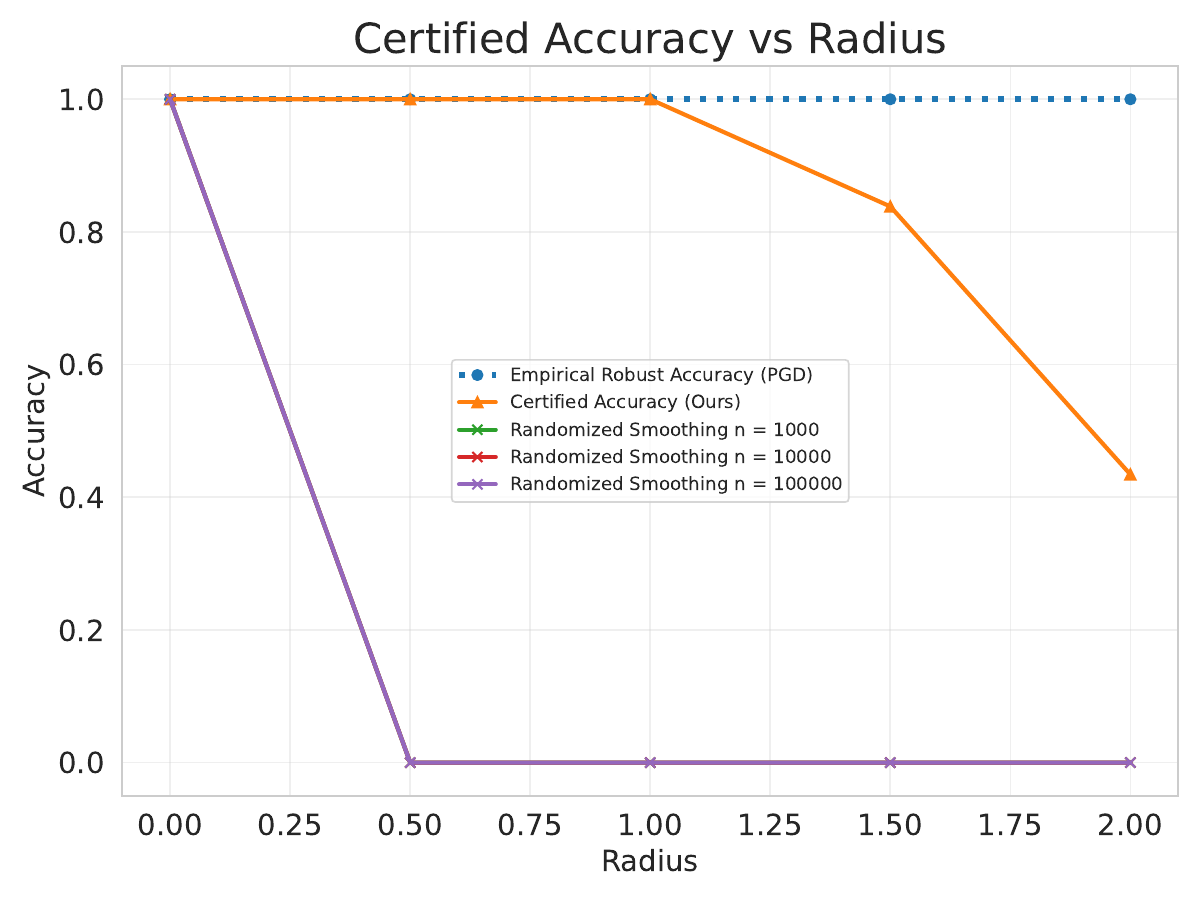}\\
    \includegraphics[width=0.3\linewidth]{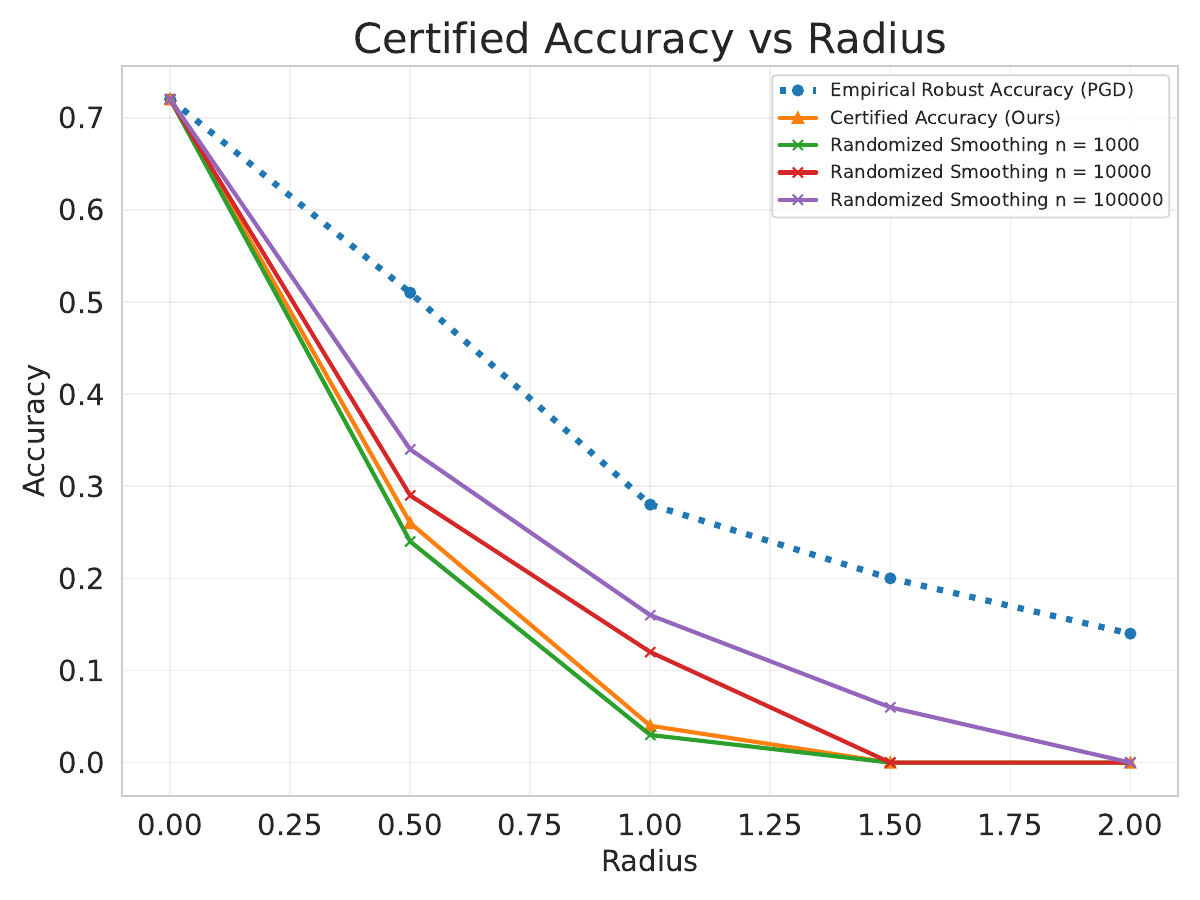}
    \includegraphics[width=0.3\linewidth]{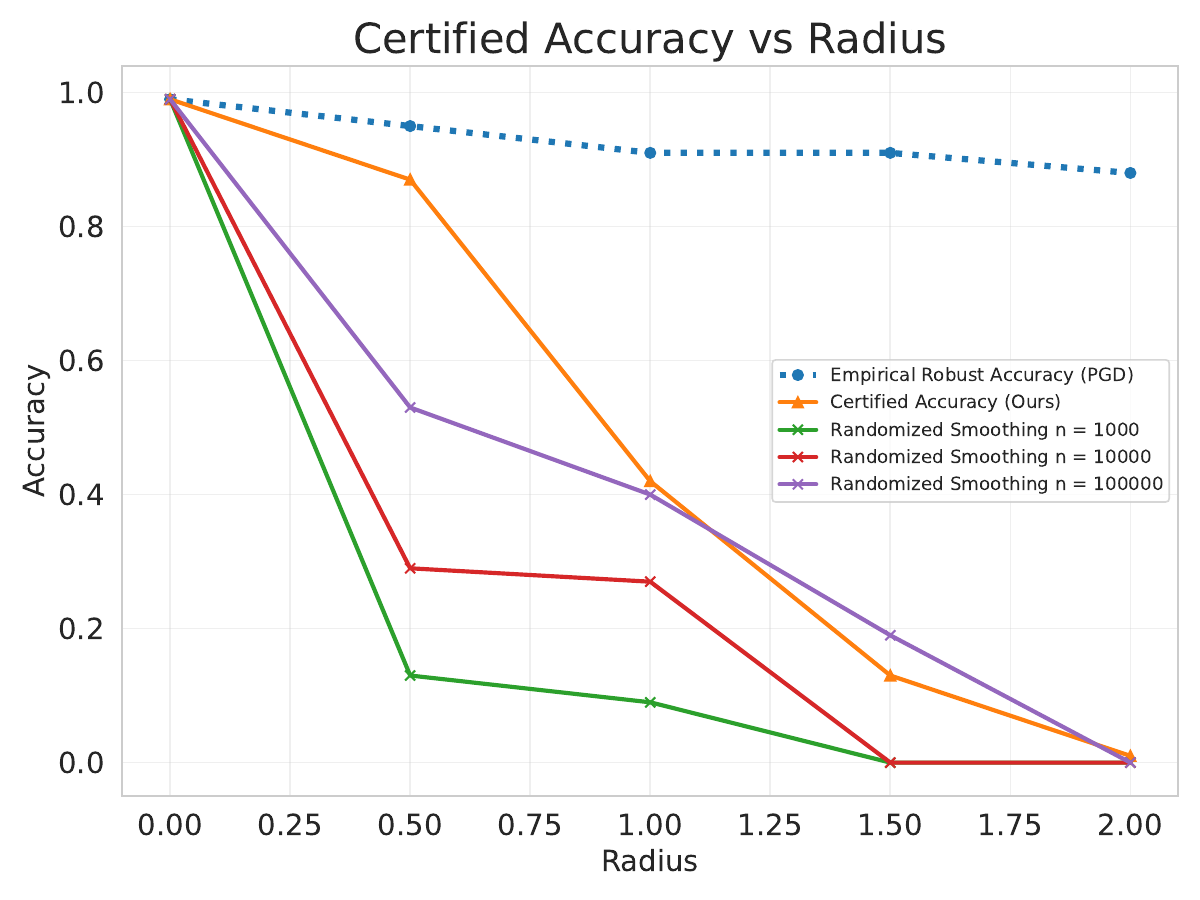}
    \includegraphics[width=0.3\linewidth]{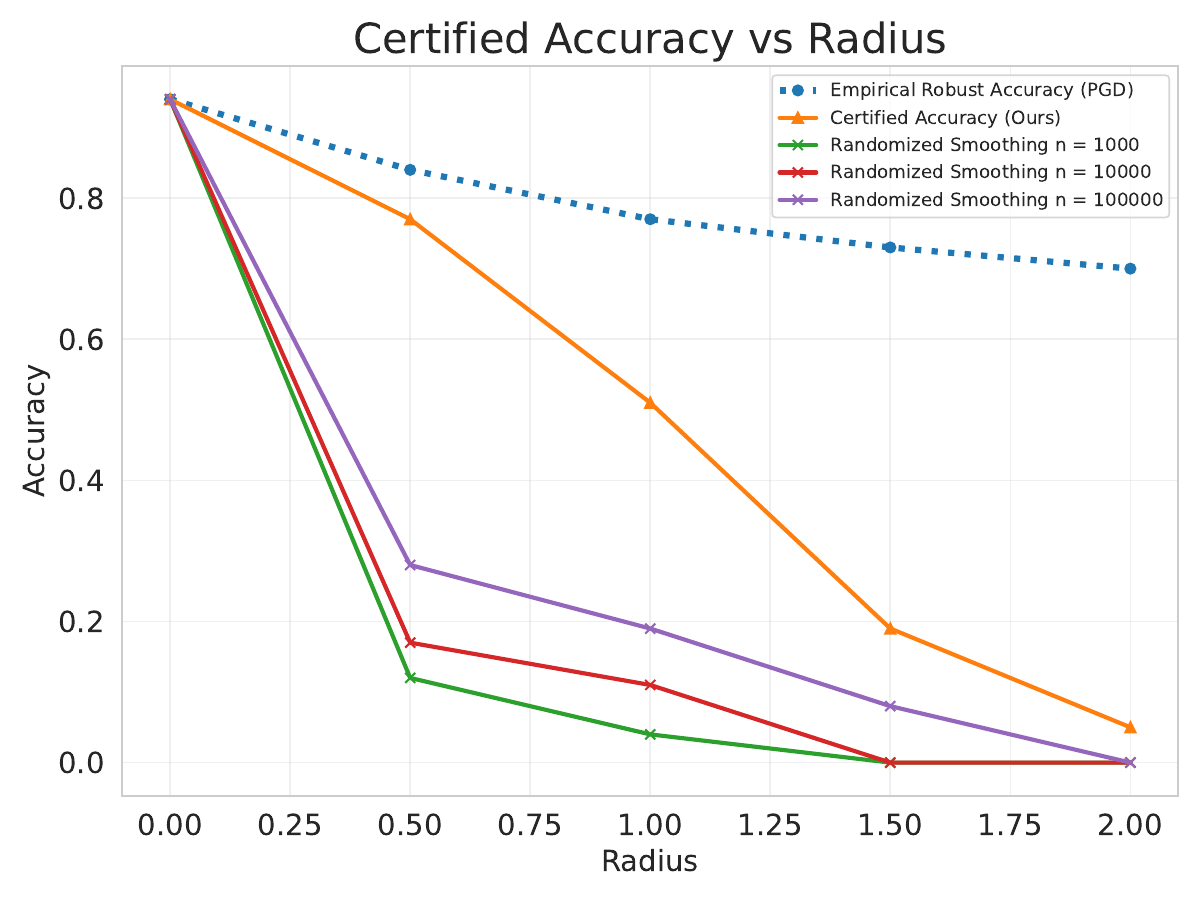}
    \caption{The proposed method achieves competitive robust accuracy in comparison to certified accuracy than randomized smoothing in different Gaussian mixture settings. Each row corresponds to a GMM with isotropic covariances and different number of classes $K = \{2, 3, 5\}$, while each column to one with different seperation distance $R = \{2, 4, 5\}$. }
    \label{fig: addit_rand_smoothing}
\end{figure}

\begin{figure}[ht]
    \centering
    \includegraphics[width=0.3\linewidth]{Figs/randomized_smoothing_new/acc_K2_R2_anisotropic_8.pdf}
    \includegraphics[width=0.3\linewidth]{Figs/randomized_smoothing_new/acc_K2_R4_anisotropic_3.pdf}
    \includegraphics[width=0.3\linewidth]{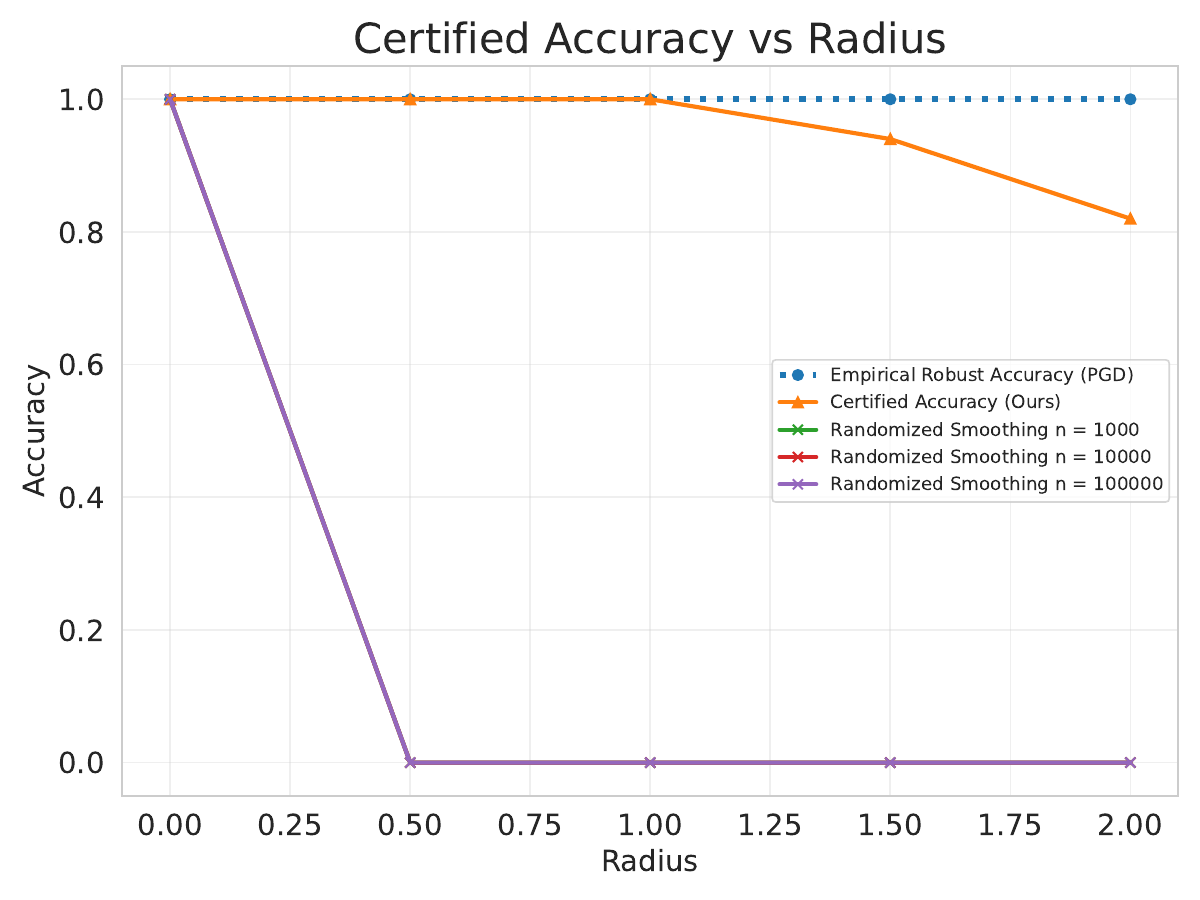}\\
    \includegraphics[width=0.3\linewidth]{Figs/randomized_smoothing_new/acc_K3_R2_anisotropic_8.pdf}
    \includegraphics[width=0.3\linewidth]{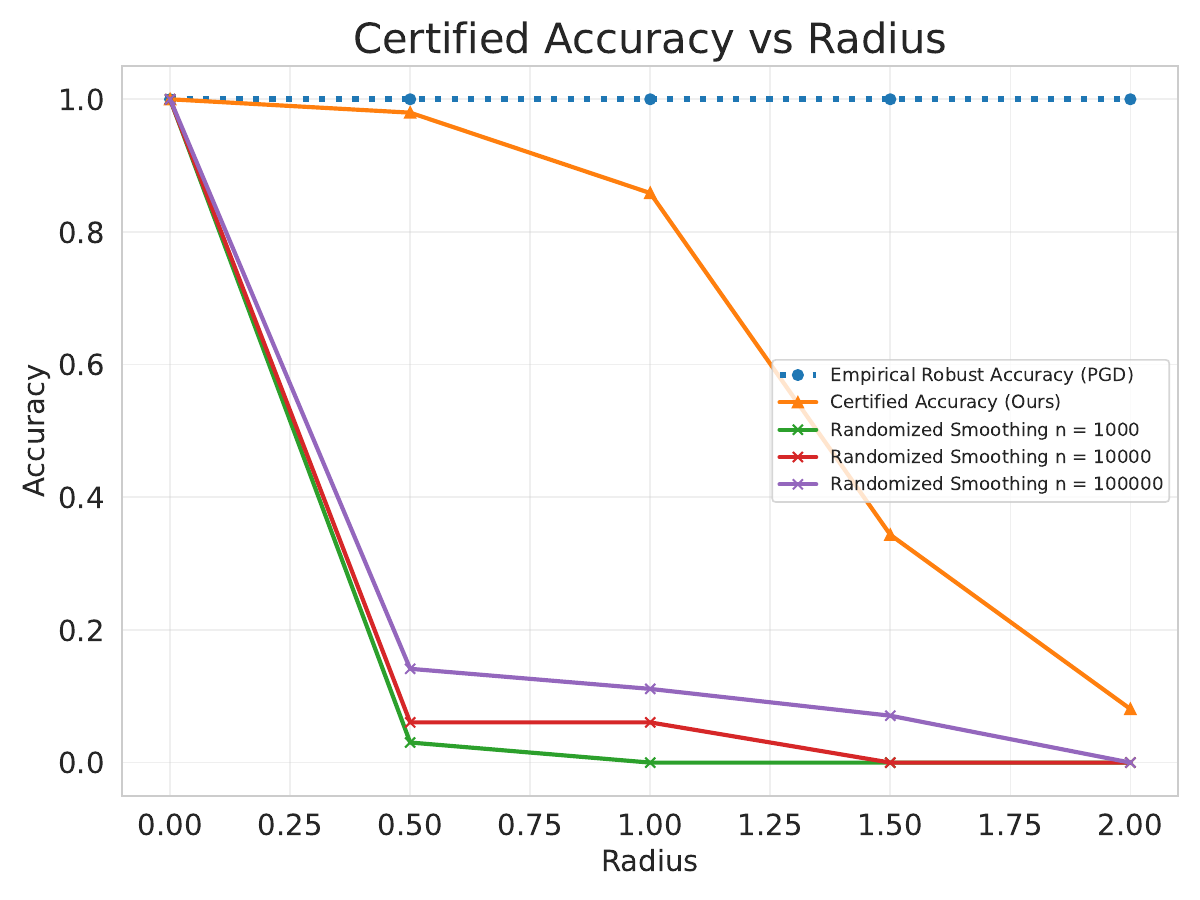}
    \includegraphics[width=0.3\linewidth]{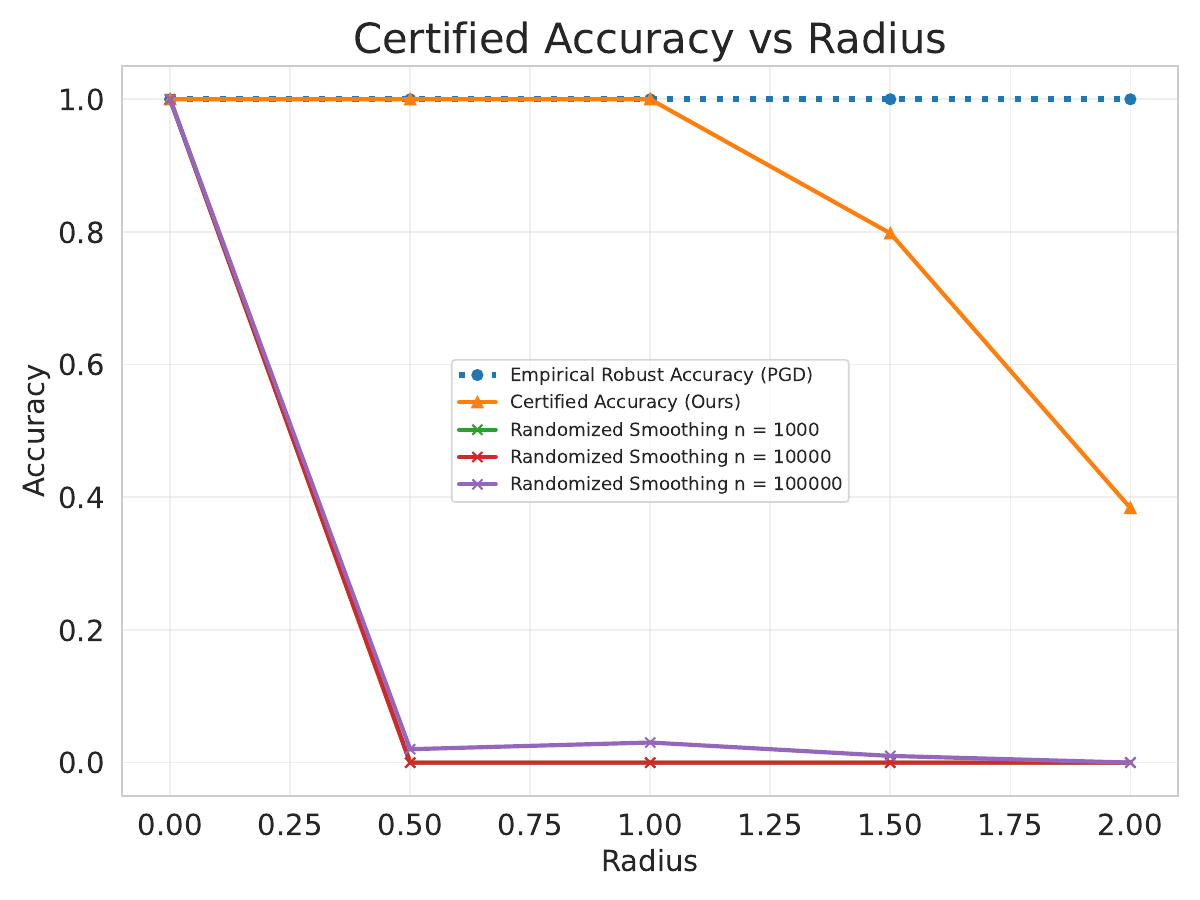}\\
    \includegraphics[width=0.3\linewidth]{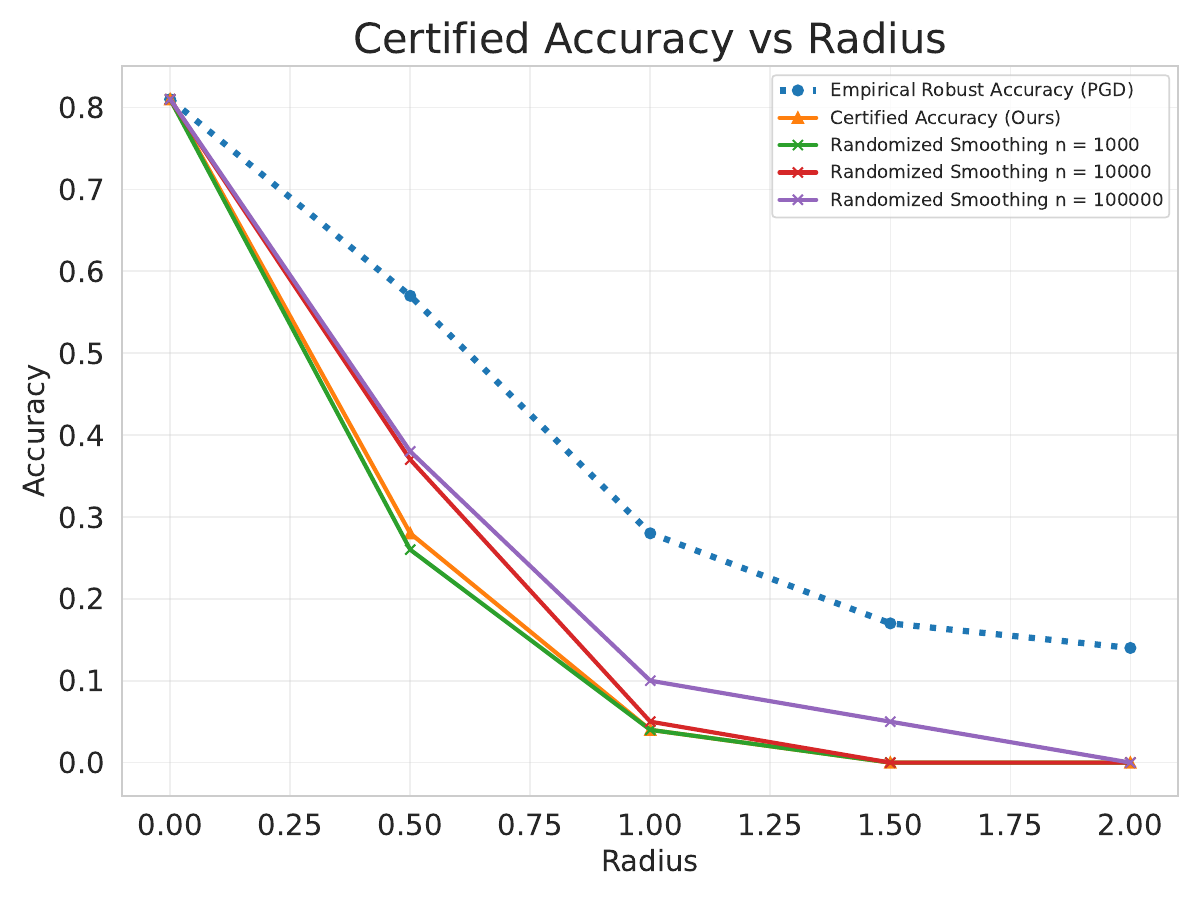}
    \includegraphics[width=0.3\linewidth]{Figs/randomized_smoothing_new/acc_K5_R4_anisotropic_2.pdf}
    \includegraphics[width=0.3\linewidth]{Figs/randomized_smoothing_new/acc_K2_R5_anisotropic_1.pdf}
    \caption{Comparison of our method with randomized smoothing for different mixture of Gaussians with \textit{anisotropic} covariances. The proposed method performs competitively against randomized smoothing even when less number of Monte Carlo samples are used.}
    \label{fig: addit_rand_smoothing_anisotropic}
\end{figure}

\end{document}